\documentclass[journal]{IEEEtran}

\usepackage[utf8]{inputenc}
\usepackage[T1]{fontenc}
\usepackage{microtype}
\usepackage{booktabs}
\usepackage{array}
\usepackage{xcolor}
\usepackage{amsmath}
\usepackage{amssymb}

\usepackage{amsthm}
\usepackage{graphicx}
\usepackage{tabularx}
\usepackage{float}
\usepackage{algorithm}
\usepackage{algpseudocode}
\usepackage[hidelinks]{hyperref}
\usepackage{xurl}
\usepackage{needspace}

\newtheorem{proposition}{Proposition}
\newtheorem{remark}{Remark}[section]

\newcommand{\truth}{Y}
\newcommand{\signal}{Z}
\newcommand{\Description}[1]{}
\newcommand{\shabind}[2]{\begingroup\footnotesize\texttt{#1}\newline\texttt{#2}\endgroup}

\title{Resample or Reroute? Recoverable Stopping Debt\\Without Identified Action Selection}

\author{Teng-Ruei~Chen%
\thanks{T.-R. Chen is with the Institute of Bioinformatics and Systems Biology, National Yang Ming Chiao Tung University, Hsinchu, Taiwan, and also with Krixvon, Taipei, Taiwan (e-mail: ymchen.bi04g@g2.nctu.edu.tw; ORCID: 0000-0001-8995-334X).}%
\thanks{Preprint, September 2026. Version 3 of arXiv:2607.08665 is a substantive methodological reconstruction of versions 1--2 of that record. It replaces the earlier budget-allocation, learned-router, retrospective cost-oracle, and broad Pareto interpretations with the three-gate identification framework and the narrower evidence-licensed conclusions reported here; versions 1--2 remain available in the arXiv version history. Every reported number is reproduced from frozen, SHA-256-bound result artifacts listed in the Supplementary Material.}}

\begin{document}
\markboth{Preprint, September 2026 --- arXiv:2607.08665v3}{Preprint, September 2026 --- arXiv:2607.08665v3}
\maketitle

\begin{abstract}
After a weak verifier accepts a large-language-model response, a second call
may resample or reroute.  Because correctness is hidden, action selection is
an identification problem.  We order three gates: recoverable stopping debt,
two-sided FIT action support, and held-out value from an outcome-blind
selector.  In a pinned 152-query MBPP+ experiment, a Qwen2.5-14B Base-only
false-positive stop leaves $+2.592$ percentage points of Qwen2.5-7B recovery
(query-cluster 95\% interval $[+1.618,+3.664]$).  Separately, after 7B
Base-test rejection, fixed escalation to 14B exceeds leave-one-out 7B
resampling by $+2.882$ points $[+0.931,+5.201]$; this is fixed-action ranking,
not conditional selection.  An all-episode audit produces a $+2.697$-point
realized-maximum gap, but for two actions this statistic equals
$\tfrac12\mathbb E|\Delta|-\tfrac12|\mathbb E\Delta|$ and contains no
observable-history term.  It lies inside an exact-fold exchangeable reference
(mean $+3.158$; 95\% interval $[+2.434,+3.947]$).  The audit unconditionally
acts on 1,520 episodes: 1,240 observable stops and 280 verifier rejections; 198
stops are evaluator-only false positives.  Neither tested outcome-blind
controller improves on fixed rerouting.  A separate LiveCodeBench ladder has
all-zero FIT action advantages despite exclusive TEST rescues.  A
preregistered BigCodeBench support gate then finds only 23/19 and 22/19
signed episodes/queries against minima of 25/20, so L1--L4, DEV, and TEST stay
unopened.  Stopping debt exists, but current evidence does not identify when
to resample rather than reroute.
\end{abstract}

\begin{IEEEkeywords}
Test-time compute, dynamic model routing, verifier-gated recovery,
outcome-blind evaluation, exchangeable-action null, reproducible evaluation.
\end{IEEEkeywords}

\section{Introduction}\label{sec:intro}

Routing and test-time sampling expose two controls over one serving budget.  A
router changes \emph{which model} answers; best-of-$N$ and self-consistency
change \emph{how many responses} are drawn~\cite{routellm,selfconsistency,
scalingtesttime}.  Recent cascades and multi-model methods combine these
controls before or after observing a response~\cite{frugalgpt,automix2024,
cascaderouting2025,bestroute2025,modelswitch2026}.  Our question begins later:
after a fallible verifier has already accepted a candidate, when is one more
call scientifically justified, and when can data identify whether that call
should resample or reroute?

The hidden-truth boundary makes three statements easy to confuse.  First, a
verifier false positive may create \emph{recoverable verifier-induced stopping
debt}: the system stops incorrectly although a declared second action can
recover.  Second, even when both actions sometimes help, the fitting bank may
contain too few histories on which their utilities differ.  Third, even with
such support, legal stopped-history features may not identify the better
action out of sample.  A pointwise maximum of realized action outcomes
addresses none of these distinctions: it can be positive when the actions are
exchangeable and have identical conditional means.

We therefore replace the broad question ``does dynamic routing work?'' with
three ordered gates:
\begin{enumerate}
  \item \textbf{Gate 1---Recovery:} does the observable stop leave hidden
        correctness that at least one declared action can recover?
  \item \textbf{Gate 2---Identification support:} conditional on stopping,
        does FIT contain query-distributed evidence of both
        reroute-over-resample and resample-over-reroute utility?
  \item \textbf{Gate 3---Observable policy value:} only after Gates 1 and 2,
        can an outcome-blind selector beat the frozen best fixed action on
        held-out quality or efficiency?
\end{enumerate}
Failure at a gate is a terminal scientific branch.  Gate 1 failure licenses no
second-call claim; Gate 2 failure licenses no feature or generalization claim;
Gate 3 failure licenses the fixed action rather than a dynamic controller.
The gates form one inferential chain only when they use the same declared
stopping event, action set, utility, and target population.  Evidence from
different populations cannot be composed merely by assigning it a gate number.

\smallskip\noindent\textbf{Scientific revision path.}
The final ordering was learned rather than assumed.  The fresh MBPP+
experiment first established the narrow Gate-1 mechanism.  An all-episode
matched-information audit then appeared to show a realized-maximum advantage without
controller value.  Two later corrections changed that reading.  First, the
realized-maximum statistic was recognized as a signal-free function of the two
realized action outcomes.  Its exact-fold exchangeable distribution is retained
as a calibration of marginal action-label symmetry, not as a test of
conditional heterogeneity.  Second, the audit acts on every episode rather than
on the observable Gate-1 stopped population, so it is not a Gate-3 test for the
same chain.  The separated LiveCodeBench ladder supplied a third
correction: all FIT action-advantage labels were zero, so its null controller
could not diagnose feature insufficiency.  The preregistered BigCodeBench
replication converted that lesson into a prospective Gate-2 opening guard and
stopped before L1--L4, DEV, or TEST.  The chronology explains why the paper
contains several experiments; the final logic retracts conclusions that the
available evidence cannot identify.

The headline evidence is therefore assigned rather than pooled.  MBPP+ tests
recovery.  LiveCodeBench diagnoses support nontransport.  BigCodeBench tests
the prospective support gate.  The MBPP+ matched-information analysis is an
all-episode descriptive stress test on an opened, query-cross-fitted bank, not
a stopped-history Gate-3 test or independent deployment evaluation.
Post-outcome Llama and Nemotron arms test
only Gate-1 model-family robustness.  Historical GSM8K, MATH-500,
GPQA-Diamond, and HumanEval+ replay~\cite{gsm8k,mathdataset,lightman2024verify,
gpqa,humaneval,evalplus}
is retained as audit/provenance context,
not as headline replication.  Version 2 of this arXiv record
(arXiv:2607.08665v2), the earlier replay-based RoR preprint, is likewise
disclosed as provenance rather than silently imported
evidence~\cite{rorpreprint2026}.  This version is a substantive methodological
rebuild of it: former broad policy and Pareto claims are not carried into the
three-gate estimands.

This paper makes four contributions:
\begin{enumerate}
  \item \textbf{A gate-conditioned identification framework.}  We formalize
        the information boundary, recoverable stopping-debt estimand,
        two-sided FIT support condition, and held-out policy gate, including
        the requirement that a composed chain share one stopping event and
        target population.
  \item \textbf{A sequence of bounded empirical findings.}  Fresh MBPP+
        evidence passes Gate 1; LiveCodeBench reveals why a zero-target FIT
        bank makes feature conclusions unidentified; and preregistered
        BigCodeBench candidate-011 fails Gate 2 before any held-out opening.
  \item \textbf{A calibrated negative selector result.}  We show generally
        that a two-action realized-maximum gap equals
        $\tfrac12\mathbb E|\Delta|-\tfrac12|\mathbb E\Delta|$ and therefore
        contains no observable-history signal.  In the opened MBPP+ audit,
        fixed rerouting is marginally better than fixed resampling, but neither
        tested outcome-blind controller improves on the cross-fitted fixed
        action.  The experiment identifies no successful conditional selector;
        it establishes neither action homogeneity nor an observability
        bottleneck.
  \item \textbf{A fail-closed design rule.}  We show that action abundance is
        not action-ranking support and that a realized maximum is not
        observable policy value.  The resulting procedure defaults to a
        fixed action unless every upstream gate and a held-out quality or
        efficiency gate pass.
\end{enumerate}

All claims remain finite-bank and protocol conditional.  The manuscript first
positions the problem, then defines the three gates and their mathematics,
then reports evidence in gate order, and finally states only the policy class
licensed by the terminal branch.  Exact execution contracts, artifact rosters,
historical replay grids, and extended derivations are delegated to the
Supplementary Material.

\section{Related Work}\label{sec:related}

We classify prior work by \emph{when} the allocation decision is made and
\emph{what information} is legal at that decision.  This taxonomy separates
pre-response routing, response-conditioned cascades, adaptive sampling, and
verifier-based code selection.  Table~\ref{tab:related-taxonomy} summarizes
the resulting gap: prior systems can combine model choice and repeated
sampling, but, to our knowledge within the reviewed literature, they do not
jointly diagnose recovery opportunity, finite-sample action support, and
stopped-history observability under an explicit hidden-truth boundary.

\textbf{Pre-response and response-conditioned routing.}
RouteLLM learns a pre-response model choice from preference data, and routing
benchmarks expose per-query model outcomes~\cite{routellm,routerbench,
routereval,llmrouterbench}.  Other systems already condition later actions on
realized responses.  FrugalGPT scores the query--answer pair before accepting
or escalating; AutoMix treats noisy self-verification and escalation as a
partially observed decision problem; cascade routing selects between stopping
and invoking another unused model after each response; and Router-R1 learns
multi-round invocation and aggregation~\cite{frugalgpt,automix2024,
cascaderouting2025,routerr1}.  Thus response-conditioned routing itself is not
our novelty.  A contemporaneous unreviewed preprint also optimizes stop/model
actions directly as an MDP~\cite{rlcascaderouter2026}.

Recent work broadens what a legal routing signal can contain.  Confidence
tokens learn an explicit self-assessment for routing or rejection;
Lookahead predicts latent representations of potential responses; and
pre-generation activation probes can encode policy-specific model success
~\cite{confidencetokens2025,lookaheadrouting2025,activationfailures2026}.
These results do not conflict with our negative controllers: they show that
other representations may be informative, while our claim is limited to the
frozen structural, token-probability, verifier-trace, agreement, and signed-
hash regimes.  They also motivate separating a failed implemented learner
from the Bayes envelope of a richer signal.

\textbf{Adaptive sampling and multi-model test-time compute.}
Self-consistency and test-time scaling draw several candidates and require a
selection rule~\cite{selfconsistency,scalingtesttime}.  Adaptive-Consistency
uses agreement to vary the sample count by query; BEST-Route chooses a model
and a sample count before generation; and ModelSwitch performs multi-model
repeated sampling with consistency-triggered switching~\cite{
adaptiveconsistency2023,bestroute2025,modelswitch2026}.  Agreement-based and
semantic-agreement cascades likewise use ensemble consensus for deferral
~\cite{agreementcascade2025,semanticcascade2025}.  Distributional
self-certainty supplies a reward-free best-of-$N$ selector, while strategic
bandit allocation learns which queries merit additional compute
~\cite{selfcertainty2025,strategictesttime2026}.  These studies rule out a
broad claim that combining resampling and rerouting is new.  The present work
instead asks what remains recoverable under a specific weak-verifier failure
event.

\textbf{Code verification and false-positive limits.}
CodeT generates tests and selects candidates by execution agreement, while
LEVER learns a verifier over programs and execution results~\cite{codet2023,
lever2023}.  Model Cascading for Code combines multiple candidate programs,
self-generated tests, and thresholded escalation across models
~\cite{codecascade2024}.  EvalPlus demonstrates why ordinary benchmark tests
are an incomplete correctness signal~\cite{evalplus}.  Agreement itself can
also miss repeated self-consistent errors, including errors that persist as
model scale grows~\cite{selfconsistenterrors2025}.  Most directly, Stroebl
et al. show that verifier false positives impose an accuracy ceiling that more
single-model resampling cannot remove~\cite{stroebl2026limits}.  We take that
limit as prior work.  Our narrower contribution is to separate visible Base
tests from hidden Plus correctness and estimate cross-model recovery mass
conditional on an irreversible Base-test false-positive stop.  This is an
evaluator-only mechanism estimand, not a claim that the hidden event is
available to a deployed router.

\textbf{Measurement companion.}
The companion routing-oracle study distinguishes a pre-response single-commit
ceiling from post-response union events and shows that a union inferred from
marginals depends on a declared coupling~\cite{routinggapreal}.  Those
measurement results motivate caution here but are not re-proved or counted as
RoR contributions.  In particular, this paper does not interpret an oracle
gap as recoverable specialist advantage and does not use a product union as a
deployment ceiling.

\textbf{Earlier RoR preprint.}
Version 2 of this arXiv record (arXiv:2607.08665v2, revised 10 July 2026),
titled ``Resample or Reroute? Budget-Aware Test-Time Model Selection for Large
Language Models,'' presented the historical eleven-model replay as a policy
comparison~\cite{rorpreprint2026}.  The present version does not treat
publication of that replay as validation.  It reclassifies the
archive as an audit/development record, removes the unsupported learned-router,
retrospective cost-oracle, and universal Pareto interpretations, and changes
the target from a broad budget allocator to verifier-gated recovery with an
explicit evaluator/controller information boundary.  The fresh MBPP+ and
LiveCodeBench experiments, support-aware audit, and theoretical decomposition
are new to this rebuild.

\textbf{Budgeted allocation.}
The action-selection rule is related to marginal-value heuristics and
best-arm allocation~\cite{audibert2010,kaufmann2016}.  Unlike pure best-arm
identification, the objective is to return one answer before a heterogeneous
cost cap.  The greedy rule is a transparent heuristic, not a proof of
horizon-optimal control.

\textbf{Opportunity versus learnability.}
Routing papers commonly report either an oracle ceiling or a learned policy,
but the two answer different questions.  A realized pointwise maximum can be
positive even under exchangeable actions, so it does not by itself establish
heterogeneous conditional action value.  Conversely, one failed learner does
not prove that every richer action space is homogeneous.  Causal LLM routing
has shown how policies can be learned from observational feedback, in contrast
to the full-feedback matrices assumed by many routing methods~\cite{causalrouting2025}.
Our matched-information
audit places the realized maximum, cross-fitted fixed action, cost-matched
random mixture, and outcome-blind learned policy on the same fixed episodes and
query folds.  We use it descriptively and calibrate its maximum against an
exchangeable-actions null; because it does not condition on the Gate-1 stop
event, it cannot diagnose which link of that stopped-history chain fails.

\begin{table*}[t]
\centering
\caption{Taxonomy of the closest allocation and verification families.  The
distinguishing object is the information available when the second action is
selected, not whether a system happens to invoke more than one model.}
\label{tab:related-taxonomy}
\footnotesize
\setlength{\tabcolsep}{3.5pt}
\begin{tabularx}{\textwidth}{@{}>{\raggedright\arraybackslash}p{0.14\textwidth}>{\raggedright\arraybackslash}p{0.12\textwidth}>{\raggedright\arraybackslash}p{0.16\textwidth}>{\raggedright\arraybackslash}p{0.14\textwidth}>{\raggedright\arraybackslash}X@{}}
\toprule
Family & Decision point & Legal feedback & Main action & Gap addressed here \\
\midrule
Query routing~\cite{routellm,routerbench,routereval}
& before response & prompt or preference features & choose one model
& cannot condition on a realized weak-verifier stop \\
Response-conditioned cascades~\cite{frugalgpt,automix2024,cascaderouting2025}
& after a response & answer score or self-verification & accept or escalate
& does not separate realized action availability from learnable action choice \\
Adaptive sampling and multi-model compute~\cite{adaptiveconsistency2023,bestroute2025,modelswitch2026}
& before or during sampling & agreement, consistency, or predicted query difficulty & sample again or switch
& does not audit finite-sample support before interpreting a learned selector \\
Code generation and verification~\cite{codet2023,lever2023,codecascade2024,evalplus}
& after candidate execution & generated tests, public tests, or learned score
& select, reject, or escalate a candidate
& typically treats verification as selection evidence rather than measuring hidden recovery debt \\
This study
& after a declared verifier stop for the prospective framework & outcome-blind stopped history only
& resample or reroute, then stop
& separates debt, support, and claim licensing; the all-episode audit is descriptive \\
\bottomrule
\end{tabularx}
\end{table*}

\section{Three-Gate Problem Formulation and Information Boundary}
\label{sec:problem}

This section defines the sequential serving contract and the information
available to each participant.  The policy may use only query features,
response-derived primitives, verifier outputs, costs, and state learned from a
declared fitting split.  Hidden correctness is available only to the evaluator
that measures recovery opportunity and held-out utility.  This boundary is the
common foundation for the mechanism estimand, audit algorithm, and every
baseline comparison.

For query $i$, model $m\in\{1,\ldots,M\}$, and archived draw
$d\in\{0,\ldots,D-1\}$, let $X_{imd}$ denote the response text and
$\truth_{imd}\in\{0,1\}$ its scorer-specific correctness label.  A
response-only primitive $E_{imd}=e(X_{imd})$ may be an extracted answer or a
visible test outcome.  After the $t$th paid call, a verifier adapter maps the
current primitive and prior observable history to a policy-visible signal and
acceptance event,
\begin{equation}
  (\signal_{it},R_{it})=\phi(E_{iA_{it}J_{it}},\mathcal H_{i,t-1}).
\end{equation}
This history dependence is essential for agreement: acceptance is not a static
property of one archived cell.  The adapter may instead reveal the scorer label
in the explicitly separated perfect-label diagnostic, or implement partial
tests or a learned signal.  This is an access-control statement, not an
assertion that scorer and verifier channels are statistically independent.

A policy selects a sequence of model calls $A_{i1},A_{i2},\ldots$ with costs
$c_{A_{it}}>0$ subject to the pathwise cap
\begin{equation}\label{eq:budget}
  \sum_t c_{A_{it}}\le B.
\end{equation}
It may stop when the verifier accepts a candidate or when no further call is
affordable.  A declared selector $g$ then returns one of the drawn candidates.
The evaluation outcome is $\truth_{i,g(\mathcal H_i)}$, where $\mathcal H_i$ is the observable
history.  Evaluator labels may be used for explicitly declared offline
supervised fitting on TRAIN, but online TEST actions use only the resulting
frozen state and the observable verifier history.

Let $Q_i$ denote policy-visible query features; it excludes the benchmark gold
answer, hidden reference tests, and evaluator-only metadata.  Let
$\theta=f(\mathcal D_{\mathrm{TRAIN}})$ be the policy state frozen before TEST,
and let $\omega_i$ collect policy randomization.  The replay engine uses $J_{is}$ to
locate a paid response, but this archive slot is evaluator bookkeeping and is
not visible to the policy or selector.  Write
$X^{\mathrm{obs}}_{is}=X_{iA_{is}J_{is}}$ and
$E^{\mathrm{obs}}_{is}=E_{iA_{is}J_{is}}$.  Define
\begin{align}
  \mathcal H_{i,0}&=\sigma(Q_i,\omega_i,\theta),\\
  \mathcal H_{i,t}&=\sigma\!\left(\mathcal H_{i,0},
  (A_{is},X^{\mathrm{obs}}_{is},E^{\mathrm{obs}}_{is},
  \signal_{is},R_{is},c_{A_{is}})_{s=1}^{t}\right).
\end{align}
The next action $A_{i,t+1}$, stopping decision, and final selector must be
$\mathcal H_{i,t}$-measurable.  Held-out labels $\truth_{\mathrm{TEST}}$ may
enter neither $\theta$ nor the filtration of any deployable online policy.
TRAIN labels may enter only the declared offline fit $f$.  In the perfect-label
diagnostic, $\signal_{it}=\truth_{iA_{it}J_{it}}$ is revealed only after
$A_{it}$ is chosen and its call cost is charged.

\section{Methods: Gate-Ordered Estimation and Claim Licensing}
\label{sec:method}

For stopped episode $j$ from query $i$, let $Y^0_{ij}$ be hidden correctness
of the accepted response and let $Y^{\mathrm{res}}_{ij}$ and
$Y^{\mathrm{rer}}_{ij}$ be hidden correctness after exactly one declared
resample or reroute.  Correct stops are preserved, so the two action utilities
are
\begin{equation}\label{eq:gate-action-utility}
 U_{ij}(a)=\max\{Y^0_{ij},Y^a_{ij}\},
 \qquad a\in\{\mathrm{res},\mathrm{rer}\},
\end{equation}
and the action-ranking target is
\begin{equation}\label{eq:gate-action-difference}
 \Delta_{ij}=U_{ij}(\mathrm{rer})-U_{ij}(\mathrm{res})\in\{-1,0,1\}.
\end{equation}
Gate 1 concerns whether any declared action recovers value over $Y^0$; Gate 2
concerns whether both signs of $\Delta$ have sufficient episode-, query-, and
fold-level support in FIT; Gate 3 concerns the held-out value of a selector
measurable with respect to $\mathcal H_{i,t}$.  In abstract form the licensed
dynamic-policy branch is
\begin{equation}\label{eq:three-gate-conjunction}
 G_{\mathrm{dynamic}}=G_{\mathrm{recovery}}\,
 G_{\mathrm{support}}\,
 G_{\mathrm{policy}},
\end{equation}
where each factor is binary and failure of any factor is absorbing.  The
benchmark-specific estimators and frozen thresholds below instantiate these
three factors; Eq.~\eqref{eq:three-gate-conjunction} is a claim-licensing rule,
not a model-selection score.

\subsection{Verifier-gated recovery estimand}\label{sec:mechanism-estimand}

The fresh confirmatory experiment isolates a narrow recovery mechanism and
does not optimize the retired historical allocator.  For TEST query $i$ and draw
$d\in\{0,\ldots,D-1\}$, let $Z^{14}_{id}$ be 14B Base-test passage and let
$Y^{m}_{id}$ be hidden Base-and-Plus correctness for model $m$.  Because full
correctness implies Base passage, $Y=1\Rightarrow Z=1$.  Define the irreversible
false-positive-stop indicator
\begin{equation}
  F_{id}=\mathbf{1}\{Z^{14}_{id}=1,\,Y^{14}_{id}=0\}.
\end{equation}
The prespecified primary action-space recovery estimand is
\begin{equation}\label{eq:vg-primary}
  \widehat\Delta_{\mathrm{VG}}
  =\frac{1}{N}\sum_{i=1}^{N}
   \left(\frac{1}{D}\sum_{d=0}^{D-1}F_{id}\right)
   \left(\frac{1}{D}\sum_{e=0}^{D-1}Y^{7}_{ie}\right).
\end{equation}
It measures how much fully correct 7B mass remains available when a 14B
candidate would have stopped on a Base-only false positive.  It is an
evaluator-only finite-bank mechanism quantity, not a runtime feature and not
the value of a selected deployment policy.  We call this quantity
\emph{recoverable verifier-induced stopping debt}: correctness mass forfeited
by an observable stopping rule but recoverable by one declared legal action.
It is narrower than the verifier's total false-positive rate, because debt is
counted only when the alternative bank actually contains a full-correctness
success under the prespecified pairing.  The fixed-pairing sensitivity
replaces the product of draw means with
\begin{equation}\label{eq:vg-offset}
  \widehat\Delta_{\mathrm{off}}
  =\frac{1}{ND}\sum_{i=1}^{N}\sum_{d=0}^{D-1}
    F_{id}Y^{7}_{i,(d+1)\bmod D}.
\end{equation}
The prespecified secondary comparison concerns the complementary observable
event on which the 7B Base verifier rejects and execution therefore continues.
Using leave-one-out resampling to avoid reusing the rejected draw, define
\begin{equation}\label{eq:vg-secondary}
\begin{split}
 \widehat\Delta_{\mathrm{sec}}
 &=\frac{1}{N}\sum_{i=1}^{N}\frac{1}{D}\sum_{d=0}^{D-1}
 \mathbf 1\{Z^{7}_{id}=0\}\\
 &\qquad\times\left[
   \frac{1}{D}\sum_{e=0}^{D-1}Y^{14}_{ie}
   -\frac{1}{D-1}\sum_{e\ne d}Y^{7}_{ie}
 \right].
\end{split}
\end{equation}
This is an event-weighted marginal comparison between fixed rerouting to 14B
and fixed leave-one-out resampling of 7B after a Base-test rejection.  It was
prespecified as an underpowered secondary estimand.  It is not a stopping-debt
estimand, does not select an action from history, and does not enter the Gate-1
decision.
Uncertainty resamples query clusters, never individual draws.  The sole
primary decision rule is that the deterministic 10,000-replicate percentile
bootstrap lower 95\% endpoint for \eqref{eq:vg-primary} is strictly above
zero.  No scalar exchange rate combines calls, generated tokens, latency, or
model size in this decision.

\section{Operational Tests and Descriptive Stress Audit}
\label{sec:audit}

The verifier-gated estimator answers a bounded MBPP+ recovery question.  Any
Gate-2 or Gate-3 continuation of that same empirical chain would have to retain
the observable MBPP+ stop $Z^{14}=1$, its action set, utility, and target
population.  Neither LiveCodeBench nor BigCodeBench continues that chain: each
defines a different public-test stopping event and evaluates its gates only
within its own stopped population.  The earlier MBPP+ matched-information audit
is different again because it assigns a second action to every start draw.  We
therefore retain it as a descriptive stress audit, not as an operational
decomposition of any completed stopped-history chain.

For query $i$, start-model draw $d$, and legal second action $a$, define
$d^+=(d+1)\bmod D$ and the evaluator-only one-step utility
\begin{equation}\label{eq:audit-utility}
  U_{ida}=\max\{Y^{s}_{id},Y^{a}_{i d^+}\},
\end{equation}
where $s$ is the fixed start model.  Selecting $a=s$ is a resample; selecting
$a\ne s$ is a reroute.  The circular offset consumes a distinct, fixed bank
position and never searches future outcomes.  Let $a^{\mathrm{fix}}_{-i}$ be
the single action selected using only training queries in the relevant outer
fold, and let $a^*_{id}=\arg\max_a U_{ida}$ be an evaluator-only oracle.  The
two diagnostic gaps are
\begin{align}
  \widehat\Delta_{\mathrm{rm}}
   &= \mathbb E[U_{id a^*_{id}}-U_{id a^{\mathrm{fix}}_{-i}}],
   \label{eq:opportunity-gap}\\
 \Delta_{\mathrm{obs}}
   &= \mathbb E[U_{id\widehat\pi_{-i}(X_{id})}
                -U_{id a^{\mathrm{fix}}_{-i}}],
   \label{eq:observable-gap}
\end{align}
where $X_{id}$ is computed from the first completion without correctness,
reference solutions, or hidden tests.  We call
$\widehat\Delta_{\mathrm{rm}}$ the \emph{realized-maximum gap}.  It is an
evaluator-only descriptive statistic, not a controller feature and not, by
itself, evidence that the two actions have different conditional expected
utility.

To calibrate the maximum, independently exchange the two action labels within
each episode with probability $1/2$.  For each exchange vector $\xi$, preserve
the recorded query folds, reselect $a^{\mathrm{fix}}_{-i}$ on the corresponding
training queries, and recompute Eq.~\eqref{eq:opportunity-gap}.  If
$G(U;\mathcal F)$ denotes that complete cross-fitted operation, the
post-audit reference distribution is
\begin{equation}\label{eq:exchangeable-null}
 \mathcal N_{\mathrm{exch}}
 =\mathcal L_{\xi}\!\left(
 G(\operatorname{swap}_{\xi}U;\mathcal F)\mid U,\mathcal F
 \right).
\end{equation}
This randomization calibrates the fold-specific fixed-action selection under
within-episode action-label exchangeability.  It is not a test of conditional
heterogeneity because neither $\widehat\Delta_{\mathrm{rm}}$ nor the
randomization uses an observable stopped-history signal.  In this bank all five
folds select rerouting.  Relative to the analytic exchangeable mean, the
observed shortfall is exactly $7/1{,}520=0.460526$ points and reflects the
sample marginal advantage of rerouting; the Monte Carlo estimate is $0.460482$
points.  The observed statistic lies at the 18.283rd percentile of the
exchangeable reference distribution.

The observable MBPP+ stopping event is Qwen14 Base-test acceptance,
$Z^{14}=1$, which occurs on 1,240 of 1,520 draws.  The subset $F_{id}=1$
contains 198 draws but additionally uses hidden correctness and is
evaluator-only.  The matched audit conditions on neither event: it acts on all
1,520 episodes, including the 280 Base-test rejections on which no stop
occurred.

\subsection{Operational decomposition}\label{sec:audit-layers}

In the prospective designs, Gate 2 precedes feature fitting and verifies that
FIT contains both action-difference signs at the episode, query, and fold
levels.  Gate 3 would then evaluate an outcome-blind policy against the frozen
fixed action and a frequency-matched random mixture before applying quality or
efficiency thresholds.  Robustness and evidence-integrity checks can narrow a
claim but cannot reverse a failed gate.  BigCodeBench ends at Gate 2.  The
older matched audit executed neither a stopped-history Gate 1 nor a prospective
FIT support gate, so its frozen machine verdict is retained only as provenance
and is not adopted as the scientific conclusion.

Figure~\ref{fig:audit-flow} shows the information-separation constraint that
applies inside every learned-policy comparison.  Raw
generation and evaluator outcomes are loaded through separate functions.
The feature builder completes before the outcome tensor is joined; policy
selection receives predictions and costs but no held-out label.  Outcomes
meet selected actions only inside the evaluator.

\begin{figure*}[t]
\centering
\includegraphics[width=\textwidth]{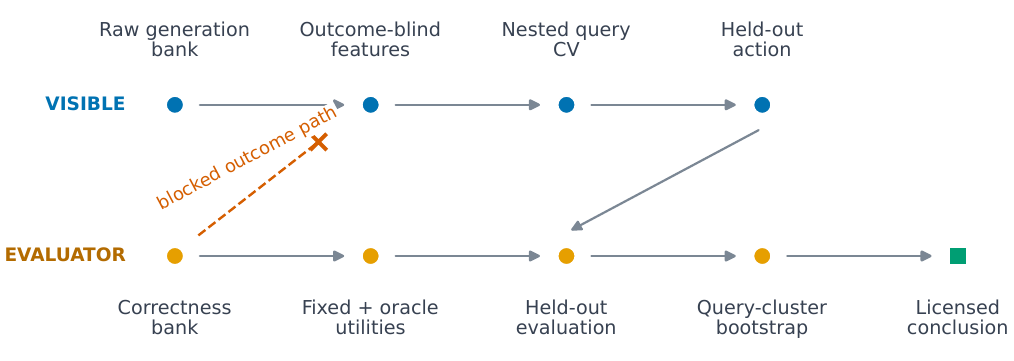}
\caption{Matched-information audit flow.  The upper lane is the only
controller-visible path.  The lower evaluator lane supplies training targets
on training folds and held-out utility after an action has been selected, but
has no path into held-out feature materialization or action selection.}
\Description{A two-lane flow chart separates a frozen raw generation bank and
outcome-blind feature builder from an evaluator-only correctness bank.  Nested
query cross-validation produces a held-out resample or reroute action.  Only
the evaluator combines that action with outcomes, bootstrap uncertainty, and
decision gates.  A dashed arrow from correctness to the feature builder is
marked forbidden.}
\label{fig:audit-flow}
\end{figure*}

The controller is intentionally modest.  For each action it fits a ridge
regression to the vector $(U_{ida})_a$ on training episodes.  An inner
query-level cross-validation chooses the ridge penalty; a disjoint outer fold
evaluates the selected action.  All $D$ draw positions from a query remain in
the same fold.  Ties are deterministic: lower nominal action cost, then frozen
model index.  We also compare with a random action mixture having the same
held-out marginal action frequencies, so a gain cannot be attributed merely
to choosing a cheaper or more common action more often.

\begin{algorithm*}[t]
\caption{Descriptive all-episode matched-information audit}
\label{alg:matched-audit}
\begin{algorithmic}[1]
\State \textbf{Input:} frozen raw bank $R$, evaluator bank $Y$, actions $A$,
costs $c$, query IDs, ridge grid $\Lambda$
\State materialize legal features $X\leftarrow f(R)$ without an evaluator argument
\State validate and join $R$ and $Y$ by query, draw, file, seed, byte length,
and completion digest
\State compute $U_{ida}$ by \eqref{eq:audit-utility}, fixed-action values,
and realized-maximum values on all episodes
\For{each deterministic outer query fold $o$}
  \State choose $\lambda_o\in\Lambda$ by inner query-fold validation on $o^c$
  \State fit action-utility regressions on all episodes from $o^c$
  \State select held-out actions using predictions and costs only
  \State evaluate selected, fixed, mixture, and oracle utilities on fold $o$
\EndFor
\State aggregate by query and compute deterministic 10,000-replicate
paired query-cluster intervals
\State \textbf{Post-audit calibration:} compute the exact-fold exchangeable
reference in Eq.~\eqref{eq:exchangeable-null} with
\texttt{paper/build\_exchangeable\_action\_null.py}
\State verify robustness and evidence invariants
\State \Return descriptive gaps, exchangeable calibration, and claim boundary
\end{algorithmic}
\end{algorithm*}

Lines 1--11 of Algorithm~\ref{alg:matched-audit} record the frozen F5
execution: feature materialization, exact-fold fitting, held-out scoring, and
query-cluster intervals.  The exchangeable reference in line 12 was added
after the adversarial audit and was computed separately by
\texttt{paper/build\_exchangeable\_action\_null.py}; it was not part of the
frozen F5 run.  Neither the F5 execution nor the post-audit calibration can
authorize a stopped-history controller.  Only a separate preregistered chain
with recovery, two-sided FIT support, and an untouched held-out quality or
efficiency pass can authorize
\texttt{GO\_BUILD\_DYNAMIC\_CONTROLLER}.  The Supplementary Material gives the
exact folds, calibration, thresholds, and fail-closed tests.

\section{Theory: Realized Maxima, Support, and Observable Value}\label{sec:theory}

This section formalizes the logical order used by the three gates.  The first
result separates a frozen training-selected action from the TEST-best fixed
action, the value attainable from a stopped-history signal, and the
evaluator-only oracle.  The second states the finite-sample support condition
exposed by the sealed ladder.  Local properties of the retired historical
allocator are retained only in the Supplementary Material.

\subsection{Four-value stopped-history decomposition}

Let $S$ be an observable stopping event, let $\mathcal A$ be a finite legal
second-action set, and let $U_a\in[0,1]$ be evaluator utility after action
$a\in\mathcal A$.  Let $a_{\mathrm{FIT}}$ be the action frozen using FIT and
let $Z$ be any outcome-blind stopped-history signal.  On the evaluation
distribution define
\begin{align}
 V_{\mathrm{frozen}}
   &=\mathbb E[U_{a_{\mathrm{FIT}}}\mid S],\\
 V_{\mathrm{fix}}
   &=\max_{a\in\mathcal A}\mathbb E[U_a\mid S],\\
 V_Z
   &=\mathbb E\!\left[\max_{a\in\mathcal A}
       \mathbb E[U_a\mid Z,S]\mathrel{\Big|}S\right],\\
 V_{\mathrm{or}}
   &=\mathbb E[\max_{a\in\mathcal A}U_a\mid S].
 \label{eq:four-values}
\end{align}

\begin{proposition}[Support-aware value decomposition]
\label{prop:four-value-decomposition}
If $a_{\mathrm{FIT}}\in\mathcal A$, then
\begin{equation}
 V_{\mathrm{frozen}}\le V_{\mathrm{fix}}\le V_Z\le V_{\mathrm{or}},
 \label{eq:four-value-order}
\end{equation}
and
\begin{align}
 V_{\mathrm{or}}-V_{\mathrm{frozen}}
 &=\underbrace{V_{\mathrm{fix}}-V_{\mathrm{frozen}}}_{
      \text{fixed-action transport loss}}\notag\\
 &\quad+\underbrace{V_Z-V_{\mathrm{fix}}}_{\text{observable value}}\notag\\
 &\quad+\underbrace{V_{\mathrm{or}}-V_Z}_{\text{residual information debt}}.
 \label{eq:four-value-decomposition}
\end{align}
\end{proposition}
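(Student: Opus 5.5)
The plan is to prove the three inequalities separately and then obtain the decomposition as a telescoping identity. Throughout, all expectations are on the evaluation distribution conditional on $S$. We assume $\mathbb P(S)>0$ so these conditional expectations are well defined, and we use that $\mathcal A$ is finite and $U_a\in[0,1]$, so every expectation is finite and every maximum is attained.

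\textbf{Step 1 (frozen vs.\ fixed).} The only hypothesis, $a_{\mathrm{FIT}}\in\mathcal A$, is used here. Conditional on FIT, the action $a_{\mathrm{FIT}}$ is a fixed element of $\mathcal A$, so $\mathbb E[U_{a_{\mathrm{FIT}}}\mid S]\le\max_{a\in\mathcal A}\mathbb E[U_a\mid S]$. If $a_{\mathrm{FIT}}$ is viewed as random, we need it to be independent of the evaluation episode given $S$ (FIT and evaluation are disjoint). Then the bound holds pointwise in the FIT realization, and averaging over FIT preserves it. I expect this to be the only subtle point. The statement implicitly treats $a_{\mathrm{FIT}}$ as fixed with respect to the evaluation distribution, and I will say so explicitly.

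\textbf{Step 2 (fixed vs.\ signal).} For each $a$, the tower property gives $\mathbb E[U_a\mid S]=\mathbb E[\mathbb E[U_a\mid Z,S]\mid S]$. Pointwise, $\mathbb E[U_a\mid Z,S]\le\max_{b}\mathbb E[U_b\mid Z,S]$. Taking $\mathbb E[\cdot\mid S]$ and then the maximum over $a$ yields $V_{\mathrm{fix}}\le V_Z$. This is Jensen's inequality for the convex map $\max$ over a finite set.

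\textbf{Step 3 (signal vs.\ oracle).} Pointwise, $U_a\le\max_b U_b$, so $\mathbb E[U_a\mid Z,S]\le\mathbb E[\max_b U_b\mid Z,S]$ for every $a$. The right-hand side does not depend on $a$, so it also bounds the maximum over $a$. Conditioning on $S$ and applying the tower property again gives $V_Z\le V_{\mathrm{or}}$. Outcome-blindness of $Z$ is not needed for the inequality itself. It matters only for interpreting $V_Z$ as achievable by a measurable selector, namely $a^*(Z)\in\arg\max_a\mathbb E[U_a\mid Z,S]$.

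\textbf{Step 4 (decomposition).} Equation~\eqref{eq:four-value-decomposition} is the telescoping identity obtained by adding and subtracting $V_{\mathrm{fix}}$ and $V_Z$, which is valid because all four values are finite. By Steps 1--3, each of the three bracketed terms is nonnegative, which justifies naming them losses or debts.
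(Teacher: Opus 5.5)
Your proof is correct and follows essentially the same route as the paper: $a_{\mathrm{FIT}}\in\mathcal A$ gives the first inequality, the tower property plus pointwise maximization gives $V_{\mathrm{fix}}\le V_Z$, monotonicity against the pointwise maximum gives $V_Z\le V_{\mathrm{or}}$, and the decomposition follows by adding and subtracting $V_{\mathrm{fix}}$ and $V_Z$. Your explicit requirement that $a_{\mathrm{FIT}}$ be fixed, or independent of the evaluation episode, makes precise an assumption the paper leaves implicit.
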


\begin{proof}
The first inequality follows because $V_{\mathrm{fix}}$ maximizes over a set
containing $a_{\mathrm{FIT}}$.  Conditioning a fixed action on $Z$ and then
maximizing cannot reduce value, which gives
$V_{\mathrm{fix}}\le V_Z$.  Conditional expected utility cannot exceed the
pointwise evaluator maximum, so $V_Z\le V_{\mathrm{or}}$.  Adding and
subtracting $V_{\mathrm{fix}}$ and $V_Z$ gives
\eqref{eq:four-value-decomposition}.
\end{proof}

\begin{proposition}[Exchangeable realized-maximum baseline]
\label{prop:exchangeable-max}
For two integrable actions with an exchangeable joint law conditional on $S$,
$\mathbb E[U_0\mid S]=\mathbb E[U_1\mid S]$ and
\begin{equation}\label{eq:exchangeable-max-bias}
 \mathbb E[\max\{U_0,U_1\}\mid S]
 -\max_a\mathbb E[U_a\mid S]
 =\frac{1}{2}\mathbb E[|U_1-U_0|\mid S].
\end{equation}
The right-hand side is positive whenever realized action outcomes disagree
with positive probability, even though the conditional expected action values
are identical.  For independent Bernoulli$(p)$ actions it equals $p(1-p)$.
\end{proposition}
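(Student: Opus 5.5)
The proof rests on the pointwise identity $\max\{u,v\}=\tfrac12(u+v)+\tfrac12|u-v|$, valid for all real $u,v$. Applying it with $u=U_0$ and $v=U_1$ and taking conditional expectations given $S$ gives
\begin{equation*}
 \mathbb E[\max\{U_0,U_1\}\mid S]
 =\tfrac12\bigl(\mathbb E[U_0\mid S]+\mathbb E[U_1\mid S]\bigr)
 +\tfrac12\mathbb E[|U_1-U_0|\mid S].
\end{equation*}
Integrability of $U_0$ and $U_1$ makes every term finite, because $|U_1-U_0|\le|U_0|+|U_1|$.

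The next step uses exchangeability to identify the two marginals. Exchangeability of the joint law conditional on $S$ means that $(U_0,U_1)$ and $(U_1,U_0)$ have the same conditional distribution. In particular $U_0$ and $U_1$ have the same conditional marginal law, so $\mathbb E[U_0\mid S]=\mathbb E[U_1\mid S]=:\mu$. The first term above is therefore $\mu$, and $\max_a\mathbb E[U_a\mid S]=\mu$ as well. Subtracting gives \eqref{eq:exchangeable-max-bias}.

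Positivity follows from the fact that a nonnegative random variable has zero conditional expectation only if it vanishes almost surely on $S$. Hence if $\Pr(U_0\ne U_1\mid S)>0$, the right-hand side is strictly positive. This holds even though the conditional action values are equal by the exchangeability step.

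For independent Bernoulli$(p)$ actions, which are exchangeable, $|U_1-U_0|$ is the indicator of disagreement. Its expectation is $2p(1-p)$, so half of it is $p(1-p)$.

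None of these steps is a real obstacle. The only point needing care is the meaning of conditional exchangeability when $S$ is an event, or possibly a $\sigma$-field. I would state explicitly that exchangeability is assumed for the law conditional on $S$, so that equality of the conditional marginals holds on $S$ and not merely unconditionally. With that assumption stated, the argument is a one-line identity followed by linearity of expectation.
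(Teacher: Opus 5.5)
Your proposal is correct and follows the paper's own argument: the identity $\max\{x,y\}=\tfrac12(x+y+|x-y|)$, exchangeability to identify the common conditional mean, and the disagreement probability $2p(1-p)$ in the independent Bernoulli case. Your explicit treatment of integrability and of strict positivity when $\Pr(U_0\ne U_1\mid S)>0$ fills in details that the paper's proof leaves implicit.
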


\begin{proof}
Use $\max\{x,y\}=(x+y+|x-y|)/2$ and exchangeability to replace
$(\mathbb E[U_0\mid S]+\mathbb E[U_1\mid S])/2$ by their common conditional
mean.  Independence in the Bernoulli case gives
$\mathbb E|U_1-U_0|=2p(1-p)$.
\end{proof}

\begin{remark}[General two-action identity]
Exchangeability is not required for the realized-maximum identity.  Let
$\Delta=U_1-U_0$.  For any two integrable actions and any event $S$ with
positive probability,
\begin{equation}\label{eq:general-realized-max}
\begin{split}
 &\mathbb E[\max\{U_0,U_1\}\mid S]
 -\max_{a\in\{0,1\}}\mathbb E[U_a\mid S]\\
 &\qquad=\frac12\mathbb E[|\Delta|\mid S]
 -\frac12\left|\mathbb E[\Delta\mid S]\right|.
\end{split}
\end{equation}
Thus the exchangeable value $\tfrac12\mathbb E[|\Delta|\mid S]$ is the
largest expected gap compatible with a fixed disagreement rate.  A shortfall
from it records a marginal action difference, not observable conditional
heterogeneity.  The statistic contains no stopped-history signal.
\end{remark}

Proposition~\ref{prop:exchangeable-max} distinguishes a clairvoyant
realization selector from action heterogeneity in conditional expectation.
It also warns against treating an empirical maximum as a neutral plug-in for
$V_{\mathrm{or}}$: sample maxima require an explicit reference distribution.
The exact-fold exchangeable reference in Eq.~\eqref{eq:exchangeable-null}
provides that calibration for the opened MBPP+ audit.

When second-action utility preserves a correct stop,
$U_a=\max\{Y^0,Y^a\}$, define
$V_{\mathrm{stop}}=\mathbb E[Y^0\mid S]$.  Then
$V_{\mathrm{stop}}\le V_{\mathrm{frozen}}$, and the same identity can be
anchored at stopping debt:
\begin{align}
 V_{\mathrm{or}}-V_{\mathrm{stop}}
 &=(V_{\mathrm{frozen}}-V_{\mathrm{stop}})
 +(V_{\mathrm{fix}}-V_{\mathrm{frozen}})\notag\\
 &\quad+(V_Z-V_{\mathrm{fix}})+(V_{\mathrm{or}}-V_Z).
 \label{eq:stop-anchored-decomposition}
\end{align}
This separates recovery supplied by the frozen action from action-ranking
transport, observable selection, and the remaining evaluator-only ceiling.

The decomposition is a population identity for one declared stopping event.
An empirical TEST-best fixed action is a selected maximum and can be upward
biased; inserting it into Eq.~\eqref{eq:four-value-decomposition} does not by
itself estimate the corresponding population terms.  We therefore report the
LiveCodeBench fixed-action and realized-maximum values descriptively in
Section~\ref{sec:ladder-results}, without allocating an empirical
``information debt.''  The all-episode MBPP+ audit uses $S=\Omega$ and
therefore cannot instantiate the Gate-1 decomposition.  Its target mixes
stopped and non-stopped episodes; the exact population accounting appears in
Section~\ref{sec:rq2-results}.

\subsection{Finite-sample action support}

For FIT stopped-history episode $j$, let
$\Delta_j=U_j(a_{\mathrm{alt}})-U_j(a_{\mathrm{FIT}})$ be the alternative-action
advantage used by the frozen ladder.

\begin{proposition}[Zero-support degeneracy]
\label{prop:zero-support-main}
For the ridge-plus-isotonic learner used in the ladder, if $\Delta_j=0$ for every
FIT episode, every fitted and calibrated prediction is zero.  If deviation
requires $\widehat\Delta(Z)>\tau$ with $\tau\ge0$, the policy makes no DEV or TEST
deviation regardless of the feature tier.  Equality with
$V_{\mathrm{frozen}}$ therefore does not imply
$V_Z=V_{\mathrm{fix}}$.
\end{proposition}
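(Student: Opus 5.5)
The argument has three steps: show that the fitted learner collapses to the zero function, show that the thresholded deviation rule never fires, and give an explicit configuration showing that equality with $V_{\mathrm{frozen}}$ says nothing about $V_Z-V_{\mathrm{fix}}$.

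\emph{Step 1: the fitted learner is identically zero.} I would take the ladder learner to be a ridge regression with penalty $\lambda>0$ whose output is then passed through isotonic calibration on the ridge scores. I also assume the intercept, if any, is either penalized or fitted as the mean of the targets. With target vector $\Delta=0$, the ridge objective is $\|X\beta+b\mathbf 1\|^2+\lambda\|\beta\|^2$. This is minimized uniquely at $\beta=0$ by strict convexity in $\beta$, and at $b=0$ because the mean of the targets is zero. Hence every ridge score is $0$ for every input, including any DEV or TEST $Z$.

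Isotonic regression is the least-squares projection of the target vector onto the cone of nondecreasing sequences in score order. This cone contains the zero vector, so the projection of $\Delta=0$ is $0$. Any extrapolation rule, whether constant or nearest-knot, therefore returns $0$ everywhere. This conclusion does not depend on the feature tier, because the features enter only through $X$ and every target is zero.

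\emph{Step 2: no deviation occurs.} Since $\widehat\Delta(Z)\equiv0$ and $\tau\ge0$, the condition $\widehat\Delta(Z)>\tau$ becomes $0>\tau$, which never holds. So the policy plays $a_{\mathrm{FIT}}$ on every DEV and TEST episode, and its value equals $V_{\mathrm{frozen}}$ exactly.

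\emph{Step 3: equality does not imply $V_Z=V_{\mathrm{fix}}$.} I would show this with a counterexample. FIT is a finite sample, so it can contain only episodes with $\Delta_j=0$, for instance because of few stops or a distribution shift. Meanwhile, the evaluation law can have a binary $Z$ with $\mathbb E[\Delta\mid Z=1,S]=+1/2$ and $\mathbb E[\Delta\mid Z=0,S]=-1/2$, each with probability $1/2$. Then $V_{\mathrm{fix}}=\mathbb E[U_{a_{\mathrm{FIT}}}\mid S]$, while $V_Z$ exceeds it by $1/4$. Under Proposition~\ref{prop:four-value-decomposition} the frozen policy's value still equals $V_{\mathrm{frozen}}$, so the observable-value term $V_Z-V_{\mathrm{fix}}$ is strictly positive but undetected. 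This shows the equality is uninformative.

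\emph{Main obstacle.} The hardest part is Step 1's dependence on the exact learner specification: how the intercept is treated, and how isotonic calibration handles tied scores and extrapolation. I would settle this by stating the assumptions explicitly (least-squares fits, and a calibrator that maps the all-zero target vector to zero). Every standard implementation satisfies these assumptions, so I would verify each convention against the frozen ladder code rather than argue abstractly.
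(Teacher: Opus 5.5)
Your proposal is correct and follows the paper's proof. Steps 1--2 match it directly: an all-zero target forces zero ridge coefficients and intercept for every positive penalty, isotonic calibration of an all-zero target stays zero, and $0>\tau$ fails for every $\tau\ge0$. For the final claim, the paper only remarks that the argument constrains the fitted algorithm rather than the held-out law of $(\Delta,Z)$; your law with $\mathbb E[\Delta\mid Z,S]=\pm\tfrac12$ (so $V_{\mathrm{fix}}=V_{\mathrm{frozen}}$ yet $V_Z-V_{\mathrm{fix}}=\tfrac14$) is a concrete instance of that remark, but the policy's value equals $V_{\mathrm{frozen}}$ because of your Step 2, not Proposition~\ref{prop:four-value-decomposition}.
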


\begin{proof}
The ridge response vector is zero, so every positive-penalty normal equation
has zero coefficients and intercept.  Out-of-fold predictions are zero, and
isotonic calibration against the same all-zero target remains zero.  The
strict inequality $0>\tau$ is false for every nonnegative threshold.  This
argument constrains the fitted algorithm, not the held-out joint law of
$(\Delta,Z)$.
\end{proof}

\section{Experimental Setup}\label{sec:setup}

The experiments are organized by gate rather than by artifact-production
chronology.  Table~\ref{tab:evidence-map} assigns one inferential role to each
evidence family.  MBPP+ decides the recovery mechanism; LiveCodeBench exposes
the support problem retrospectively; and BigCodeBench tests the resulting
opening guard prospectively.  The opened-population audit, model-family
extension, and historical archive remain diagnostic, robustness, or
subtractive evidence.

\begin{table*}[t]
\centering
\caption{Evidence map.  Each dataset serves one declared gate role; sharing a
response bank or TEST population does not create an independent replication.
The rows are separate empirical applications of the framework; gates are not
composed across datasets or rows.}
\label{tab:evidence-map}
\scriptsize
\begin{tabularx}{\textwidth}{@{}>{\raggedright\arraybackslash}p{0.16\textwidth}>{\raggedright\arraybackslash}p{0.11\textwidth}>{\raggedright\arraybackslash}p{0.18\textwidth}>{\raggedright\arraybackslash}p{0.13\textwidth}>{\raggedright\arraybackslash}X@{}}
\toprule
Evidence & Gate role & Design & Inferential role & Licensed conclusion \\
\midrule
Fresh MBPP+ Qwen pair & Gate 1 & 152 hash-split TEST queries; pinned generation and rescoring & prespecified primary & whether Base-only stopping debt is recoverable \\
MBPP+ matched-information audit & descriptive audit & nested query cross-fitting on all 1,520 episodes of the opened TEST bank & post-audit diagnostic & realized-maximum calibration, marginal fixed-action comparison, and no tested outcome-blind selector identified when to deviate from the fixed action \\
LiveCodeBench v5+v6 & Gate 2 diagnosis & 136 FIT, 68 DEV, one 137-query sealed TEST opening & separated frozen follow-up & support-shift diagnosis for the declared ladder \\
BigCodeBench v0.1.4 & Gate 2 prospective & 483 FIT, 211 unopened DEV, 405 unopened TEST; near-duplicate-group split & preregistered gate replication & frozen design misses two-sided count minima; no feature fit or outcome opening \\
Llama/\allowbreak Nemotron extension & robustness & same MBPP+ queries and Qwen14 stop bank; new alternative banks & post-outcome sensitivity & model-family persistence of the mechanism only \\
Historical 11-model archive & audit & fixed GSM8K, MATH-500, GPQA, and HumanEval+ replay & exploratory & why broad policy and static-routing claims are rejected \\
\bottomrule
\end{tabularx}
\end{table*}

\subsection{Fresh MBPP+ verifier-gated TEST}

The headline experiment uses the MBPP source corpus~\cite{austin2021mbpp} and
MBPP+ v0.2.0 as distributed by EvalPlus 0.3.1~\cite{evalplus}.
The dataset contains 378 queries and is pinned by SHA-256; a hash-defined outer
split assigns ranks 226--377 to TEST, yielding $N=152$ queries.  The earlier
one-query pipeline pilot, MBPP/100, is excluded from every scientific split.
We generate $D=10$ stochastic candidates per TEST query from
Qwen2.5-7B-Instruct-AWQ and Qwen2.5-14B-Instruct-AWQ
~\cite{qwen25technical,lin2024awq} under pinned repository
revisions, model-snapshot manifests, tokenizer and prompt bytes, sampling
parameters, and a two-GPU tensor-parallel runtime.  The scored bank therefore
contains $152\times10\times2=3{,}040$ candidates.  A separately recorded
greedy candidate per query is generation provenance and is not part of the
estimands.

EvalPlus Base-test passage is the observable verifier $Z$; hidden correctness
$Y$ requires both Base and Plus passage.  Scoring runs in a fixed EvalPlus
container with no network, a read-only root filesystem, dropped capabilities,
and bounded CPU, memory, processes, and temporary storage.

Before the first sealed evaluation, the generated TEST banks, scoring inputs,
scorer, runner, loader, and the then-current evaluator candidate were frozen.
The first publisher execution completed both sealed scoring replays and
computed the aggregate outcome using evaluator SHA
\texttt{4ea5053c6c33f034\allowbreak 83f7cbcc8872c8b6\allowbreak b851fd2946dd77c6\allowbreak d8bc3345c2d2005c};
however, strict JCS validation rejected publication because
\texttt{\_fraction\_record} represented its \texttt{decimal} field as a Python
float.  Thus no canonical result artifact was published, although the sealed
outcome had already been computed.  After this opening, the evaluator was
changed to SHA
\texttt{e1a41ceb2427726a\allowbreak dffd3ff7e62c2b27\allowbreak d5aae8718ba4527e\allowbreak 905575c9d346bca2}.
The change was limited to representing that field as a round-trip decimal
string, and the \texttt{\_query\_estimands} output was byte-identical.  The
original authority record is retained under \texttt{invalidated/}; the fixed
publisher was rerun and the canonical artifacts were frozen.  Accordingly,
the generation and scoring inputs were frozen before outcome opening, but the
final evaluator source closure was not.

The sealed TEST opening was not authorized by an independent external
reviewer.  Checkpoint F2-056 requested an external Claude review and named an
expected review artifact, but that review was never delivered.  After the raw
TEST banks were frozen, the authors issued local checkpoint F2-057 and used it
to authorize the fixed scoring and opening procedure.  Checkpoint F2-057 was
amended after the first invalidated sealed computation to record the corrected
evaluator digest; the version retained on disk therefore postdates that
computation.  We therefore classify the opening as self-authorized rather than
independently reviewed.  Its scope
was limited to the two preregistered Qwen TEST banks, CPU-only scoring, the
fixed zero-argument publisher, and one sealed opening; it did not authorize a
32B model, additional benchmark outcomes, pre-opening policy fitting, or
static prompt routing.

The final authority re-executed both raw banks in fresh containers and
constructed evaluator input only from those new receipts; stored child
evidence served as provenance but not as final truth.  All result files are
canonical JSON, content-addressed, new-only, and frozen after publication.
Recorded generation-token counts include the terminal stop token; no
token-based cost enters either estimand.

\subsection{Matched-information audit protocol}

The audit reuses the frozen MBPP+ raw and evaluator banks but asks a different
one-step question.  Every episode begins with one Qwen2.5-14B completion at
draw position $d$.  The second action either reroutes to Qwen2.5-7B or
resamples Qwen2.5-14B at the fixed position $(d+1)\bmod10$.  The primary costs
are the frozen nominal parameter-scale values 70 and 140 tenths; hence the
best fixed action and all cost tie-breaks are defined before fitting.  No
additional generation or correctness execution is required.

This construction assigns one second action to every archived episode,
irrespective of whether the observable verifier stopped.  Its population
therefore includes both observable stops and verifier rejections and differs
from the Gate-1 stopped population.  It is retained to describe the opened
banks and tested controllers, not to complete a causal stopped-history
sequence.

We evaluate three nested information regimes.  \textsc{Null} supplies only an
intercept.  \textsc{Structure} adds input and generated-token counts, byte and
line counts, finish reason, fenced-code extraction, delimiter balance, Python
parse status, expected-function coverage, and 13 AST node counts.
\textsc{Structure+Hash} adds a normalized 256-dimensional signed hash of code
token unigrams and bigrams.  Raw text, token IDs, reference solutions, hidden
tests, and correctness are not passed directly to the estimator.  The visible
projection is physically materialized by a function that accepts only the raw
generation bank; a separate evaluator loader accepts only score receipts.

Five deterministic outer query folds provide held-out estimates.  Within each
outer training partition, four inner query folds choose
$\lambda\in\{0.1,1,10,100\}$.  All ten positions of a query remain together.
Paired percentile intervals use 10,000 deterministic query-cluster bootstrap
replicates.  The complete analysis was run with 24 workers and repeated with
one worker; after normalizing only the recorded worker count, the canonical
reports are byte-identical.

The post-audit exchangeable reference independently swaps the two action
labels within every episode, preserves these exact outer folds, reselects each
fold's fixed action on its training queries, and recomputes the
realized-maximum gap.  We use $10^6$ deterministic permutations with seed
20260902.  This analysis was added after the adversarial audit and is reported
as calibration, not as a preregistered test.

The four-model sensitivity added post-outcome Llama and Nemotron banks as
second actions.  Because it was constructed after outcome opening and used an
equal synthetic cost solely for tie-breaking, it is confined to the
Supplementary Material as exploratory provenance and is not used to support
action heterogeneity, observability, or efficiency.

\subsection{Prespecified LiveCodeBench observability ladder}
\label{sec:ladder-setup}

We then ran one bounded follow-up on a new dataset, without changing the two-
Qwen action set.  LiveCodeBench \texttt{code\_generation\_lite}
v5+v6~\cite{jain2025livecodebench} is
pinned to one repository revision and one 342-task manifest: one pipeline
pilot, 136 FIT queries, 68 DEV queries, and 137 sealed TEST queries.  For each
scientific query, three blocks with distinct domain-separated deterministic
seeds begin with at most three
sequential Qwen2.5-14B attempts.  The first attempt passing all public tests
defines an observable stopped history; blocks with no public-test pass are not
controller episodes.  Counterfactual actions from disjoint seed namespaces
either resample Qwen2.5-14B or reroute to Qwen2.5-7B.  Full
public-plus-private correctness remains evaluator only.

For stopped block $b$ of eligible query $i$, let $Y^0_{ib}$ be the evaluator-only
correctness of the public-test-passing stop and $Y^a_{ib}$ the correctness of
second action $a\in\mathcal A=\{\textsc{Resample},\textsc{Reroute}\}$.  The
implemented episode utility and split value are
\begin{align}
 U_{ib}(a)&=\max\{Y^0_{ib},Y^a_{ib}\},
 \label{eq:ladder-utility}\\
 \widehat V_r(\pi)&=\frac{1}{|\mathcal I_r^+|}
  \sum_{i\in\mathcal I_r^+}\frac{1}{|\mathcal B_i|}
  \sum_{b\in\mathcal B_i}U_{ib}(\pi(Z_{ib})),
 \label{eq:ladder-value}
\end{align}
where $\mathcal I_r^+$ contains only split-$r$ queries with at least one
natural stop and $\mathcal B_i$ contains their stopped blocks.  Every policy
must take exactly one second action; stop-only is a descriptive reference, not
a member of $\mathcal A$.  Consequently the TEST estimand covers 70 eligible
queries, not end-to-end accuracy over all 137 TEST queries.

The cumulative information tiers are L1 structural and cost fields; L2 adds
token-confidence summaries; L3 adds public-verifier traces and a FIT-calibrated
verifier score; and L4 adds outcome-blind agreement over the observed pre-stop
history.  Every tier uses the same ridge-plus-isotonic selective family.  FIT
chooses the fixed action and estimates action advantage; DEV chooses a ridge
penalty and a nonnegative deviation threshold under a 105\% token-and-latency
gate.  TEST is evaluated once, with three blocks kept in their query cluster
and eligible queries weighted equally.  A separate visible-only materializer
was compared with the opened evaluator projection after removing evaluator
objects; the stopped-history rows matched exactly.  The controller, single-use
TEST-opening claim, isolated scorer chain, feature artifacts, and final result
are all SHA-256 bound in the Supplementary Material.

Ridge fitting, action-isotonic calibration, and verifier calibration give each
FIT stopped episode equal weight.  Fixed-action selection and all DEV/TEST
utility, token, latency, and inferential summaries instead first average
stopped episodes within query and then weight eligible queries equally as in
\eqref{eq:ladder-value}.  This difference is immaterial to the observed
all-zero FIT advantage target but is part of the estimand contract.

\begin{algorithm*}[t]
\caption{Support-aware observability ladder}
\label{alg:observability-ladder}
\begin{algorithmic}[1]
\State \textbf{Input:} FIT and DEV stopped histories; frozen tiers
$Z_1\subset\cdots\subset Z_4$; sealed TEST; actions $\mathcal A$
\State choose $a_{\mathrm{FIT}}$ by FIT equal-query value; break an exact tie
in favor of \textsc{Resample}
\For{$k\in\{1,2,3,4\}$}
  \State set $\Delta=U(a_{\mathrm{alt}})-U(a_{\mathrm{FIT}})$ on FIT stopped episodes
  \For{$\lambda\in\{10^{-4},10^{-3},\ldots,10^4\}$}
    \State obtain five query-fold out-of-fold ridge predictions; fit an
    isotonic calibrator; refit ridge on all FIT
    \For{$\tau\in\{0,.01,.02,\ldots,.50\}$}
      \State on DEV, deviate iff $\widehat\Delta_k>\tau$; retain candidates with
      total-token and second-action-latency ratios both $\le1.05$
    \EndFor
  \EndFor
  \State freeze the candidate with highest DEV equal-query utility, then
  fewest deviations, largest $\tau$, and largest $\lambda$
\EndFor
\State open TEST once; evaluate every frozen tier and a deterministic
deviation-count-matched random control
\State use 20,000 PCG64 query-cluster replicates (seed 20260830404; random
comparison seed $+1$) for one-sided studentized max-$T$ simultaneous bounds
\State a tier passes only if both gain lower bounds are strictly positive and
its cost gate passes; return the least informative passing tier, or the frozen
no-tested-signal verdict
\end{algorithmic}
\end{algorithm*}

\subsection{Preregistered BigCodeBench support-gate replication}
\label{sec:bcb-setup}

We subsequently froze a stricter replication on BigCodeBench v0.1.4
\cite{bigcodebench}.  The source parquet contains 1,140 tasks; pinned scorer,
qualification, and near-duplicate-component contracts retain 1,100 tasks and
assign one pipeline pilot, 483 FIT, 211 DEV, and 405 TEST tasks.  Each FIT task
has three history episodes of at most three Qwen2.5-14B attempts.  The first
attempt passing the frozen public-test subset is the stop event.  Only those
stopped episodes receive one disjoint Qwen2.5-14B resample and one
Qwen2.5-7B reroute; 351 FIT queries yield 895 stopped episodes, while 132 FIT
queries yield no action draw.  Models, revisions, prompts, seeds, and the
two-action set are fixed; no 32B model, additional family, or prompt router is
introduced.

The 483 FIT tasks are not a model-generated convenience sample.  After
qualification, near-duplicate prompt components are kept indivisible and a
source-only deterministic hash assigns components to roles before any action
outcome is scored.  Model generation then induces a second selection layer:
only tasks and episodes that reach the frozen public-test stop enter the
stopped FIT bank.  Finally, the Qwen14 resample, Qwen7 reroute, episode budget,
and official evaluator induce the action-difference labels.  If
$\vartheta$ denotes this complete frozen design, the Gate-2 sample is drawn
from the conditional design distribution
\begin{equation}\label{eq:fit-induced-distribution}
 \mathcal P_{\vartheta}
 =\mathcal L\!\left(I,\mathcal H,\Delta\mid
   \mathrm{qualified}=1,\ \mathrm{role}=\mathrm{FIT},\ S=1;
   \vartheta\right),
\end{equation}
not from BigCodeBench in isolation.  Here $I$ is task identity, $\mathcal H$ is the
stopped history, $S$ is the public-stop event, and $\Delta$ is
Eq.~\eqref{eq:gate-action-difference}.  This distinction controls the later
attribution: a failed gate diagnoses this frozen design distribution, not one
component as a cause.

For each stopped episode the evaluator records exactly one official full-suite
realization per arm and defines $U(a)=\max\{Y^0,Y^a\}$ as in
Eq.~\eqref{eq:ladder-utility}.  Public replay and the equality between
separately executed public/full/evaluator-only partitions are diagnostic, not
label-selection criteria.  This amendment is necessary because the official
tests need not share common evaluator randomness across separate processes;
retrying until partition equality holds would condition the accepted labels.
Code, digest, task, and container provenance remain fail-closed.

Before fitting L1--L4, the frozen gate requires at least 25 positive
($U(\textsc{Reroute})>U(\textsc{Resample})$) and 25 negative advantage
episodes, at least 20 distinct FIT queries of each sign, at least two queries
of each sign in every one of five folds, and one-sided exact 97.5\% lower
prevalence bounds above 1\%.  It also requires at least 200 stopped queries,
400 stopped episodes, and both stop-correctness classes to have at least 25
episodes and 20 queries.  Any failure returns
\texttt{STOP\_\allowbreak INSUFFICIENT\_\allowbreak TWO\_\allowbreak SIDED\_\allowbreak FIT\_\allowbreak SUPPORT};
L1--L4 are then not fitted and DEV/TEST outcomes remain unopened.

Formally, let $\mathcal E$ be the stopped FIT episodes, $\mathcal Q$ their
distinct queries, $N_s$ the number of episodes with advantage sign
$s\in\{-1,+1\}$, $Q_s$ the number of queries containing at least one such
episode, and $Q_{sf}$ the corresponding count in frozen fold $f$.  Let
$C_y^E$ and $C_y^Q$ be the episode and query counts for stop-correctness
class $y\in\{0,1\}$, and let
$L_s=\mathrm{Beta}^{-1}_{0.025}(Q_s,|\mathcal Q|-Q_s+1)$ be the one-sided
Clopper--Pearson lower bound.  The authorization bit is the conjunction
\begin{equation}
\begin{split}
G_{\mathrm{FIT}}={}&
\mathbf 1\{|\mathcal E|\ge400,\ |\mathcal Q|\ge200\}\\
&\land\!\bigwedge_{s\in\{-1,+1\}}
\bigl(\mathbf 1\{N_s\ge25,\ Q_s\ge20\}\\
&\qquad\qquad\land\ \mathbf 1\{\min_f Q_{sf}\ge2,\ L_s>0.01\}\bigr)\\
&\land\!\bigwedge_{y\in\{0,1\}}
\mathbf 1\{C_y^E\ge25,\ C_y^Q\ge20\}.
\end{split}
\label{eq:bcb-fit-gate}
\end{equation}
Thus a large total sample cannot compensate for a missing sign, fold, or stop
class.  Algorithm~\ref{alg:bcb-opening-guard} makes the corresponding
fail-closed transition explicit.

\begin{algorithm*}[t]
\caption{BigCodeBench FIT support gate and guarded opening}
\label{alg:bcb-opening-guard}
\begin{algorithmic}[1]
\Require frozen FIT banks, source/config digests, five folds, and gate minima
\State validate canonical bytes, coordinates, draw digests, container audits,
and zero DEV/TEST reads; otherwise stop with a provenance error
\For{each stopped FIT episode $(i,b)\in\mathcal E$}
  \State score history, resample, and reroute exactly once on the official
  full suite; compute $\Delta_{ib}$
  \State record public replay and partition agreement as diagnostics only;
  never retry or select a label from their values
\EndFor
\State compute $N_s,Q_s,Q_{sf},L_s,C_y^E,C_y^Q$ and $G_{\mathrm{FIT}}$ in
Eq.~\eqref{eq:bcb-fit-gate}; freeze the canonical gate receipt
\If{$G_{\mathrm{FIT}}=0$}
  \State lock L1--L4, DEV, and TEST
  \State \Return \texttt{STOP\_INSUFFICIENT\_TWO\_SIDED\_FIT\_SUPPORT}
\EndIf
\State fit L1--L4 on FIT and authorize DEV only; freeze the DEV-selected
controller and threshold
\State open TEST exactly once only if the separate opening guard validates
every frozen prerequisite
\end{algorithmic}
\end{algorithm*}

\subsection{Post-outcome model-family robustness extension}

After the prespecified Qwen14-to-Qwen7 outcome was opened, we froze a separate
two-arm robustness protocol.  It reuses the same 152 TEST queries and the same
frozen Qwen2.5-14B first-stage score bank, but independently generates and
rescores ten draws per query from
Llama-3.1-8B-Instruct and NVIDIA-Nemotron-Nano-9B-v2
~\cite{llama31herd,nemotronnano2}.  For alternative model
$m$, let $F_i$ be the number of Qwen14 draws among ten that pass Base but fail
Plus and let $S_{im}$ be the number of alternative draws that pass both.  The
extension estimator is
\begin{equation}
  \widehat\Delta_m
  = \frac{1}{N}\sum_{i=1}^{N}\frac{F_i}{10}\frac{S_{im}}{10}.
  \label{eq:model-family-extension}
\end{equation}
It is the same independent-pairing recovery estimand as the primary mechanism,
now evaluated with two non-Qwen alternatives.  We retain the fixed offset-$1$
pairing sensitivity.  Each arm receives a descriptive 95\% query-cluster
bootstrap interval; the familywise decision requires both two-sided 97.5\%
Bonferroni interval lower bounds to exceed zero over 10,000 deterministic
replicates.

The alternative banks use pinned Hub revisions and strict snapshot manifests.
They were generated sequentially on one RTX 4090 (TP=1), fully unloaded between
models, and rescored in the same fixed network-disabled EvalPlus image.  The
Nemotron prompt uses its documented native \texttt{/no\_think} directive and a
float32 Mamba state-space cache.  The original Qwen14 bank retains its frozen
TP=2 generation contract.  Thus this extension tests model-family robustness
of the finite-bank mechanism, not hardware-configuration invariance or an
independent repetition of the TEST split.

\subsection{Historical replay audit boundary}

The legacy 11-model replay over GSM8K, MATH-500, GPQA-Diamond, and HumanEval+
~\cite{gsm8k,mathdataset,lightman2024verify,gpqa,humaneval,evalplus}
is retained solely as subtractive provenance.  Its benchmark, scorer,
selector, cost, figure, and artifact details appear in the Supplementary
Material under ``Historical replay audit boundary'' and do not determine any
headline gate.

\section{Results}\label{sec:results}

The results are presented in gate order for exposition, not as a chain
composed across datasets.  MBPP+, LiveCodeBench, and BigCodeBench each have a
different stopping event and target population.  MBPP+ measures a bounded
recovery mechanism; LiveCodeBench retrospectively exposes zero FIT
action-difference support within its own population; BigCodeBench
prospectively applies a support gate within another population.  The
matched-information result is reported separately as an all-episode
descriptive audit.  Model-family and historical analyses are subordinate
robustness or audit evidence.

\subsection{Gate 1: Weak-verifier stopping debt is recoverable}
\label{sec:rq1-results}

\begin{table*}[t]
\centering
\caption{Fresh MBPP+ verifier-gated TEST results over 152 query clusters.
Intervals are deterministic 10,000-replicate query-cluster percentile
bootstrap intervals.  Effects are absolute percentage-point changes.}
\label{tab:verifier-gated-test}
\small
\setlength{\tabcolsep}{5pt}
\begin{tabularx}{\textwidth}{@{}>{\raggedright\arraybackslash}Xrr>{\raggedright\arraybackslash}p{0.23\textwidth}@{}}
\toprule
Estimand & Point & 95\% interval & Role \\
\midrule
14B false-positive-stop recovery by a uniformly selected 7B bank draw
  & $+2.592$ & $[+1.618,+3.664]$ & Prespecified primary; lower bound $>0$ \\
Fixed offset-$1$ pairing
  & $+2.500$ & $[+1.513,+3.618]$ & Pairing sensitivity \\
Post-audit same-model fixed-offset resample
  & $+2.039$ & --- & Descriptive control; no preregistered inference \\
After 7B Base-test rejection: fixed reroute to 14B minus leave-one-out fixed
resample of 7B
  & $+2.882$ & $[+0.931,+5.201]$
  & Prespecified underpowered secondary; non-stopped event \\
\bottomrule
\end{tabularx}
\end{table*}

The primary estimate in Table~\ref{tab:verifier-gated-test} is
$197/7{,}600=0.025921\ldots$, or $+2.592$ percentage points.  Its
query-cluster bootstrap interval lies entirely above zero, so the
prespecified decision is ``primary mechanism present: lower 95\% bound above
zero.''  The
fixed offset-$1$ construction gives $+2.500$ points with a similarly positive
interval.  The conclusion therefore does not depend on multiplying the two
within-query draw means in \eqref{eq:vg-primary}.  On the complementary
observable event---a 7B Base-test rejection, occurring on 294 of 1,520
draws---the prespecified secondary comparison in
Eq.~\eqref{eq:vg-secondary} favors fixed rerouting to 14B over leave-one-out
resampling of 7B by $3943/136800=+2.882$ points
$[+0.931,+5.201]$.  The event-weighted components are
$7.151$ points for rerouting and $4.269$ points for resampling.  This is a
prespecified but underpowered marginal fixed-action
ranking on a non-stopped event; it is neither a recovery estimand nor a
conditional selector.

A post-audit same-model control further limits the mechanism claim.  At the
same fixed offset, Qwen14 resampling recovers $31/1{,}520=+2.039$ points and
Qwen7 rerouting recovers $38/1{,}520=+2.500$ points, a descriptive difference
of only $+0.461$ points.  This comparison was not preregistered and has no
confirmatory interval; it shows that Gate 1 establishes recoverable stopping
debt for the 14B-first false-positive-stop primary, not a rerouting-specific
causal mechanism within that event.  The separate secondary $Z^7=0$
comparison has a different event and estimand.

The effect is conditional complementarity, not evidence that one model is
globally stronger.  Across all stochastic TEST draws, full-correctness rates
are 68.36\% for 7B (1,039/1,520) and 68.55\% for 14B (1,042/1,520).  Base-only
false-positive-stop rates are 12.30\% (187/1,520) and 13.03\% (198/1,520),
respectively.  The measured recovery arises because a draw from the
alternative model can be fully correct on queries where a Base-only verifier
would irreversibly accept an incorrect first-model completion.

\begin{figure*}[t]
\centering
\includegraphics[width=\textwidth]{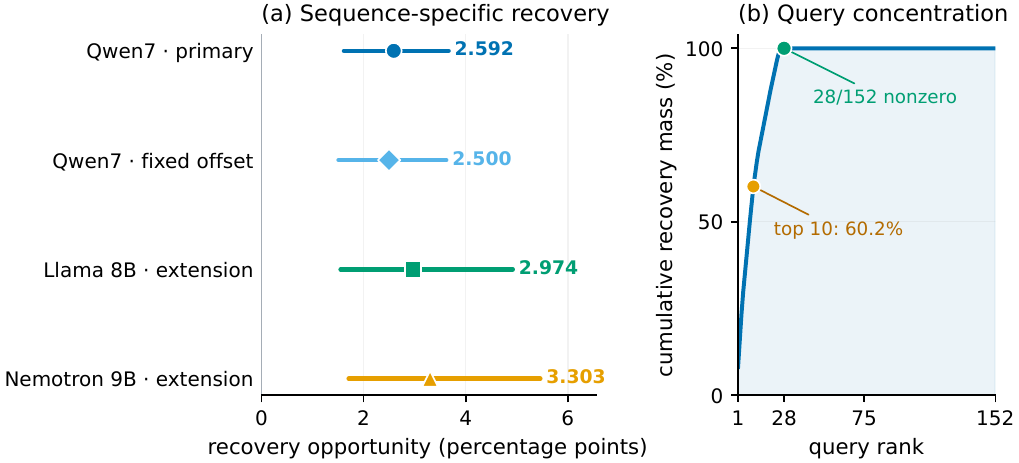}
\caption{Fresh MBPP+ verifier-gated recovery.  Panel (a) shows the
prespecified Qwen7 primary estimate and fixed-offset sensitivity with 95\%
query-cluster intervals.  Llama and Nemotron use familywise-corrected 97.5\%
intervals and are explicitly post-outcome extensions.  Panel (b) orders the
152 primary query-cluster contributions and shows their cumulative share of
the finite-bank recovery mass.}
\Description{A forest plot shows positive recovery estimates for Qwen7,
Llama 8B, and Nemotron 9B, with all intervals above zero.  A cumulative curve
shows that 28 of 152 query clusters contribute nonzero primary recovery mass
and that the top ten account for about sixty percent.}
\label{fig:fresh-recovery}
\end{figure*}

The mechanism is concentrated rather than diffuse.  Only 28 of the 152 TEST
query clusters have a nonzero contribution to the primary finite-bank
estimand; the ten largest account for 60.2\% of its total mass
(Figure~\ref{fig:fresh-recovery}b).  This concentration reinforces the
sequence-specific interpretation and is why uncertainty resamples complete
query clusters instead of treating the 7,600 draw pairs as independent.  It
does not license a selective controller class: the contributing clusters were
identified using hidden correctness and their predictability from legal
runtime information is not tested by Gate 1.

Because generation used a fixed draw-index-to-batch-position map, we also
tested draw-position exchangeability before interpreting draw averages.
Cochran's $Q$ tests for correctness and Friedman tests for generated-token
count were corrected together by Holm's procedure.  None of the four nulls
was rejected familywise: the unadjusted $p$-values were 0.808 and 0.041 for
7B/14B correctness and 0.037 and 0.614 for 7B/14B token count.  This is no
proof of exchangeability---two unadjusted diagnostics are borderline---so the
fixed-offset result remains the more conservative pairing check.

The TEST publication was produced only after both 1,520-row score banks were
re-executed in fresh isolated containers and their receipts were rebound to a
new authority.  This establishes the finite-bank mechanism result specified
in Section~\ref{sec:mechanism-estimand}.  It does not compare an implemented
dynamic controller against a cost-matched deployment baseline, and it does
not support extrapolation to a 32B model or another benchmark.

\subsection{Gate 1 robustness across two model families}

\begin{table*}[t]
\centering
\caption{Post-outcome model-family robustness extension over the same 152
MBPP+ TEST query clusters and frozen Qwen2.5-14B first-stage bank.  Recovery
effects are absolute percentage points.  The 97.5\% intervals implement the
prespecified Bonferroni familywise rule for the two extension arms.}
\label{tab:model-family-extension}
\footnotesize
\setlength{\tabcolsep}{3.5pt}
\begin{tabularx}{\textwidth}{@{}>{\raggedright\arraybackslash}Xrrrr@{}}
\toprule
Alternative model & Full-correct & Recovery & 95\% interval & 97.5\% interval \\
\midrule
Llama-3.1-8B-Instruct
  & 61.18\% & $+2.974$ & $[+1.671,+4.632]$ & $[+1.553,+4.921]$ \\
NVIDIA-Nemotron-Nano-9B-v2
  & 71.71\% & $+3.303$ & $[+1.868,+5.145]$ & $[+1.711,+5.461]$ \\
\bottomrule
\end{tabularx}
\begin{minipage}{0.96\linewidth}\footnotesize\raggedright
The fixed offset-$1$ sensitivity estimates are $+3.092$
($97.5\%$ interval $[+1.579,+5.132]$) for Llama and $+3.487$
($[+1.908,+5.658]$) for Nemotron.  Both familywise lower bounds exceed zero,
yielding \texttt{ROBUSTNESS\_SUPPORTED\_BOTH\_ARMS}.  This extension was
specified after the original Qwen14-to-Qwen7 outcome was known and is not an
independent TEST replication.
\end{minipage}
\end{table*}

Table~\ref{tab:model-family-extension} reports the deliberately bounded
post-outcome extension.  With the first-stage Qwen14 bank held fixed, the
independent-pairing recovery opportunity is $113/3{,}800=+2.974$ percentage
points for Llama and $251/7{,}600=+3.303$ points for Nemotron.  Both
familywise-corrected 97.5\% query-cluster intervals lie entirely above zero,
and both fixed offset-$1$ sensitivities remain positive.  The frozen decision
is \texttt{ROBUSTNESS\_SUPPORTED\_BOTH\_ARMS}.  Because the arms were specified
after the primary outcome, share its TEST queries and Qwen14 stop bank, and use
TP=1 alternative generation, they test only Gate-1 model-family persistence;
they are neither independent confirmation nor a wider selector study.

\subsection{Gate 2 diagnosis: the sealed ladder fails first at FIT action support}
\label{sec:ladder-results}
\label{sec:rq3-results}

The follow-up ladder was trained before a single opening of its distinct
LiveCodeBench TEST split.  Among 411 candidate TEST blocks, 155 naturally
stopped histories occur in 70 of 137 queries.  Of these stops, 106 are fully
correct and 49 are stopping-debt episodes.  In the latter set, neither action
rescues 44 episodes, resampling alone rescues two, rerouting alone rescues
three, and both-action rescue occurs zero times.  Thus TEST contains five
realized exclusive rescues among 49 incorrect stops.  These finite-bank
disagreements are support counts, not evidence of population action
heterogeneity.

\begin{table*}[t]
\centering
\caption{Sealed LiveCodeBench observability-ladder result.  Values first
average stopped episodes within query and then average the 70 eligible
queries.  Reroute is a descriptive fixed-action sensitivity; the frozen
primary baseline remains the FIT-selected resample action.}
\label{tab:ladder-test}
\small
\begin{tabularx}{\textwidth}{@{}>{\raggedright\arraybackslash}Xrrl@{}}
\toprule
Rule & TEST value & Learned deviations & Inferential role \\
\midrule
Frozen fixed resample & 0.595238 & --- & preregistered baseline \\
Fixed reroute & 0.611905 & --- & descriptive sensitivity \\
Evaluator-only two-action realized maximum & 0.621429 & --- & clairvoyant ceiling \\
L1 structural controller & 0.595238 & 0 & frozen learned policy \\
L2 $+$ token confidence & 0.595238 & 0 & frozen learned policy \\
L3 $+$ verifier trace/calibration & 0.595238 & 0 & frozen learned policy \\
L4 $+$ stopped-history agreement & 0.595238 & 0 & frozen learned policy \\
\bottomrule
\end{tabularx}
\end{table*}

\begin{figure*}[t]
\centering
\includegraphics[width=\textwidth]{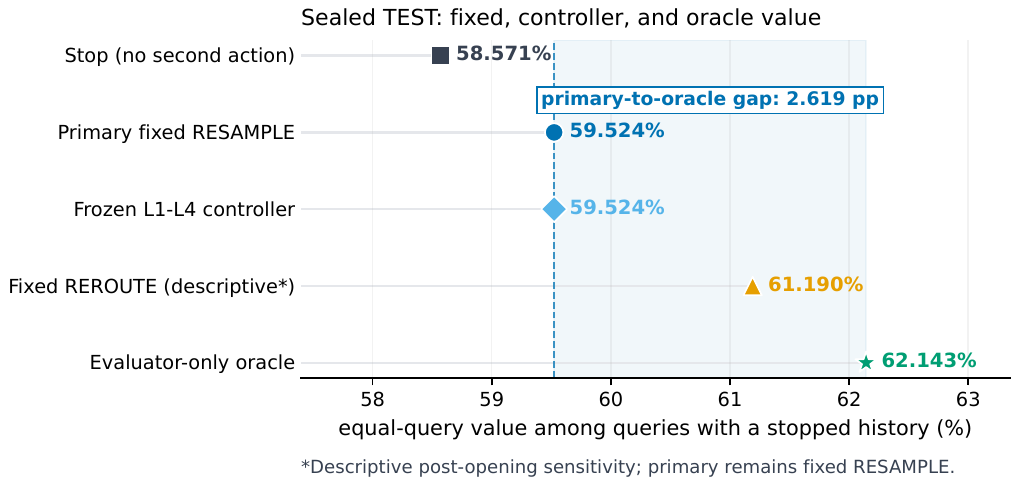}
\caption{Sealed LiveCodeBench TEST value among the 70 queries with at least
one stopped history, using equal query weight and equal stopped-episode weight
within query.  The frozen L1--L4 controllers overlap the preregistered fixed-
resample baseline because every tier makes zero deviations.  Fixed reroute is
a post-opening descriptive sensitivity, and the evaluator-only oracle measures
opportunity rather than achievable controller value.  The shaded 2.619-point
oracle--resample gap contains both fixed-action transport loss and residual
evaluator-only information; it is neither a deployable gain nor a pure information
gap.}
\Description{A horizontal value plot shows stop-only value 58.571 percent,
fixed resample and every frozen controller at 59.524 percent, descriptive
fixed reroute at 61.190 percent, and the evaluator-only oracle at 62.143
percent.}
\label{fig:ladder-value-frontier}
\end{figure*}

Figure~\ref{fig:ladder-value-frontier} separates the frozen controller value
from both the descriptive fixed-reroute sensitivity and the evaluator-only
ceiling.  The decisive fact precedes TEST.  On FIT, 66 of 165 stops are incorrect, but
neither action rescues any of them: the alternative-minus-fixed utility target
is exactly zero for every FIT episode.  The utility-only tie rule therefore
freezes resampling as the fixed action.  A ridge fit to the all-zero target,
followed by isotonic calibration and a strict positive-over-nonnegative-
threshold decision, predicts no deviation for any input.  DEV cannot repair
this absence of FIT support.  Although five of its 41 incorrect stops are
rescued relative to stop-only, the alternative-over-fixed target has counts
$(\Delta>0,\Delta<0,\Delta=0)=(0,4,85)$ on DEV; positive deviation support appears only on
TEST, with counts $(3,2,150)$.  Consequently all four frozen tiers make zero
TEST deviations and attain the frozen resample value 0.595238.  Descriptively,
TEST-best fixed reroute is 1.667 points higher and the realized maximum is
another 0.952 points higher.  Both are selected empirical maxima, so this
arithmetic is not an estimate of the population decomposition in
Eq.~\eqref{eq:four-value-decomposition}.  Because the learner never deviates,
the run cannot estimate observable action value.

\begin{figure*}[t]
\centering
\includegraphics[width=\textwidth]{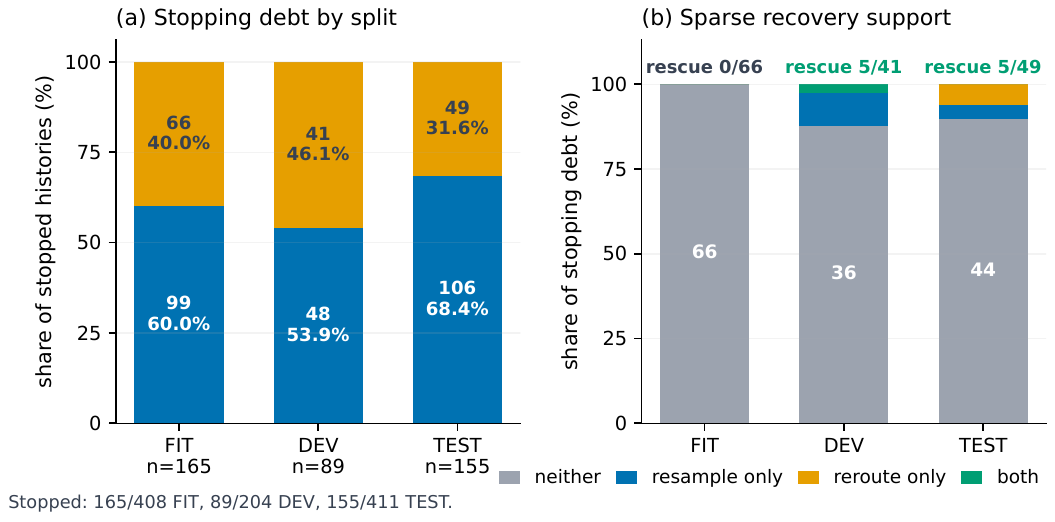}
\caption{Stopped-history composition and action-specific rescue support across
FIT, DEV, and sealed TEST.  Panel (a) reports episode-weighted correct-stop and
stopping-debt shares for descriptive support accounting; these are not the
primary equal-query values.  Panel (b) shows that FIT has no rescue among 66
incorrect stops, whereas DEV and TEST each contain five rescues among 41 and
49 incorrect stops.  These frozen-sample differences document support
nontransport; they do not test the nested feature tiers.}
\Description{Two panels show correct versus incorrect stopped histories and
the resample-only, reroute-only, both-action, or neither-action rescue patterns
across FIT, DEV, and TEST.  FIT has zero rescues; DEV and TEST each have five.}
\label{fig:ladder-support-transport}
\end{figure*}

Figure~\ref{fig:ladder-support-transport} shows that stopping debt is present
in every split while action-specific rescue is absent only on FIT.  This is a
FIT-to-TEST action-support and transport failure, not evidence that
L1--L4 features are generally uninformative.  The algorithm had no nonzero
FIT action-advantage target from which to estimate a feature relationship, so
the sealed result cannot compare the relative sufficiency of the four tiers.
The exact machine verdict
\nolinkurl{OBSERVABILITY_LADDER_NO_TESTED_SIGNAL_BEATS_FIXED} remains the
correct protocol outcome, while the scientific diagnosis is the narrower
finite-sample FIT support and action-ranking transport failure.  After each
OPEN episode row's evaluator field is removed, all 155 scientific
stopped-history rows match their VISIBLE counterparts exactly.  The artifacts
differ in their \texttt{episodes}, \texttt{mode}, and
\texttt{test\_scoring\_authority} fields, but both bind the same materializer
source closure,
\texttt{8a049d9814aa516baee7b1aeebc04454}\allowbreak
\texttt{cb817418f87606b8aaeb8b34e344d3a1}.
Thus row-level feature noninterference passes without claiming byte identity
of the complete artifacts.

The frozen tie also matters.  Always rerouting has a 0.611905 descriptive
TEST value, 1.667 points above the primary resample baseline, but the
comparison is post-opening and descriptive.  It does not establish population
reroute superiority or authorize replacing the preregistered baseline after
outcome opening.  Relative to this descriptive reroute rule, the oracle's
remaining point gap is only 0.952 points.  Both reference choices are
therefore reported; neither changes the zero-deviation controller result.

\subsection{Gate 2 prospective test: BigCodeBench stops before opening}
\label{sec:bcb-results}

The stricter replication reaches its intended terminal negative branch before
any feature fit or held-out opening.  All 895 FIT checkpoints form one
contiguous canonical prefix and bind 351 stopped queries, three draw digests
per episode, one owned scorer-container lifecycle per job, and zero DEV/TEST
flags.  The official full-suite history label finds 221 incorrect stops in
118 queries, i.e., 24.693\% of stopped episodes and 33.618\% of eligible
queries contain stopping debt.  Among those 221 episodes, neither action
rescues 167, resampling alone rescues 22, rerouting alone rescues 23, and both
rescue nine.  Realized action-specific rescue therefore exists, but the sign needed for a
selector is rare.

\begin{table*}[t]
\centering
\caption{Terminal BigCodeBench FIT support audit.  The sign minima were frozen
before full-outcome scoring.  Failure of any hard minimum blocks L1--L4, DEV,
and TEST regardless of the other passing checks.}
\label{tab:bcb-fit-gate}
\small
\begin{tabularx}{\textwidth}{@{}>{\raggedright\arraybackslash}Xrrl@{}}
\toprule
Gate quantity & Observed & Minimum & Result \\
\midrule
Stopped queries / episodes & 351 / 895 & 200 / 400 & pass \\
Reroute-advantage episodes / queries & 23 / 19 & 25 / 20 & fail / fail \\
Resample-advantage episodes / queries & 22 / 19 & 25 / 20 & fail / fail \\
Per-fold queries of each sign & $\ge2$ & 2 & pass \\
One-sided exact lower prevalence, each sign & 3.290\% & $>1$\% & pass \\
Stop-correct / stop-incorrect support & 674 / 221 episodes & 25 each & pass \\
\bottomrule
\end{tabularx}
\end{table*}

The reroute-over-resample sign occurs in 23 episodes from 19 queries; the
reverse sign occurs in 22 episodes from 19 queries.  Thus the run misses the
episode minima by two and three and each query minimum by one.  Both signs are
present in every fold and their exact one-sided lower prevalence bounds are
3.290\%, but those facts do not replace the preregistered absolute minima.
The exact machine verdict is
\texttt{STOP\_INSUFFICIENT\_TWO\_SIDED\_FIT\_SUPPORT}.  Consequently no
L1--L4 controller, DEV selection, or TEST opening exists for this experiment.
This is a support result, not a feature-sufficiency or generalization result.

The direct bottleneck is differential action support, not an absence of tasks
or stopping debt.  Fifty-four incorrect stops are rescued by at least one
action, but nine are rescued by both; only the 22 resample-only and 23
reroute-only episodes contribute a sign to $\Delta$, and those signs occur in only
36 queries.  For $s\in\{-1,+1\}$, write
\begin{equation}\label{eq:expected-fit-support}
 p_s(\vartheta)=\Pr_{\mathcal P_\vartheta}(\Delta=s),
 \qquad
 \mathbb E_\vartheta[N_s\mid |\mathcal E|=n]=n p_s(\vartheta).
\end{equation}
The observed count therefore combines the sign rate induced by the complete
design $\vartheta$ with the finite stopped-episode budget.  One frozen design
cannot identify whether the benchmark distribution, public-stop rule, model
pair, episode budget, or evaluator realization caused the low count.  The
authors own this design choice; the data should not be described as having
``failed'' independently of it.  Component-level attribution requires a new
outcome-blind, preregistered sensitivity or factorial study, not a post-gate
repair of candidate-011.

\begin{figure*}[t]
\centering
\includegraphics[width=\textwidth]{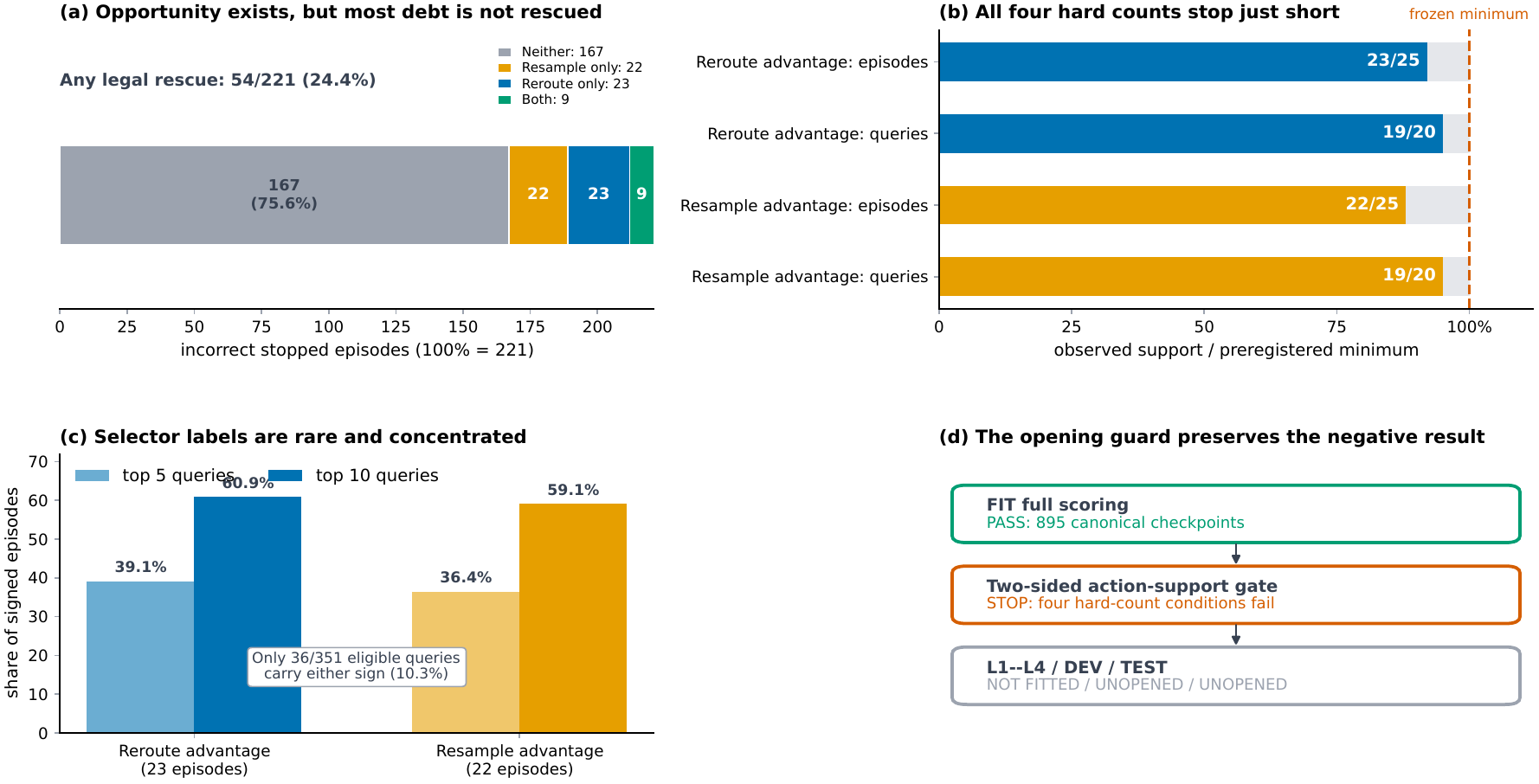}
\caption{BigCodeBench realized recovery, support, distribution, and guarded
decision.  Panel (a) partitions the 221 incorrect stops and shows that 54
(24.4\%) admit at least one legal rescue.  Panel (b) normalizes each signed
episode/query count by its own preregistered minimum, avoiding a mixed-unit
axis.  Panel (c) shows that either sign appears in only 36 of 351 eligible
queries.  Panel (d) gives the fail-closed path: full FIT scoring passes, the
two-sided support gate stops, and L1--L4/DEV/TEST remain unopened.}
\Description{Four panels show the composition of 221 incorrect stops, four
signed counts narrowly below their frozen thresholds, the distribution of
rare action-advantage labels, and a guarded pipeline ending before model
fitting or held-out opening.}
\label{fig:bcb-fit-support}
\end{figure*}

Post-gate descriptive values do not amend this verdict.  With equal query
weight, stop-only, fixed resample, fixed reroute, and the two-action oracle are
0.739791, 0.771130, 0.776353, and 0.799145.  Reroute is 0.522 points above
resample in this FIT bank and uses 169.995 fewer generated tokens per
equal-query action draw (339.710 versus 509.705).  Its separately recorded
batch wall time is also lower (3.899 versus 9.129 seconds per draw), but these
two banks were timed in separate runs, so latency is descriptive rather than a
causal service comparison.  The sample-Pareto ordering of the fixed actions
cannot authorize a static prompt router or bypass the failed two-sided support
gate.

The rare signs occur in only 36 of 351 eligible queries; two queries contain
both signs across episodes.  The ten most sign-bearing queries account for
60.9\% of reroute-favoring and 59.1\% of resample-favoring episodes.  Because
these ten are selected from only 19 sign-bearing queries per direction, the
shares are descriptive and are not used as evidence of exceptional
concentration or a selective policy class.  Evaluator diagnostics
find zero public-stop replay discrepancies but five partition inconsistencies
across four jobs when public/full/evaluator-only suites are executed
separately.  Excluding those four diagnostic jobs leaves 22/19 positive and
21/18 negative episode/query counts, which still fail the frozen minima.
Neither diagnostic exclusion nor episode-versus-query weighting changes the
terminal scientific branch.

\subsection{Descriptive all-episode audit: realized maximum and negative selector result}
\label{sec:rq2-results}

\begin{table*}[t]
\centering
\caption{Descriptive all-episode MBPP+ matched audit.  The original paired
query-cluster intervals describe finite-bank sampling variation; they do not
calibrate the bias induced by taking a pointwise realized maximum.  The
exchangeable reference uses $10^6$ within-episode action-label permutations,
the exact recorded outer folds, and foldwise reselection of the fixed action.}
\label{tab:dynamic-audit}
\scriptsize
\setlength{\tabcolsep}{1.5pt}
\renewcommand{\arraystretch}{1.12}
\begin{tabularx}{0.975\textwidth}{@{}
 p{0.24\textwidth}
>{\centering\arraybackslash}p{0.08\textwidth}
>{\centering\arraybackslash}p{0.19\textwidth}
>{\centering\arraybackslash}p{0.19\textwidth}
 X@{}}
\toprule
Quantity & \shortstack{Estimate\\(pp)} & \shortstack{Bootstrap 95\%\\CI (pp)}
& \shortstack{Exchangeable 95\%\\range (pp)} & Interpretation \\
\midrule
Realized maximum $-$ cross-fitted fixed reroute
  & $+2.697$ & $[+1.711,+3.816]$ & ---
  & Descriptive maximum; not a heterogeneity test \\
Exchangeable-actions reference
  & mean $+3.158$ & --- & $[+2.434,+3.947]$
  & Observed statistic is at the 18.283rd null percentile \\
Structure controller $-$ fixed reroute
  & $0.000$ & $[-0.987,+1.053]$ & ---
  & Neither tested selector improves on the fixed action \\
Structure+hash controller $-$ fixed reroute
  & $-0.197$ & $[-1.184,+0.789]$ & ---
  & Neither tested selector improves on the fixed action \\
\bottomrule
\end{tabularx}
\end{table*}

\begin{figure*}[t]
\centering
\includegraphics[width=\textwidth]{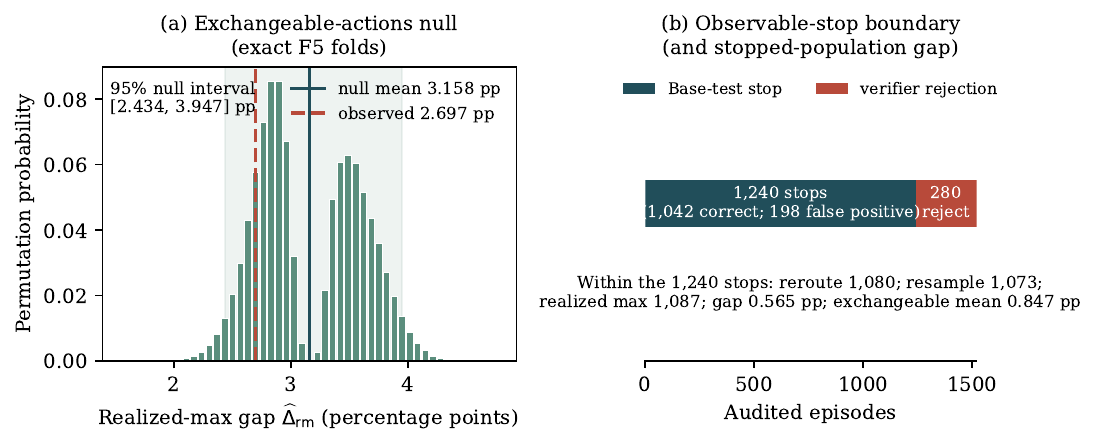}
\caption{Post-audit calibration and population boundary.  Panel (a) shows the
exchangeable-actions reference distribution under the exact recorded F5 outer
folds.  The realized-maximum statistic contains no observable-history signal;
the reference calibrates marginal label symmetry rather than conditional
heterogeneity.  Panel (b) partitions the 1,520 audited episodes into 1,240
observable Base-test stops and 280 verifier rejections on which no stop
occurred.}
\Description{A permutation histogram has mean 3.158 percentage points and a
95 percent interval from 2.434 to 3.947, with the observed 2.697-point value
inside the interval.  The second panel separates observable stops from
verifier rejections and reports the stopped-population diagnostic.}
\label{fig:exchangeable-action-null}
\end{figure*}

The first Qwen14 call is correct on 68.553\% of episodes.  Fixed rerouting
succeeds on 1,167/1,520 episodes and fixed resampling on 1,153/1,520; every
outer fold therefore selects rerouting.  The pointwise realized maximum
succeeds on 1,208/1,520 episodes, giving
$\widehat\Delta_{\mathrm{rm}}=41/1{,}520=+2.697$ points relative to the
cross-fitted fixed action.

This value follows from an identity, not from an observable opportunity
signal.  The two action utilities disagree on 96 episodes: rerouting alone
succeeds on 55 and resampling alone on 41.  Hence
\begin{equation}
 \widehat\Delta_{\mathrm{rm}}
 =\frac{96-|55-41|}{2\times1{,}520}
 =\frac{41}{1{,}520}.
\end{equation}
The corresponding analytic exchangeable mean is
$96/(2\times1{,}520)=3.157895$ points.  One million exact-fold permutations
give mean 3.157851 points, Monte Carlo standard error 0.000448 points, and a
95\% reference interval $[2.434211,3.947368]$.  The observed statistic is at
the 18.283rd percentile.  This placement calibrates marginal action-label
symmetry; it does not test conditional heterogeneity.

The discordance is concentrated outside the stopped population.  Among 1,042
correct stops there are no discordant utilities.  The 198 false-positive stops
contain 21 discordances---14 reroute-only and seven resample-only---whereas the
280 verifier rejections contain 75---41 reroute-only and 34 resample-only.
Thus 75/96 discordances arise where no stop occurred.  Restricted
descriptively to the observable stop $Z^{14}=1$, rerouting succeeds on 1,080
episodes, resampling on 1,073, and their pointwise maximum on 1,087.  The
realized-maximum gap over the better fixed action is therefore
$7/1{,}240=0.564516$ points, compared with an analytic exchangeable mean of
$21/2{,}480=0.846774$ points.

The tested controllers provide the relevant negative policy comparison.
Runtime structure gains $0.000$ points $[-0.987,+1.053]$ over cross-fitted
fixed rerouting, and adding the signed token-hash sketch gains $-0.197$ points
$[-1.184,+0.789]$.  These results establish only that neither tested
outcome-blind selector improves on the fixed action.  They do not prove equal
action values, absence of conditional heterogeneity for every legal signal,
or an observability bottleneck.

Finally, the audit targets an all-episode mixture rather than the observable
Gate-1 stopped population and is trained within an already opened TEST bank.
It is therefore not an instance of Eq.~\eqref{eq:four-values} for the MBPP+
stopping event and supplies no Gate-3 evidence for that chain.  None of these
opened-bank results can bypass the prospective BigCodeBench Gate-2 stop.

\subsection{Post-opening cost sensitivity}
\label{sec:cost-boundary}

Because the experiment did not observe a production price vector, cost
sensitivity is reported only through symbolic break-even conditions.  On the
eligible sealed-TEST queries, fixed rerouting is descriptively better than
fixed resampling on the three recorded coordinates---evaluator utility,
generated-output-token count, and amortized second-action generation
latency---but unmeasured action overhead remains symbolic.  This is not a
production Pareto claim and does not amend the FIT-frozen primary action.
Exact rational arithmetic, measurement limitations, the cost-boundary figure,
and the SHA-bound receipt appear in the Supplementary Material.

\subsection{Historical replay as subtractive audit evidence}

The historical replay explains why broad allocator, static-routing, and
truth-guided-verifier claims were removed; its detailed results and
observable-verifier figure are reported only in the Supplementary Material.

\section{Discussion}\label{sec:discussion}

\textbf{One thesis, three non-composable empirical applications.}
The paper's thesis is that recovery, action-ranking support, and held-out
policy value must be established in that order on one common stopped
population before a dynamic selector is licensed.  The datasets illustrate
different boundaries of this rule but do not jointly pass its gates.  MBPP+
establishes recovery under its Base-test stop; LiveCodeBench retrospectively
shows a zero-support FIT bank under a different public-test stop; and
BigCodeBench prospectively fails its own Gate-2 support threshold under a third
stop definition.  No evidence is multiplied across these populations, and the
all-episode MBPP+ audit cannot supply the missing Gate 3 for any of them.

\textbf{Gate 1 licenses recovery, not policy.}
The primary MBPP+ estimate is recoverable verifier-induced stopping debt under
one fixed sequence, verifier, model pair, and finite bank.  Its concentration
in 28 of 152 queries justifies query-cluster inference but does not select a
policy class, because those clusters are identified with hidden correctness.
The post-outcome Llama and Nemotron arms support mechanism persistence only;
they do not widen the legal action set for controller claims.

\textbf{Gate 2 separates action abundance from identification.}
LiveCodeBench contains stopping debt in every split, but FIT has no exclusive
rescue and therefore no nonzero target for L1--L4.  BigCodeBench contains much
more opportunity: 221 incorrect stops and 54 rescues by at least one action.
Yet nine shared rescues do not rank the actions, leaving only 23 reroute-only
and 22 resample-only episodes across 36 sign-bearing queries.  This is
\emph{realized recovery without prespecified identification support}: rescue
relative to STOP does not guarantee two-sided support for choosing between actions.  Missing
the 25-episode/20-query minima blocks all later claims even though the margin
is narrow.

\textbf{FIT attribution belongs to the design.}
Equation~\eqref{eq:fit-induced-distribution} makes the stopped FIT bank an
output of the qualified benchmark population, deterministic role split,
public-stop selection, Qwen14 history generator, Qwen14/Qwen7 actions,
episode budget, and official evaluator contract.  Candidate-011 observes only
their joint realization.  It cannot causally apportion the shortfall among
BigCodeBench, the stop rule, the action pair, the budget, or evaluator
randomness.  The licensed statement is therefore that \emph{our frozen design
did not supply the preregistered two-sided support}; saying merely that ``the
FIT sample was insufficient'' wrongly makes the data appear to be an
independent actor.  A larger FIT allocation might cross the count threshold,
but establishing that requires a new preregistered generation, not an
outcome-conditioned extension.

\textbf{No prospective population reaches Gate 3.}
On the opened MBPP+ bank, the realized-maximum statistic contains no
observable-history signal and lies inside its exact-fold exchangeable
reference distribution.  Neither declared outcome-blind selector improves on
the fixed action, and the audit does not target a stopped-history population.
BigCodeBench never reaches a held-out controller comparison.  This study
therefore supplies no completed Gate-3 evidence on any prospective
stopped-history population.

\textbf{Why the model pool is closed.}
The practical procedure is: measure recovery; compare fixed actions; validate
two-sided FIT support; and only then fit a selector and demand held-out quality
or efficiency.  Failure selects the simpler preceding policy.  Adding a 32B
model, another family, or static prompt routing after a failed gate changes the
stopping event, action set, reference policy, and estimand; it is a new study,
not a repair.  This is why the model pool is closed.  Exact provenance and
negative branches make the study useful: they let the system return ``use a
fixed action'' or ``policy learning is unidentified'' without manufacturing a
dynamic-routing result.

\section{Limitations}\label{sec:limitations}

The three gates reduce overclaiming but do not create population identification
from finite banks.  Gate 1 conditions on one MBPP+ hash split, pinned model
snapshots, ten draws, and a one-sided Base/Plus verifier.  It measures recovery
availability, not end-to-end accuracy, production latency, or the net value of
always buying a second call.  The Llama/Nemotron arms share the TEST queries
and first-stage bank and were added post-outcome.

The matched-information analysis is query-cross-fitted within an opened TEST
population, not a once-trained controller evaluated on a second untouched
sample.  It targets an all-episode mixture rather than a runtime-observable
stopping event.  Its realized-maximum statistic is signal-free, and the
post-audit exchangeable reference calibrates marginal label symmetry rather
than conditional heterogeneity.  The controller comparisons are limited to
the declared ridge learner and legal feature regimes.  LiveCodeBench uses a
true FIT/DEV/sealed-TEST split,
but its all-zero FIT target makes L1--L4 identical; their equal TEST values do
not rank feature tiers or estimate the Bayes value of their information.

BigCodeBench candidate-011 is intentionally FIT-only after Gate 2 fails.  It
cannot estimate DEV selection, TEST generalization, or feature sufficiency.
The 25-episode/20-query signed minima are our preregistered evidence standard,
not a natural constant, and the one-query shortfalls do not establish absence
of population heterogeneity.  Four of 895 jobs show five diagnostic partition
inconsistencies under separate evaluator realizations; excluding them still
fails the gate, but one official realization per arm retains outcome noise.
Retry-until-consistent scoring would instead create selection bias.

Finally, Eq.~\eqref{eq:fit-induced-distribution} identifies only the joint
frozen design.  Separating task-distribution, stop-selection, model-pair,
episode-budget, and evaluator contributions requires new preregistered
orthogonal experiments.  Generated-token comparisons are exact for the frozen
draws, whereas separately timed action banks are descriptive and omit
prefill, verifier, switching, queueing, residency, dollar, and energy costs.
Historical 11-model replay and its incomplete generation provenance remain
Supplementary audit context rather than evidence for any headline gate.

\section{Conclusion}\label{sec:conclusion}

This study turns ``resample or reroute?'' into an ordered identification
problem and records a negative answer for the available evidence.  A dynamic
selector is scientifically licensed only when one common stopped-history
population exhibits recoverable debt, FIT supplies two-sided action-ranking
support, and an outcome-blind policy beats the frozen fixed action on held-out
quality or efficiency.

Fresh MBPP+ evidence establishes a bounded recovery result: after a
Qwen2.5-14B Base-only false-positive stop, a uniformly selected Qwen2.5-7B draw
contributes $+2.592$ event-weighted percentage points
$[+1.618,+3.664]$.  On the separate non-stopped event of a 7B Base-test
rejection, the prespecified secondary fixed-action comparison favors rerouting
to 14B over leave-one-out resampling of 7B by $+2.882$ points
$[+0.931,+5.201]$.  The latter is a marginal fixed-action ranking, not a
selector.

The all-episode MBPP+ stress audit does not complete the stopped-history chain.
Its realized-maximum statistic contains no observable-history signal, and 75
of its 96 discordances arise on verifier rejections where no stop occurred.
Fixed rerouting is descriptively better than fixed resampling, but the tested
outcome-blind controllers gain only $0.000$ and $-0.197$ points.  The licensed
result is therefore that no tested selector identified when to deviate from
the fixed action---not that the actions are indistinguishable.

LiveCodeBench and BigCodeBench are separate applications rather than later
links in the MBPP+ chain.  BigCodeBench is the prospective population in which
recoverable debt and a failed support gate coexist: its two action signs reach
only 23/19 and 22/19 episodes/queries against minima of 25/20.  Consequently
L1--L4 are not fitted and 211 DEV and 405 TEST outcomes remain unopened.

Across these separate populations, realized recovery does not guarantee
action-ranking support, and a pointwise realized maximum does not establish
heterogeneous expected action value.  The frozen designs show recoverable
stopping debt but do not identify when an outcome-blind policy should resample
rather than reroute.  Until all three gates pass on one common stopped
population, the evidence-supported output remains a fixed action or a stopped
experiment.

\section*{Artifact Statement}
The accompanying analysis package contains the analysis source, schemas, frozen
machine-readable result artifacts, figure builders, and SHA-256 bindings
needed to reproduce the reported paper-facing tables and figures from the
locally archived finite banks.  Three provenance-bearing items remain
remote-only and are not redistributed in this package: the LiveCodeBench
FIT/DEV feature bodies, the BigCodeBench qualification authority, and the
independent one-worker report.  Their recorded receipts and digests are
preserved where available, but their bytes cannot be independently
reconstructed from the local package alone.  BigCodeBench DEV and TEST
outcomes were never opened and are not represented as available results.

\section*{Disclosures}
During the preparation of this work, the author used OpenAI Codex and
Anthropic Claude to assist with code implementation, adversarial checks,
LaTeX drafting, and language editing.  The author independently determined the
research questions, protocols, estimands, evidence boundaries, and final
claims; reviewed and edited all assisted output; and takes full responsibility
for the code, data, figures, analysis, and contents of the article.  All
reported numerical results were produced by frozen analysis and rescoring code
in recorded runtimes or containers over SHA-bound artifacts rather than by
generative text output.

The author is the founder of Krixvon; no Krixvon product, data, funding, or
other resource was used in this research.

This research received no specific grant from funding agencies in the public,
commercial, or not-for-profit sectors.

\bibliographystyle{IEEEtran}
\bibliography{refs}

@misc{routinggapreal,
  author = {Chen, Teng-Ruei},
  title  = {How Much of the Routing Gap Is Real? Decomposing the Router-to-Oracle Gap into Reproducible Specialist Advantage and Single-Draw Label Noise},
  year   = {2026},
  eprint = {2607.03436},
  archivePrefix = {arXiv},
  primaryClass = {cs.LG},
  doi    = {10.48550/arXiv.2607.03436},
  url    = {https://arxiv.org/abs/2607.03436},
  note   = {arXiv:2607.03436, doi:10.48550/arXiv.2607.03436, version 2, revised 7 July 2026}
}

@misc{rorpreprint2026,
  author = {Chen, Teng-Ruei},
  title  = {Resample or Reroute? Budget-Aware Test-Time Model Selection for Large Language Models},
  year   = {2026},
  eprint = {2607.08665v2},
  archivePrefix = {arXiv},
  primaryClass = {cs.LG},
  url    = {https://arxiv.org/abs/2607.08665v2},
  note   = {arXiv:2607.08665v2, version 2, revised 10 July 2026. Earlier replay-based version of the present arXiv record, cited for provenance; its policy and Pareto claims are not imported into the present methodological rebuild}
}

@article{frugalgpt,
  author  = {Chen, Lingjiao and Zaharia, Matei and Zou, James},
  title   = {{FrugalGPT}: How to Use Large Language Models While Reducing Cost and Improving Performance},
  journal = {Transactions on Machine Learning Research},
  year    = {2024},
  url     = {https://openreview.net/forum?id=cSimKw5p6R}
}

@inproceedings{routellm,
  author    = {Ong, Isaac and Almahairi, Amjad and Wu, Vincent and Chiang, Wei-Lin and Wu, Tianhao and Gonzalez, Joseph E. and Kadous, M. Waleed and Stoica, Ion},
  title     = {{RouteLLM}: Learning to Route {LLMs} from Preference Data},
  booktitle = {Proceedings of the International Conference on Learning Representations},
  year      = {2025},
  url       = {https://openreview.net/forum?id=8sSqNntaMr}
}

@inproceedings{modelswitch2026,
  author    = {Chen, Jianhao and Xun, Zishuo and Zhou, Bocheng and Qi, Han and Zhang, Hangfan and Zhang, Qiaosheng and Chen, Yang and Hu, Wei and Qu, Yuzhong and Hu, Shuyue},
  title     = {Do We Truly Need So Many Samples? Multi-{LLM} Repeated Sampling Efficiently Scales Test-Time Compute},
  booktitle = {Proceedings of the AAAI Conference on Artificial Intelligence},
  volume    = {40},
  pages     = {20083--20091},
  year      = {2026},
  doi       = {10.1609/aaai.v40i24.39094},
  url       = {https://ojs.aaai.org/index.php/AAAI/article/view/39094}
}

@inproceedings{bestroute2025,
  author    = {Ding, Dujian and Mallick, Ankur and Zhang, Shaokun and Wang, Chi and Madrigal, Daniel and Hipolito Garcia, Mirian Del Carmen and Xia, Menglin and Lakshmanan, Laks V. S. and Wu, Qingyun and R{\"u}hle, Victor},
  title     = {{BEST}-Route: Adaptive {LLM} Routing with Test-Time Optimal Compute},
  booktitle = {Proceedings of the 42nd International Conference on Machine Learning},
  series    = {Proceedings of Machine Learning Research},
  volume    = {267},
  pages     = {13870--13884},
  year      = {2025},
  url       = {https://proceedings.mlr.press/v267/ding25d.html}
}

@inproceedings{cascaderouting2025,
  author    = {Dekoninck, Jasper and Baader, Maximilian and Vechev, Martin},
  title     = {A Unified Approach to Routing and Cascading for {LLM}s},
  booktitle = {Proceedings of the 42nd International Conference on Machine Learning},
  series    = {Proceedings of Machine Learning Research},
  volume    = {267},
  pages     = {12987--13010},
  year      = {2025},
  url       = {https://proceedings.mlr.press/v267/dekoninck25a.html}
}

@inproceedings{automix2024,
  author    = {Aggarwal, Pranjal and Madaan, Aman and Anand, Ankit and Potharaju, Srividya Pranavi and Mishra, Swaroop and Zhou, Pei and Gupta, Aditya and Rajagopal, Dheeraj and Kappaganthu, Karthik and Yang, Yiming and Upadhyay, Shyam and Faruqui, Manaal and Mausam},
  title     = {{AutoMix}: Automatically Mixing Language Models},
  booktitle = {Advances in Neural Information Processing Systems},
  volume    = {37},
  pages     = {131000--131034},
  year      = {2024},
  doi       = {10.52202/079017-4164},
  url       = {https://proceedings.neurips.cc/paper_files/paper/2024/hash/ecda225cb187b40ea8edc1f46b03ffda-Abstract-Conference.html}
}

@inproceedings{stroebl2026limits,
  author    = {Stroebl, Benedikt and Kapoor, Sayash and Narayanan, Arvind},
  title     = {The Limits of Inference Scaling Through Resampling},
  booktitle = {Proceedings of the International Conference on Learning Representations},
  year      = {2026},
  note      = {arXiv:2411.17501},
  url       = {https://iclr.cc/virtual/2026/poster/10007907}
}

@inproceedings{codecascade2024,
  author    = {Chen, Boyuan and Zhu, Mingzhi and Dolan-Gavitt, Brendan and Shafique, Muhammad and Garg, Siddharth},
  title     = {Model Cascading for Code: A Cascaded Black-Box Multi-Model Framework for Cost-Efficient Code Completion with Self-Testing},
  booktitle = {2025 International Joint Conference on Neural Networks (IJCNN)},
  pages     = {1--9},
  year      = {2025},
  doi       = {10.1109/IJCNN64981.2025.11227916},
  url       = {https://doi.org/10.1109/IJCNN64981.2025.11227916}
}

@inproceedings{adaptiveconsistency2023,
  author    = {Aggarwal, Pranjal and Madaan, Aman and Yang, Yiming and Mausam},
  title     = {Let's Sample Step by Step: Adaptive-Consistency for Efficient Reasoning and Coding with {LLM}s},
  booktitle = {Proceedings of the 2023 Conference on Empirical Methods in Natural Language Processing},
  pages     = {12375--12396},
  year      = {2023},
  doi       = {10.18653/v1/2023.emnlp-main.761},
  url       = {https://aclanthology.org/2023.emnlp-main.761/}
}

@article{agreementcascade2025,
  author  = {Kolawole, Steven and Dennis, Don and Talwalkar, Ameet and Smith, Virginia},
  title   = {Agreement-Based Cascading for Efficient Inference},
  journal = {Transactions on Machine Learning Research},
  year    = {2025},
  note    = {arXiv:2407.02348},
  url     = {https://openreview.net/forum?id=jn9B7LMlzk}
}

@inproceedings{semanticcascade2025,
  author    = {Soiffer, Duncan and Kolawole, Steven and Smith, Virginia},
  title     = {Semantic Agreement Enables Efficient Open-Ended {LLM} Cascades},
  booktitle = {Proceedings of the 2025 Conference on Empirical Methods in Natural Language Processing: Industry Track},
  pages     = {2499--2537},
  year      = {2025},
  doi       = {10.18653/v1/2025.emnlp-industry.171},
  url       = {https://aclanthology.org/2025.emnlp-industry.171/}
}

@inproceedings{routerr1,
  author    = {Zhang, Haozhen and Feng, Tao and You, Jiaxuan},
  title     = {{Router-R1}: Teaching {LLM}s Multi-Round Routing and Aggregation via Reinforcement Learning},
  booktitle = {Advances in Neural Information Processing Systems},
  pages     = {156646--156678},
  year      = {2025},
  doi       = {10.52202/085713-4719},
  note      = {arXiv:2506.09033},
  url       = {https://proceedings.nips.cc/paper_files/paper/2025/hash/ceaa137fce916aba5c65fceb1309088b-Abstract-Conference.html}
}

@article{rlcascaderouter2026,
  author  = {Huang, Shihong and Wang, Shengjie and Ma, Hong and Xu, Zhou},
  title   = {{RLCascadeRouter}: Quality-Estimator-Free Cascade Routing via Reinforcement Learning},
  journal = {arXiv preprint arXiv:2608.15817},
  year    = {2026},
  url     = {https://arxiv.org/abs/2608.15817}
}

@inproceedings{codet2023,
  author    = {Chen, Bei and Zhang, Fengji and Nguyen, Anh and Zan, Daoguang and Lin, Zeqi and Lou, Jian-Guang and Chen, Weizhu},
  title     = {{CodeT}: Code Generation with Generated Tests},
  booktitle = {Proceedings of the International Conference on Learning Representations},
  year      = {2023},
  note      = {arXiv:2207.10397},
  url       = {https://openreview.net/forum?id=ktrw68Cmu9c}
}

@inproceedings{lever2023,
  author    = {Ni, Ansong and Iyer, Srini and Radev, Dragomir and Stoyanov, Veselin and Yih, Wen-Tau and Wang, Sida and Lin, Xi Victoria},
  title     = {{LEVER}: Learning to Verify Language-to-Code Generation with Execution},
  booktitle = {Proceedings of the 40th International Conference on Machine Learning},
  series    = {Proceedings of Machine Learning Research},
  volume    = {202},
  pages     = {26106--26128},
  year      = {2023},
  url       = {https://proceedings.mlr.press/v202/ni23b.html}
}

@inproceedings{selfconsistency,
  author    = {Wang, Xuezhi and Wei, Jason and Schuurmans, Dale and Le, Quoc and Chi, Ed and Narang, Sharan and Chowdhery, Aakanksha and Zhou, Denny},
  title     = {Self-Consistency Improves Chain of Thought Reasoning in Language Models},
  booktitle = {Proceedings of the International Conference on Learning Representations},
  year      = {2023},
  note      = {arXiv:2203.11171},
  url       = {https://openreview.net/forum?id=1PL1NIMMrw}
}

@inproceedings{scalingtesttime,
  author    = {Snell, Charlie and Lee, Jaehoon and Xu, Kelvin and Kumar, Aviral},
  title     = {Scaling {LLM} Test-Time Compute Optimally Can be More Effective than Scaling Parameters for Reasoning},
  booktitle = {Proceedings of the International Conference on Learning Representations},
  year      = {2025},
  note      = {arXiv:2408.03314},
  url       = {https://proceedings.iclr.cc/paper_files/paper/2025/hash/1b623663fd9b874366f3ce019fdfdd44-Abstract-Conference.html}
}

@article{routerbench,
  author  = {Hu, Qitian Jason and Bieker, Jacob and Li, Xiuyu and Jiang, Nan and Keigwin, Benjamin and Ranganath, Gaurav and Keutzer, Kurt and Upadhyay, Shriyash Kaustubh},
  title   = {{RouterBench}: A Benchmark for Multi-{LLM} Routing System},
  journal = {arXiv preprint arXiv:2403.12031},
  year    = {2024},
  note    = {; ICML 2024 Agentic Markets Workshop (non-archival)}
}

@inproceedings{llmrouterbench,
  author    = {Li, Hao and Zhang, Yiqun and Guo, Zhaoyan and Wang, Chenxu and Tang, Shengji and Zhang, Qiaosheng and Chen, Yang and Qi, Biqing and Ye, Peng and Bai, Lei and Wang, Zhen and Hu, Shuyue},
  title     = {{LLMRouterBench}: A Massive Benchmark and Unified Framework for {LLM} Routing},
  booktitle = {Findings of the Association for Computational Linguistics: ACL 2026},
  pages     = {37733--37754},
  year      = {2026},
  doi       = {10.18653/v1/2026.findings-acl.1881},
  note      = {arXiv:2601.07206},
  url       = {https://aclanthology.org/2026.findings-acl.1881/}
}

@inproceedings{routereval,
  author    = {Huang, Zhongzhan and Ling, Guoming and Lin, Yupei and Chen, Yandong and Zhong, Shanshan and Wu, Hefeng and Lin, Liang},
  title     = {{RouterEval}: A Comprehensive Benchmark for Routing {LLM}s to Explore Model-level Scaling Up in {LLM}s},
  booktitle = {Findings of the Association for Computational Linguistics: EMNLP 2025},
  pages     = {3860--3887},
  publisher = {Association for Computational Linguistics},
  year      = {2025},
  doi       = {10.18653/v1/2025.findings-emnlp.208},
  url       = {https://aclanthology.org/2025.findings-emnlp.208/}
}

@inproceedings{confidencetokens2025,
  author    = {Chuang, Yu-Neng and Sarma, Prathusha Kameswara and Gopalan, Parikshit and Boccio, John and Bolouki, Sara and Hu, Xia and Zhou, Helen},
  title     = {Learning to Route {LLM}s with Confidence Tokens},
  booktitle = {Proceedings of the 42nd International Conference on Machine Learning},
  series    = {Proceedings of Machine Learning Research},
  volume    = {267},
  pages     = {10859--10878},
  year      = {2025},
  url       = {https://proceedings.mlr.press/v267/chuang25b.html}
}

@inproceedings{lookaheadrouting2025,
  author    = {Huang, Canbin and Shi, Tianyuan and Zhu, Yuhua and Chen, Ruijun and Quan, Xiaojun},
  title     = {Lookahead Routing for Large Language Models},
  booktitle = {Advances in Neural Information Processing Systems},
  volume    = {38},
  year      = {2025},
  doi       = {10.52202/085713-1976},
  url       = {https://papers.neurips.cc/paper_files/paper/2025/hash/552456ddb6f4b2956b2933ab83f56df0-Abstract-Conference.html}
}

@inproceedings{selfcertainty2025,
  author    = {Kang, Zhewei and Zhao, Xuandong and Song, Dawn},
  title     = {Scalable Best-of-$N$ Selection for Large Language Models via Self-Certainty},
  booktitle = {Advances in Neural Information Processing Systems},
  volume    = {38},
  year      = {2025},
  doi       = {10.52202/085713-0663},
  url       = {https://proceedings.nips.cc/paper_files/paper/2025/hash/1c7eff166a8e345f664f0faa8f4e4d2e-Abstract-Conference.html}
}

@inproceedings{selfconsistenterrors2025,
  author    = {Tan, Hexiang and Sun, Fei and Liu, Sha and Su, Du and Cao, Qi and Chen, Xin and Wang, Jingang and Cai, Xunliang and Wang, Yuanzhuo and Shen, Huawei and Cheng, Xueqi},
  title     = {Too Consistent to Detect: A Study of Self-Consistent Errors in {LLM}s},
  booktitle = {Proceedings of the 2025 Conference on Empirical Methods in Natural Language Processing},
  pages     = {4755--4765},
  year      = {2025},
  doi       = {10.18653/v1/2025.emnlp-main.238},
  url       = {https://aclanthology.org/2025.emnlp-main.238/}
}

@inproceedings{activationfailures2026,
  author    = {Lugoloobi, William and Foster, Thomas and Bankes, William and Russell, Chris},
  title     = {{LLM}s Encode Their Failures: Predicting Success from Pre-Generation Activations},
  booktitle = {Proceedings of the Third Conference on Language Modeling},
  year      = {2026},
  doi       = {10.48550/arXiv.2602.09924},
  url       = {https://arxiv.org/abs/2602.09924},
  note      = {Accepted to COLM 2026; prepublication arXiv record}
}

@inproceedings{causalrouting2025,
  author    = {Tsiourvas, Asterios and Sun, Wei and Perakis, Georgia},
  title     = {Causal {LLM} Routing: End-to-End Regret Minimization from Observational Data},
  booktitle = {Advances in Neural Information Processing Systems},
  volume    = {38},
  year      = {2025},
  doi       = {10.52202/085713-1252},
  url       = {https://proceedings.neurips.cc/paper_files/paper/2025/hash/357774d53e5ee21c5f08ba779e3b5dd9-Abstract-Conference.html}
}

@inproceedings{strategictesttime2026,
  author    = {Zuo, Bowen and Zhu, Yinglun},
  title     = {Strategic Scaling of Test-Time Compute: A Bandit Learning Approach},
  booktitle = {Proceedings of the International Conference on Learning Representations},
  pages     = {134105--134128},
  year      = {2026},
  url       = {https://proceedings.iclr.cc/paper_files/paper/2026/hash/d8d2c5f79c53a2a685dbdf93f78c5695-Abstract-Conference.html}
}

@inproceedings{audibert2010,
  author    = {Audibert, Jean-Yves and Bubeck, S{\'e}bastien and Munos, R{\'e}mi},
  title     = {Best Arm Identification in Multi-Armed Bandits},
  booktitle = {Proc. Conf. on Learning Theory (COLT)},
  pages     = {41--53},
  year      = {2010}
}

@article{kaufmann2016,
  author  = {Kaufmann, Emilie and Capp{\'e}, Olivier and Garivier, Aur{\'e}lien},
  title   = {On the Complexity of Best-Arm Identification in Multi-Armed Bandit Models},
  journal = {Journal of Machine Learning Research},
  volume  = {17},
  number  = {1},
  pages   = {1--42},
  year    = {2016}
}

@inproceedings{lightman2024verify,
  author    = {Lightman, Hunter and Kosaraju, Vineet and Burda, Yura and Edwards, Harrison and Baker, Bowen and Lee, Teddy and Leike, Jan and Schulman, John and Sutskever, Ilya and Cobbe, Karl},
  title     = {Let's Verify Step by Step},
  booktitle = {Proceedings of the International Conference on Learning Representations},
  volume    = {2024},
  pages     = {39578--39601},
  year      = {2024},
  url       = {https://proceedings.iclr.cc/paper_files/paper/2024/hash/aca97732e30bcf1303bc22ac3924fd16-Abstract-Conference.html}
}

@article{gsm8k,
  author  = {Cobbe, Karl and Kosaraju, Vineet and Bavarian, Mohammad and Chen, Mark and Jun, Heewoo and Kaiser, Lukasz and Plappert, Matthias and Tworek, Jerry and Hilton, Jacob and Nakano, Reiichiro and Hesse, Christopher and Schulman, John},
  title   = {Training Verifiers to Solve Math Word Problems},
  journal = {arXiv preprint arXiv:2110.14168},
  year    = {2021},
  url     = {https://arxiv.org/abs/2110.14168}
}

@inproceedings{mathdataset,
  author    = {Hendrycks, Dan and Burns, Collin and Kadavath, Saurav and Arora, Akul and Basart, Steven and Tang, Eric and Song, Dawn and Steinhardt, Jacob},
  title     = {Measuring Mathematical Problem Solving With the {MATH} Dataset},
  booktitle = {Proceedings of the Neural Information Processing Systems Track on Datasets and Benchmarks},
  volume    = {1},
  year      = {2021},
  url       = {https://datasets-benchmarks-proceedings.neurips.cc/paper/2021/hash/be83ab3ecd0db773eb2dc1b0a17836a1-Abstract-round2.html}
}

@inproceedings{gpqa,
  author    = {Rein, David and Hou, Betty Li and Stickland, Asa Cooper and Petty, Jackson and Pang, Richard Yuanzhe and Dirani, Julien and Michael, Julian and Bowman, Samuel R.},
  title     = {{GPQA}: A Graduate-Level Google-Proof {Q}\&{A} Benchmark},
  booktitle = {First Conference on Language Modeling},
  year      = {2024},
  url       = {https://openreview.net/forum?id=Ti67584b98}
}

@article{humaneval,
  author  = {Chen, Mark and Tworek, Jerry and Jun, Heewoo and Yuan, Qiming and Pinto, Henrique Ponde de Oliveira and Kaplan, Jared and Edwards, Harri and Burda, Yuri and Joseph, Nicholas and Brockman, Greg and others},
  title   = {Evaluating Large Language Models Trained on Code},
  journal = {arXiv preprint arXiv:2107.03374},
  year    = {2021},
  url     = {https://arxiv.org/abs/2107.03374}
}

@inproceedings{evalplus,
  author    = {Liu, Jiawei and Xia, Chunqiu Steven and Wang, Yuyao and Zhang, Lingming},
  title     = {Is Your Code Generated by {ChatGPT} Really Correct? Rigorous Evaluation of Large Language Models for Code Generation},
  booktitle = {Advances in Neural Information Processing Systems},
  volume    = {36},
  pages     = {21558--21572},
  year      = {2023},
  doi       = {10.52202/075280-0943},
  url       = {https://proceedings.neurips.cc/paper_files/paper/2023/hash/43e9d647ccd3e4b7b5baab53f0368686-Abstract.html}
}

@article{austin2021mbpp,
  author  = {Austin, Jacob and Odena, Augustus and Nye, Maxwell and Bosma, Maarten and Michalewski, Henryk and Dohan, David and Jiang, Ellen and Cai, Carrie and Terry, Michael and Le, Quoc V. and Sutton, Charles},
  title   = {Program Synthesis with Large Language Models},
  journal = {arXiv preprint arXiv:2108.07732},
  year    = {2021},
  doi     = {10.48550/arXiv.2108.07732},
  url     = {https://arxiv.org/abs/2108.07732}
}

@article{qwen25technical,
  author  = {Yang, An and Yang, Baosong and Zhang, Beichen and Hui, Binyuan and Zheng, Bo and Yu, Bowen and Li, Chengyuan and Liu, Dayiheng and Huang, Fei and Wei, Haoran and Lin, Huan and Yang, Jian and Tu, Jianhong and Zhang, Jianwei and Yang, Jianxin and Yang, Jiaxi and Zhou, Jingren and Lin, Junyang and Dang, Kai and Lu, Keming and Bao, Keqin and Yang, Kexin and Yu, Le and Li, Mei and Xue, Mingfeng and Zhang, Pei and Zhu, Qin and Men, Rui and Lin, Runji and Li, Tianhao and Tang, Tianyi and Xia, Tingyu and Ren, Xingzhang and Ren, Xuancheng and Fan, Yang and Su, Yang and Zhang, Yichang and Wan, Yu and Liu, Yuqiong and Cui, Zeyu and Zhang, Zhenru and Qiu, Zihan},
  title   = {{Qwen2.5} Technical Report},
  journal = {arXiv preprint arXiv:2412.15115},
  year    = {2024},
  doi     = {10.48550/arXiv.2412.15115},
  url     = {https://arxiv.org/abs/2412.15115}
}

@article{llama31herd,
  author  = {Grattafiori, Aaron and others},
  title   = {The {Llama 3} Herd of Models},
  journal = {arXiv preprint arXiv:2407.21783},
  year    = {2024},
  doi     = {10.48550/arXiv.2407.21783},
  url     = {https://arxiv.org/abs/2407.21783}
}

@article{nemotronnano2,
  author  = {{NVIDIA}},
  title   = {{NVIDIA Nemotron Nano 2}: An Accurate and Efficient Hybrid {Mamba-Transformer} Reasoning Model},
  journal = {arXiv preprint arXiv:2508.14444},
  year    = {2025},
  doi     = {10.48550/arXiv.2508.14444},
  url     = {https://arxiv.org/abs/2508.14444}
}

@inproceedings{lin2024awq,
  author    = {Lin, Ji and Tang, Jiaming and Tang, Haotian and Yang, Shang and Chen, Wei-Ming and Wang, Wei-Chen and Xiao, Guangxuan and Dang, Xingyu and Gan, Chuang and Han, Song},
  title     = {{AWQ}: Activation-aware Weight Quantization for On-Device {LLM} Compression and Acceleration},
  booktitle = {Proceedings of Machine Learning and Systems},
  volume    = {6},
  pages     = {87--100},
  year      = {2024},
  url       = {https://proceedings.mlsys.org/paper_files/paper/2024/hash/42a452cbafa9dd64e9ba4aa95cc1ef21-Abstract-Conference.html}
}

@inproceedings{jain2025livecodebench,
  author    = {Jain, Naman and Han, King and Gu, Alex and Li, Wen-Ding and Yan, Fanjia and Zhang, Tianjun and Wang, Sida and Solar-Lezama, Armando and Sen, Koushik and Stoica, Ion},
  title     = {{LiveCodeBench}: Holistic and Contamination Free Evaluation of Large Language Models for Code},
  booktitle = {Proceedings of the International Conference on Learning Representations},
  year      = {2025},
  url       = {https://openreview.net/forum?id=chfJJYC3iL}
}

@inproceedings{bigcodebench,
  title     = {{BigCodeBench}: Benchmarking Code Generation with Diverse Function Calls and Complex Instructions},
  author    = {Zhuo, Terry Yue and Vu, Minh Chien and Chim, Jenny and Hu, Han and Yu, Wenhao and Widyasari, Ratnadira and Yusuf, Imam Nur Bani and Zhan, Haolan and He, Junda and Paul, Indraneil and others},
  booktitle = {Proceedings of the International Conference on Learning Representations},
  year      = {2025},
  url       = {https://openreview.net/forum?id=YrycTjllL0}
}

\clearpage
% ---------------------------------------------------------------------------
% Supplementary Material (appended so that the arXiv record is one document).
% Sections, tables, figures, equations, algorithms and propositions restart
% with the prefix S.
% ---------------------------------------------------------------------------
\setcounter{section}{0}\renewcommand{\thesection}{S\arabic{section}}
\setcounter{table}{0}\renewcommand{\thetable}{S\arabic{table}}
\setcounter{figure}{0}\renewcommand{\thefigure}{S\arabic{figure}}
\setcounter{equation}{0}\renewcommand{\theequation}{S\arabic{equation}}
\setcounter{algorithm}{0}\renewcommand{\thealgorithm}{S\arabic{algorithm}}
\setcounter{proposition}{0}\renewcommand{\theproposition}{S\arabic{proposition}}
\setcounter{remark}{0}

\section*{Supplementary Material}
\noindent\textbf{Scope.} This supplement records the exact verifier-gated estimands, finite-bank
derivations, descriptive matched-information audit, exact-fold
exchangeable-actions calibration, deterministic clustered inference, negative
self-attacks, draw-position diagnostics, cross-paper data ledger, and SHA-256
artifact bindings supporting the accompanying article.  All quantities
condition on frozen response banks and declared evaluators.  A pointwise
realized maximum is treated as a clairvoyant descriptive statistic, not as
evidence of heterogeneous expected action value.  The supplement also gives a
stopped-history value decomposition and states the common-population condition
required before multiple gates can compose.  A separate LiveCodeBench FIT/DEV/sealed-TEST
ladder is reported as a support/transport diagnosis rather than a generic
negative result about outcome-blind features.  A later preregistered
BigCodeBench replication enforces a two-sided support gate before any new
feature fit or DEV/TEST opening and records an evaluator-randomness amendment
that prevents retry-conditioned labels.
Section, table, figure, equation, algorithm, and proposition numbers below carry
the prefix S; citations refer to the reference list above.

\section{Evidence roles and frozen scope}

The evidence is deliberately stratified.  The confirmatory mechanism result
uses a hash-defined TEST split of 152 MBPP+ v0.2.0 queries, ten stochastic
draws from each of Qwen2.5-14B-Instruct-AWQ and
Qwen2.5-7B-Instruct-AWQ, and an EvalPlus 0.3.1 scorer.  Base-test passage is
policy visible; Base-and-Plus correctness is evaluator only.  Llama-3.1-8B-
Instruct and NVIDIA-Nemotron-Nano-9B-v2 are post-outcome model-family
sensitivities.  The historical 11-model archive is an audit and development
record, not an independent replication of the fresh MBPP+ result.

The prespecified observability-ladder follow-up is a separate experiment on
342 LiveCodeBench \texttt{code\_generation\_lite} v5+v6 tasks: one pipeline
pilot, 136 FIT, 68 DEV, and 137 sealed TEST queries.  It holds the Qwen14 start
model and the Qwen14-resample/Qwen7-reroute action set fixed while varying only
nested stopped-history information.  The FIT/DEV controller was frozen before
one TEST outcome opening.  Its result does not replace the MBPP+ mechanism.
The opened-population matched-information audit takes a second action on every
episode and is reported separately because it does not condition on the
Gate-1 stopping event.

The BigCodeBench v0.1.4 replication is a second, independently generated
support experiment.  A pinned authority qualifies 1,100 of 1,140 tasks and
assigns one pilot, 483 FIT, 211 DEV, and 405 TEST tasks by near-duplicate
component.  Its gate fails on FIT and therefore leaves all DEV/TEST outcomes
unopened; it cannot replace or retrospectively reinterpret the LiveCodeBench
TEST result.

The one-query pilot MBPP/100 is excluded from all scientific splits.  The
reported generation-token count includes the terminal token, whereas decoded
text omits it.  Numeric draw indices remain fixed batch positions; neither the
generation protocol nor the analysis treats them as independent and
identically distributed samples.

\begin{table*}[t]
\centering
\caption{Canonical notation and information boundary.  These symbols replace
earlier overloaded uses of $H$, $Y$, $U$, $D$, and $S$.}
\label{tab:s-notation}
\small
\begin{tabularx}{\textwidth}{@{}p{0.14\textwidth}p{0.19\textwidth}X
p{0.19\textwidth}@{}}
\toprule
Symbol & Boundary & Meaning & Replaces or disambiguates \\
\midrule
$\mathcal H_{it}$ & policy-visible & Observable history/filtration through
call $t$ & $H_i$ used informally for history \\
$\omega_i$ & policy-internal & Frozen policy randomization for query $i$ &
$U_i$ as randomization \\
$Z_{ij}$ & policy-visible & Verifier result or declared stopped-history
feature vector & Observable signal only \\
$S$ & policy-visible event & Declared stopping event, such as Base-test
acceptance & Avoids conflict with a resample action label \\
$Y^0_{ij}$ & evaluator-only & Hidden correctness of the accepted or initial
response & $H_{ij}$ used as hidden correctness \\
$Y^a_{ij}$ & evaluator-only & Hidden correctness of the completion produced by
action $a$ & Action-specific correctness \\
$U_{ij}(a)$ & evaluator-only & Utility after action $a$, usually
$\max\{Y^0_{ij},Y^a_{ij}\}$ & $Y_a$ used as utility in the theory section \\
$\Delta_{ij}$ & evaluator-only &
$U_{ij}(\mathrm{rer})-U_{ij}(\mathrm{res})$ & $D_{ij}$ as action difference \\
$D$ & design constant & Number of archived draws per query/model & Draw count
only \\
$F_{id}$ & evaluator-only subset &
$\mathbf1\{Z^{14}_{id}=1,Y^{14}_{id}=0\}$; not a runtime stop &
False-positive subset of $S$ \\
$a\in\{\mathrm{res},\mathrm{rer}\}$ & declared action & Resample or reroute &
$\mathrm{res}/\mathrm{rer}$ action labels \\
\bottomrule
\end{tabularx}
\end{table*}

\section{Verifier-gated recovery estimands}

For TEST query $i$ and draw $d\in\{0,\ldots,D-1\}$, let
$Z^{14}_{id}$ denote visible 14B Base-test passage and let $Y^m_{id}$ denote
hidden Base-and-Plus correctness for model $m$.  Define an irreversible
false-positive stop by
\begin{equation}
 F_{id}=\mathbf 1\{Z^{14}_{id}=1,\;Y^{14}_{id}=0\}.
\end{equation}
The prespecified independent-pairing recovery opportunity is
\begin{equation}\label{eq:s-primary}
 \widehat\Delta_{\mathrm{VG}}
 =\frac{1}{N}\sum_{i=1}^{N}
  \left(\frac{1}{D}\sum_d F_{id}\right)
  \left(\frac{1}{D}\sum_e Y^{7}_{ie}\right).
\end{equation}
The fixed-offset sensitivity is
\begin{equation}\label{eq:s-offset}
 \widehat\Delta_{\mathrm{off}}
 =\frac{1}{ND}\sum_{i=1}^{N}\sum_{d=0}^{D-1}
  F_{id}Y^{7}_{i,(d+1)\bmod D}.
\end{equation}
The prespecified secondary event is a 7B Base-test rejection:
\begin{equation}\label{eq:s-secondary}
\begin{split}
 \widehat\Delta_{\mathrm{sec}}
 &=\frac{1}{N}\sum_i\frac{1}{D}\sum_d
 \mathbf 1\{Z^{7}_{id}=0\}\\
 &\qquad\times\left[
   \frac{1}{D}\sum_eY^{14}_{ie}
   -\frac{1}{D-1}\sum_{e\ne d}Y^{7}_{ie}
 \right].
\end{split}
\end{equation}
It compares fixed rerouting to 14B with leave-one-out fixed resampling of 7B
after an observable rejection.  It is event-weighted, prespecified as
underpowered, and is neither a stopping-debt estimand nor a selector.  All
three equations are evaluator-only finite-bank quantities and none selects a
response at deployment time; only Eqs.~\eqref{eq:s-primary} and
\eqref{eq:s-offset} measure false-positive-stop recovery mass.

\begin{table*}[t]
\centering
\caption{Fresh verifier-gated recovery results.  Intervals resample 152 query
clusters with 10,000 deterministic bootstrap replicates.}
\label{tab:s-recovery}
\scriptsize
\setlength{\tabcolsep}{3pt}
\begin{tabularx}{\textwidth}{@{}>{\raggedright\arraybackslash}X
  >{\raggedleft\arraybackslash}p{0.13\textwidth}
  >{\centering\arraybackslash}p{0.20\textwidth}
  >{\raggedright\arraybackslash}p{0.19\textwidth}@{}}
\toprule
Sequence or sensitivity & Estimate (points) & Interval & Status \\
\midrule
Qwen14 false-positive stop $\rightarrow$ Qwen7, \eqref{eq:s-primary}
  & $+2.592$ & $[+1.618,+3.664]$ (95\%) & primary PASS \\
Qwen14 $\rightarrow$ Qwen7 fixed offset, \eqref{eq:s-offset}
  & $+2.500$ & $[+1.513,+3.618]$ (95\%) & sensitivity PASS \\
After 7B rejection: reroute 14B $-$ leave-one-out resample 7B,
Eq.~\eqref{eq:s-secondary}
  & $+2.882$ & $[+0.931,+5.201]$ (95\%) & prespecified underpowered secondary \\
Qwen14 $\rightarrow$ Llama-3.1-8B
  & $+2.974$ & $[+1.553,+4.921]$ (97.5\%) & post-outcome \\
Qwen14 $\rightarrow$ Nemotron Nano 9B
  & $+3.303$ & $[+1.711,+5.461]$ (97.5\%) & post-outcome \\
\bottomrule
\end{tabularx}
\end{table*}

The primary point is the exact reduced fraction $197/7600$.  Nonzero primary
recovery occurs in 28 of 152 query clusters; the ten largest clusters account
for 60.2\% of the total.  The concentration is visible in
Figure~\ref{fig:s-recovery} and is why uncertainty is clustered by query
rather than by completion.

\begin{figure*}[t]
\centering
\includegraphics[width=0.94\textwidth]{figures/fig_fresh_recovery.pdf}
\caption{Fresh MBPP+ recovery estimates and query-level concentration.  The
familywise-corrected extension arms support model-family robustness, not an
independent TEST replication.}
\Description{A forest plot gives five positive recovery estimates and their
uncertainty intervals.  A ranked concentration curve shows that a minority of
queries contributes most of the primary recovery mass.}
\label{fig:s-recovery}
\end{figure*}

\section{Closed-form finite-bank oracle}

For query $i$ and model $m$, let
\begin{equation}
 r_{im}=\min\{d\in\{1,\ldots,D\}:Y_{imd}=1\},
\end{equation}
with $r_{im}=+\infty$ if the fixed-order bank contains no success.  If a call
to model $m$ costs $c_m>0$, define
\begin{equation}\label{eq:s-oracle}
 C_i^*=\min_m c_m r_{im}.
\end{equation}

\begin{proposition}[Minimum-cost finite-bank opportunity]
The maximum success indicator attainable by any evaluator-informed legal
action sequence under cap $B$ is $\mathbf 1\{C_i^*\le B\}$.
\end{proposition}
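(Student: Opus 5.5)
The proof has two directions: achievability (if $C_i^*\le B$ some legal sequence succeeds) and a converse (if $C_i^*>B$ no sequence succeeds). Since the indicator takes values in $\{0,1\}$, these together give the claim. Throughout I use the replay semantics of Section~\ref{sec:problem}. The $k$th call to model $m$ on query $i$ consumes archive slot $J=k$ in the fixed order $1,\ldots,D$. An evaluator-informed sequence may choose calls using the hidden labels, but it must respect the pathwise cap~\eqref{eq:budget}. The final selector may return any drawn candidate.

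For achievability, suppose $C_i^*\le B$ and let $m^*$ attain the minimum in~\eqref{eq:s-oracle}. Then $r_{im^*}<\infty$ and $c_{m^*}r_{im^*}\le B$. I take the sequence that calls $m^*$ exactly $r_{im^*}$ times and never calls any other model. Its total cost is $c_{m^*}r_{im^*}\le B$, so it is legal. The last call reveals draw $r_{im^*}$, which has $Y_{im^*r_{im^*}}=1$. An evaluator-informed selector returns that draw, so the outcome is $1$.

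For the converse, suppose a legal sequence returns a correct candidate. That candidate is some draw $d$ of some model $m$ with $Y_{imd}=1$. Because the bank is consumed in fixed order, obtaining draw $d$ of model $m$ requires at least $d$ calls to $m$. By the definition of the first success index, $d\ge r_{im}$, and in particular $r_{im}<\infty$. Costs are strictly positive, so the total spend is at least $c_m d\ge c_m r_{im}\ge C_i^*$. The cap then gives $C_i^*\le B$. Contrapositively, $C_i^*>B$ forces the success indicator to be $0$ for every legal sequence. Calls to other models only add nonnegative cost, so they cannot rescue this bound.

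The main obstacle is stating the access model precisely enough that ``$k$th call to $m$ consumes slot $k$'' is a fact rather than an assumption. If the evaluator could choose $J_{it}$ freely, the oracle would instead be $\min_m c_m\mathbf 1\{\exists d:Y_{imd}=1\}$. I would therefore make the fixed-order consumption explicit as part of the replay contract before giving the argument. I would also note that interleaving models does not break the monotone slot counter for each model.
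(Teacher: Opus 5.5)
Your argument is correct and matches the paper's proof. The converse uses the fixed per-model draw head: calls to other models cannot advance it, so any success from $m$ costs at least $c_m r_{im}$. Achievability calls a minimizing model exactly $r_{im}$ times, and your handling of a returned success at an arbitrary $d\ge r_{im}$ is just a more explicit form of the paper's ``exactly $r_{im}$ calls'' step. (One slip in your closing aside: with free slot choice the oracle cost would be $\min\{c_m:\exists d,\ Y_{imd}=1\}$, not $\min_m c_m\mathbf 1\{\exists d:Y_{imd}=1\}$, which evaluates to $0$ whenever some model has no success.)
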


\begin{proof}
Reaching the first successful draw of model $m$ requires exactly $r_{im}$ paid
calls to that model and therefore cost $c_mr_{im}$.  Interleaved calls to other
models have nonnegative cost and cannot advance model $m$'s fixed draw head.
Conversely, calling a minimizing model $r_{im}$ times realizes its first
success.  Minimizing over models proves \eqref{eq:s-oracle}.
\end{proof}

\begin{proposition}[Exchangeable realized-maximum baseline]
\label{prop:s-exchangeable-max}
If $(U_0,U_1)$ has an exchangeable joint law conditional on $S$, then
\begin{equation}\label{eq:s-exchangeable-max}
 \mathbb E[\max\{U_0,U_1\}\mid S]
 -\max_a\mathbb E[U_a\mid S]
 =\frac12\mathbb E[|U_1-U_0|\mid S].
\end{equation}
The gap is positive whenever realized outcomes disagree with positive
probability, even though the two conditional expected action values are
identical.  For independent Bernoulli$(p)$ actions it equals $p(1-p)$.
\end{proposition}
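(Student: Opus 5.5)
The plan is to reduce everything to the pointwise identity $\max\{x,y\}=\tfrac12(x+y+|x-y|)$, which holds for all real $x,y$.
Apply it with $x=U_0$ and $y=U_1$, then take conditional expectations given $S$.
Integrability of $U_0,U_1$ makes $|U_1-U_0|$ integrable.
Linearity therefore gives
\begin{equation*}
 \mathbb E[\max\{U_0,U_1\}\mid S]
 =\tfrac12\bigl(\mathbb E[U_0\mid S]+\mathbb E[U_1\mid S]\bigr)
 +\tfrac12\mathbb E[|U_1-U_0|\mid S].
\end{equation*}
This step is exactly the general two-action identity recorded in the Remark following Proposition~\ref{prop:exchangeable-max}.
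I would simply specialize it rather than re-derive it.

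Next, exchangeability of the conditional law means $(U_0,U_1)$ and $(U_1,U_0)$ have the same distribution given $S$.
In particular the two marginals coincide, so $\mathbb E[U_0\mid S]=\mathbb E[U_1\mid S]=:\mu$.
Hence $\max_a\mathbb E[U_a\mid S]=\mu$, and the average of the two conditional means is also $\mu$.
Subtracting $\mu$ from the display leaves $\tfrac12\mathbb E[|U_1-U_0|\mid S]$, which is \eqref{eq:s-exchangeable-max}.
Equivalently, in the Remark's form the term $\tfrac12|\mathbb E[\Delta\mid S]|$ vanishes because $\mathbb E[\Delta\mid S]=0$.

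The positivity claim follows because $|U_1-U_0|\ge0$.
Its conditional expectation is therefore strictly positive exactly when $\Pr(U_1\ne U_0\mid S)>0$.
For the Bernoulli case, take independent $U_0,U_1\sim\mathrm{Bernoulli}(p)$, which are trivially exchangeable.
Then $|U_1-U_0|$ is the indicator of disagreement, with mean $p(1-p)+(1-p)p=2p(1-p)$.
Halving this gives $p(1-p)$.

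There is no real obstacle here.
The only point needing care is to state that exchangeability is used solely to equate the two conditional marginal means.
Everything else is the signal-free pointwise identity, consistent with the paper's emphasis that the statistic contains no observable-history term.
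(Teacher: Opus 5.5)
Your proof is correct and follows the paper's argument: the pointwise identity $\max\{x,y\}=\tfrac12(x+y+|x-y|)$, exchangeability used only to equate the two conditional means, and the disagreement probability $2p(1-p)$ in the independent Bernoulli case. Your appeal to the general two-action Remark is harmless, since the paper states that Remark without proof and you derive the needed display directly from the pointwise identity anyway.
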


\begin{proof}
Apply $\max\{x,y\}=(x+y+|x-y|)/2$ and use exchangeability to identify the
common conditional mean.  In the Bernoulli case, the probability of a
disagreement is $2p(1-p)$.
\end{proof}

\begin{remark}[General two-action identity]
Exchangeability is not required.  Let $\Delta=U_1-U_0$.  For any two
integrable actions and any event $S$ with positive probability,
\begin{equation}\label{eq:s-general-realized-max}
\begin{split}
 &\mathbb E[\max\{U_0,U_1\}\mid S]
 -\max_{a\in\{0,1\}}\mathbb E[U_a\mid S]\\
 &\qquad=\frac12\mathbb E[|\Delta|\mid S]
 -\frac12\left|\mathbb E[\Delta\mid S]\right|.
\end{split}
\end{equation}
The exchangeable value is therefore the largest expected gap compatible with
a fixed disagreement rate.  A shortfall records a marginal action difference,
not observable conditional heterogeneity; the statistic contains no
stopped-history signal.
\end{remark}

Thus $V_{\mathrm{or}}$ is a clairvoyant realization value, not a test of
conditional-mean heterogeneity.  Empirical action and oracle maxima are also
selected statistics; an empirical difference requires an explicit reference
distribution before it can support a heterogeneity claim.

The implementation was compared exhaustively with recursive enumeration for
all $2^6$ binary outcome banks of two models and three draws.  The two-Qwen
frontier has a maximum oracle--best-fixed difference of 3.289 percentage
points at nominal cap 28.  The exploratory four-model equal-cost frontier has
a maximum difference of 8.553 points.  These gaps show opportunity, not an
observable selection rule.

\begin{figure*}[t]
\centering
\includegraphics[width=0.95\textwidth]{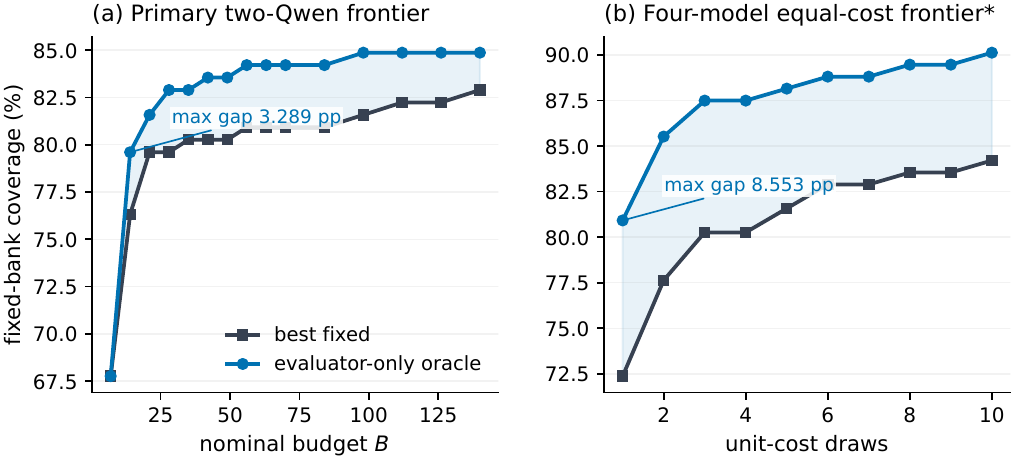}
\caption{Closed-form finite-bank opportunity frontiers.  Shading is the
evaluator-only oracle gap above the best fixed model--draw-count policy.}
\Description{Two panels compare the oracle and best fixed coverage as budget
increases for the prespecified Qwen pair and the exploratory four-model pool.}
\label{fig:s-frontier}
\end{figure*}

\section{Stopped-history observability decomposition}
\label{sec:s-stopped-history}

Let $S$ be an observable stopping event with positive probability.  Conditional
on $S$, suppose two legal second actions have evaluator utilities
$U_0,U_1\in[0,1]$, and let $Z$ be any outcome-blind signal measurable from the
stopped history.  Hidden correctness used to construct $U_0$ and $U_1$ is not
part of $Z$.  Let $a_{\mathrm{FIT}}\in\{0,1\}$ be the fixed action frozen from
FIT, write $\Delta=U_1-U_0$, and define
\begin{align}
 V_{\mathrm{frozen}}
   &=\mathbb E[U_{a_{\mathrm{FIT}}}\mid S],
   \label{eq:s-vfrozen}\\
 V_{\mathrm{fix}}
   &=\max_{a\in\{0,1\}}\mathbb E[U_a\mid S],
   \label{eq:s-vfix}\\
 V_Z
   &=\mathbb E\!\left[
       \max_{a\in\{0,1\}}\mathbb E[U_a\mid Z,S]
       \mathrel{\Big|} S\right],
   \label{eq:s-vz}\\
 V_{\mathrm{or}}
   &=\mathbb E[\max\{U_0,U_1\}\mid S].
   \label{eq:s-voracle}
\end{align}
These are, respectively, the value of the action selected and frozen on FIT,
the best fixed second action in the target population, the Bayes value
attainable from the declared stopped-history signal, and the evaluator-only
oracle.  Distinguishing the first two matters whenever the action ranking does
not transport from FIT to the target population.

\begin{proposition}[Support-aware stopped-history value decomposition]
\label{prop:s-observability-decomposition}
The four values satisfy
\begin{equation}
 V_{\mathrm{frozen}}\le V_{\mathrm{fix}}\le V_Z\le V_{\mathrm{or}},
 \label{eq:s-value-order}
\end{equation}
and the oracle--frozen gap decomposes exactly as
\begin{align}
 V_{\mathrm{or}}-V_{\mathrm{frozen}}
&=\underbrace{V_{\mathrm{fix}}-V_{\mathrm{frozen}}}_{\text{fixed-action transport loss}}\notag\\
&\quad+\underbrace{V_Z-V_{\mathrm{fix}}}_{\text{observable gain}}\notag\\
&\quad+\underbrace{V_{\mathrm{or}}-V_Z}_{\text{residual information debt}}.
 \label{eq:s-observability-decomposition}
\end{align}
If $Z_1$ is a measurable coarsening of $Z_2$, then
$V_{Z_1}\le V_{Z_2}$.  Thus a nested observability ladder is monotone at the
Bayes-value level.
\end{proposition}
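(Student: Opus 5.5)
The plan is to establish the chain \eqref{eq:s-value-order} one link at a time, each time from an elementary property of conditional expectation given $S$. Then the exact decomposition \eqref{eq:s-observability-decomposition} follows by telescoping, and the coarsening claim is a tower-property argument.

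Throughout, I work under the conditional law $\Pr(\cdot\mid S)$, which is well defined because $\Pr(S)>0$. I treat $a_{\mathrm{FIT}}$ as fixed (nonrandom) relative to the evaluation distribution, since it is frozen from FIT. The proof has four steps.

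\textbf{Step 1: $V_{\mathrm{frozen}}\le V_{\mathrm{fix}}$.} Because $a_{\mathrm{FIT}}\in\{0,1\}$, the value $\mathbb E[U_{a_{\mathrm{FIT}}}\mid S]$ is one of the two terms being maximized in \eqref{eq:s-vfix}. This is the same argument as in Proposition~\ref{prop:four-value-decomposition}.

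\textbf{Step 2: $V_{\mathrm{fix}}\le V_Z$.} For each fixed $a$,
\[
\max_{b}\mathbb E[U_b\mid Z,S]\ge \mathbb E[U_a\mid Z,S]
\]
holds almost surely. Taking $\mathbb E[\cdot\mid S]$ and applying the tower property gives $V_Z\ge \mathbb E[U_a\mid S]$. Maximizing over $a$ yields the claim.

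\textbf{Step 3: $V_Z\le V_{\mathrm{or}}$.} For each $a$, $U_a\le\max\{U_0,U_1\}$ pointwise. Monotonicity of conditional expectation then gives
\[
\mathbb E[U_a\mid Z,S]\le\mathbb E[\max\{U_0,U_1\}\mid Z,S],
\]
so the maximum over $a$ obeys the same bound. Taking $\mathbb E[\cdot\mid S]$ and using the tower property gives $V_{\mathrm{or}}$. Integrability is automatic because $U_a\in[0,1]$.

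\textbf{Step 4: the identity \eqref{eq:s-observability-decomposition}.} This is pure algebra: add and subtract $V_{\mathrm{fix}}$ and $V_Z$. All four quantities are finite, so no further justification is needed.

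For the monotone-ladder statement, I would make ``measurable coarsening'' precise as $\sigma(Z_1)\subseteq\sigma(Z_2)$, for example $Z_1=h(Z_2)$ with $h$ measurable. Write $m_a^{(k)}=\mathbb E[U_a\mid Z_k,S]$. By the tower property, $m_a^{(1)}=\mathbb E[m_a^{(2)}\mid Z_1,S]$. Since the maximum is convex, or more simply because $m_a^{(2)}\le\max_b m_b^{(2)}$ for each $a$,
\[
\max_a m_a^{(1)}\le \mathbb E\bigl[\max_b m_b^{(2)}\mid Z_1,S\bigr].
\]
Taking $\mathbb E[\cdot\mid S]$ gives $V_{Z_1}\le V_{Z_2}$. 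Applied to the nested tiers $Z_1\subset\cdots\subset Z_4$, this yields monotone Bayes values.

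The only delicate point I anticipate is formal rather than substantive: conditioning on both $Z$ and the event $S$. I would handle it by working on the probability space restricted to $S$ with measure $\Pr(\cdot\mid S)$. There, $\mathbb E[\cdot\mid Z,S]$ is ordinary conditional expectation given $\sigma(Z)$, and every step above is a standard inequality for conditional expectations. I would also remark that the ladder claim concerns Bayes values only, not fitted learners, in line with Proposition~\ref{prop:zero-support-main}.
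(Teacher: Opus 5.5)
Your proof is correct, but it takes a different route from the paper's proof of this proposition.

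\textbf{The paper's route.} It anchors all three values at $\mathbb E[U_0\mid S]$ using $\max\{x,y\}=x+(y-x)_+$. This writes $V_{\mathrm{fix}}$, $V_Z$, and $V_{\mathrm{or}}$ as $\mathbb E[U_0\mid S]$ plus, respectively, $(\mathbb E[\Delta\mid S])_+$, $\mathbb E[(\mathbb E[\Delta\mid Z,S])_+\mid S]$, and $\mathbb E[\Delta_+\mid S]$. Both remaining inequalities then follow from one conditional-Jensen chain for the convex map $x\mapsto x_+$. The coarsening claim follows from the same argument applied to $\mathbb E[\Delta\mid Z_1,S]=\mathbb E[\mathbb E[\Delta\mid Z_2,S]\mid Z_1,S]$.

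\textbf{Your route.} You work with the maxima directly, using the bounds $\mathbb E[U_a\mid Z,S]\le\max_b\mathbb E[U_b\mid Z,S]$ and $U_a\le\max_b U_b$ together with the tower property. In effect this proves Jensen for the convex function $\max$ by hand. It is also the argument the paper itself sketches for the main-text Proposition~\ref{prop:four-value-decomposition}.

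\textbf{What each buys.} Your route is more elementary and applies verbatim to any finite action set; the paper only asserts that extension in a later remark. The paper's route is specific to two actions, but it expresses every gap through the Gate-2 target $\Delta$. For example, $V_{\mathrm{or}}-V_{\mathrm{fix}}=\mathbb E[\Delta_+\mid S]-(\mathbb E[\Delta\mid S])_+=\tfrac12\mathbb E[|\Delta|\mid S]-\tfrac12|\mathbb E[\Delta\mid S]|$. This ties the decomposition directly to the general realized-maximum identity.

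\textbf{Your extra hypothesis.} You assume explicitly that $a_{\mathrm{FIT}}$ is fixed relative to the evaluation law. This is a genuine hypothesis that the paper leaves implicit. If the action were chosen using evaluation outcomes, for instance as a pointwise argmax, $V_{\mathrm{frozen}}$ could exceed $V_{\mathrm{fix}}$.
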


\begin{proof}
Because $a_{\mathrm{FIT}}$ is one member of the legal action set,
$V_{\mathrm{frozen}}\le V_{\mathrm{fix}}$.  For the remaining inequalities,
using $\max\{x,y\}=x+(y-x)_+$ gives the identities
\begin{align}
 V_{\mathrm{fix}}
 &=\mathbb E[U_0\mid S]+\bigl(\mathbb E[\Delta\mid S]\bigr)_+,\\
 V_Z
 &=\mathbb E[U_0\mid S]\notag\\
 &\quad+\mathbb E\!\left[
      \bigl(\mathbb E[\Delta\mid Z,S]\bigr)_+\mathrel{\Big|}S\right],\\
 V_{\mathrm{or}}
 &=\mathbb E[U_0\mid S]+\mathbb E[\Delta_+\mid S].
\end{align}
Because $x\mapsto x_+$ is convex, conditional Jensen and the tower property
give
\begin{equation}
 \bigl(\mathbb E[\Delta\mid S]\bigr)_+
 \le
 \mathbb E\!\left[
   \bigl(\mathbb E[\Delta\mid Z,S]\bigr)_+\mathrel{\Big|}S\right]
 \le \mathbb E[\Delta_+\mid S].
\end{equation}
Subtracting the first identity proves \eqref{eq:s-value-order}; adding and
subtracting $V_{\mathrm{fix}}$ and $V_Z$ proves
\eqref{eq:s-observability-decomposition}.  If
$Z_1$ is a coarsening of $Z_2$, apply conditional Jensen once more to
$\mathbb E[\Delta\mid Z_1,S]
=\mathbb E[\mathbb E[\Delta\mid Z_2,S]\mid Z_1,S]$.
\end{proof}

If second-action utility preserves an already correct stop,
$U_a=\max\{Y^0,Y^a\}$, let
$V_{\mathrm{stop}}=\mathbb E[Y^0\mid S]$.  Then
$V_{\mathrm{stop}}\le V_{\mathrm{frozen}}$ and
\begin{align}
 V_{\mathrm{or}}-V_{\mathrm{stop}}
 &=(V_{\mathrm{frozen}}-V_{\mathrm{stop}})
 +(V_{\mathrm{fix}}-V_{\mathrm{frozen}})\notag\\
 &\quad+(V_Z-V_{\mathrm{fix}})+(V_{\mathrm{or}}-V_Z).
 \label{eq:s-stop-anchored-decomposition}
\end{align}
The first term is recovery under the FIT-frozen second action; the remaining
terms separate action-ranking transport, observable value, and residual
evaluator-only information.

When $V_{\mathrm{or}}>V_{\mathrm{fix}}$, the normalized recoverable share
\begin{equation}
 \rho_Z=\frac{V_Z-V_{\mathrm{fix}}}
              {V_{\mathrm{or}}-V_{\mathrm{fix}}}\in[0,1]
 \label{eq:s-rho}
\end{equation}
states how much oracle advantage above the target-population best fixed action
is recoverable from signal $Z$.  It does not include
$V_{\mathrm{fix}}-V_{\mathrm{frozen}}$, which reflects fixed-action selection
or split transport rather than stopped-history observability.
The primary $+2.592$-point mechanism estimate in the article is a related
event-weighted debt quantity for one prespecified action; it is not itself
$\rho_Z$.  Conversely, the cross-fitted ridge results estimate the values of
two particular learned policies.  They are empirical witnesses below the
Bayes envelope $V_Z$, not proof that $V_Z=V_{\mathrm{fix}}$ for every legal
outcome-blind learner.

For LiveCodeBench, the observed TEST values are 0.585714 for stop-only,
0.595238 for FIT-selected resampling, 0.611905 for the descriptive TEST-best
fixed reroute action, and 0.621429 for the realized maximum.  The associated
1.667- and 0.952-point differences are descriptive selected-sample
arithmetic, not estimators of the population terms in
Eq.~\eqref{eq:s-observability-decomposition}.  Because FIT has zero
action-difference support, the ladder does not identify $V_Z$ or $\rho_Z$.
The MBPP+ matched audit cannot instantiate this decomposition for Gate 1
because its implicit event is $S=\Omega$, rather than the observable stopping
event.  The exact stopped/non-stopped population accounting appears in
Section~\ref{sec:s-exchangeable-audit}.

The same value ordering extends to any finite action set by replacing the
two-action maximum in \eqref{eq:s-vfix}--\eqref{eq:s-voracle} with
$\max_{a\in\mathcal A}$.  If $\mathcal A_1\subseteq\mathcal A_2$, the absolute
fixed, signal-informed, and oracle values cannot decrease.  Their
\emph{differences}, however, need not increase.  For example, under two
equally likely values of an observed $Z$, let two actions have utilities
$(1,0)$ and $(0,1)$.  Then $V_Z=1$ and $V_{\mathrm{fix}}=1/2$.  Adding a third
action with constant utility $3/4$ leaves $V_Z=1$ but raises
$V_{\mathrm{fix}}$ to $3/4$, reducing the observable advantage from $1/2$ to
$1/4$.  This establishes why action abundance alone is not an observability
guarantee.

\smallskip\noindent\textbf{Model-pool closure.}
The action set is therefore a controlled part of the estimand, not a list that
can be extended without changing the question.  Replacing
$\mathcal A_1$ by $\mathcal A_2$ changes $V_{\mathrm{fix}}$, $V_Z$, and
$V_{\mathrm{or}}$ simultaneously; replacing the start model can additionally
change the conditioning event $S$.  A 32B or newly released model would thus
require new generation, cost binding, multiplicity control, and a separately
declared target.  It could reveal more evaluator-only opportunity while
leaving the recovered share $\rho_Z$ unchanged or smaller.  For the completed
observability ladder, $\mathcal A$ and the controller family were held fixed so
that the only intended intervention is the refinement
$Z_1\preceq Z_2\preceq\cdots$.  Larger or newer models remain a separate
scale- or generation-transfer study rather than an omitted tier of this one.

\subsection{Finite-sample action support before feature sufficiency}
\label{sec:s-support-gate}

The Bayes ordering above assumes that the conditional action values can in
principle be learned.  A finite FIT/DEV implementation has an earlier support
requirement.  Let $a_{\mathrm{fix}}$ be selected on FIT, let
$a_{\mathrm{alt}}$ denote the other action, and define the fitted target
\begin{equation}
 \Delta_j=U_j(a_{\mathrm{alt}})-U_j(a_{\mathrm{fix}})
 \in\{-1,0,1\}
 \label{eq:s-action-advantage-target}
\end{equation}
for FIT stopped-history episode $j$.

\begin{proposition}[Zero-support degeneracy of the frozen ladder]
\label{prop:s-zero-support}
For the ridge-plus-isotonic learner used in the LiveCodeBench ladder, suppose
$\Delta_j=0$ for every FIT episode.  For every ridge penalty in the frozen positive
grid, the calibrated action-advantage prediction is then zero for every input.
If deviation requires $\widehat\Delta(Z)>\tau$ with $\tau\ge0$, the fitted policy
never deviates on DEV or TEST, regardless of the feature sigma-field.  Hence
its held-out value equals the frozen fixed-action value, but this equality
does not imply $V_Z=V_{\mathrm{fix}}$.
\end{proposition}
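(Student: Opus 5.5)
The plan is to trace the zero target through each stage of the frozen pipeline in Algorithm~\ref{alg:observability-ladder}, showing that every stage maps an identically zero response to an identically zero output. Then I would separate the claim about the fitted algorithm from the claim about the population Bayes value.

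\textbf{Ridge stage.} Fix any $\lambda>0$ in the frozen grid and any tier $Z_k$. Write the (possibly standardized) design matrix with an intercept column. The ridge objective with response $\Delta=0$ is
\begin{equation}
\|0-\beta_0\mathbf 1-X\beta\|^2+\lambda\|\beta\|^2 .
\end{equation}
If the intercept is unpenalized, its normal equation gives $\beta_0=-\bar X\beta$. Substituting reduces the problem to a centered ridge problem with zero response. That problem is strictly convex because $\lambda>0$, and $\beta=0$ attains the minimum value $0$, so the unique solution is $\beta=0$ and hence $\beta_0=0$. The same argument applies on every training partition of the five query folds, because each sub-response is also zero. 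So every out-of-fold prediction and the full-FIT refit predict zero for every input, whatever the features. I would note explicitly that feature standardization, computed from FIT features alone, does not affect this.

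\textbf{Isotonic stage.} The calibrator is fit to pairs (out-of-fold prediction $0$, target $0$). Isotonic regression is the $L^2$ projection onto monotone sequences. The zero sequence is monotone and has zero loss, so the fitted step function is identically zero. The main obstacle I expect is extrapolation: at inference, the refit ridge prediction is $0$, which lies inside the fitted support, so the calibrated value is $0$ under any clipping or interpolation convention. I would state this convention dependence and check it against the frozen implementation.

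\textbf{Decision and value stage.} The rule deviates iff $\widehat\Delta(Z)>\tau$. Since $\widehat\Delta\equiv 0$ and $\tau\ge0$, the strict inequality fails, so no DEV or TEST episode deviates for any $(\lambda,\tau,k)$. The DEV selection therefore chooses among policies identical to always taking $a_{\mathrm{fix}}$, and the held-out value equals the frozen fixed-action value.

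\textbf{Non-implication.} For the final clause I would give a counterexample. Take a law on FIT in which $\Delta=0$ almost surely, or equivalently a FIT sample in which every drawn episode has $\Delta=0$ by chance. Take the target law in which $Z\in\{0,1\}$ is equiprobable and $\mathbb E[\Delta\mid Z=z,S]=\pm c$ for $c>0$, with sign depending on $z$. By the identities in Proposition~\ref{prop:s-observability-decomposition}, $V_Z-V_{\mathrm{fix}}=c/2-0>0$, while the fitted policy still never deviates. So the equality constrains the fitted algorithm, not the held-out joint law of $(\Delta,Z)$.
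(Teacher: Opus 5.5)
Your proposal is correct and follows the paper's proof essentially step for step. The zero response forces zero ridge coefficients and intercept in every fold and in the full-FIT refit, isotonic calibration of an all-zero target is identically zero, and the strict test $0>\tau$ fails for every $\tau\ge0$; your explicit two-point construction with $\mathbb E[\Delta\mid Z,S]=\pm c$, which gives $V_Z-V_{\mathrm{fix}}=c/2>0$, is a concrete version of the paper's closing remark that the argument places no restriction on the held-out joint law of $(\Delta,Z)$ and so cannot identify $V_Z$.
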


\begin{proof}
Every ridge normal equation has zero right-hand side because its response
vector is zero.  The fitted intercept and coefficients are therefore zero,
both in each FIT fold and in the final all-FIT fit.  All out-of-fold
predictions are zero, and isotonic calibration against an all-zero response
assigns value zero to its only block.  The calibrated prediction is thus zero
for any feature vector.  The strict inequality $0>\tau$ is false for every
nonnegative threshold, so the action remains $a_{\mathrm{fix}}$ on every
held-out episode.  This reasoning never restricts the joint law of $(\Delta,Z)$ in
the held-out population, and therefore cannot identify its Bayes value $V_Z$.
\end{proof}

This proposition is implementation-specific and should not be confused with
an impossibility theorem for outcome-blind signals.  A future ladder should
freeze a support-positivity gate before outcome opening.  At minimum, FIT must
contain a prespecified number of episodes with $\Delta>0$; if the scientific claim
is heterogeneous action selection rather than replacement by one dominant
action, it must also contain a prespecified number with $\Delta<0$.  These are
necessary support checks, not sufficient power or calibration guarantees.
They are a post-audit design lesson and do not retrospectively alter the
original five-layer matched-information protocol.

\section{LiveCodeBench observability-ladder execution}
\label{sec:s-ladder}

Each LiveCodeBench query contributes three candidate blocks with distinct
domain-separated deterministic seeds.  A block becomes an episode only when
one of at most three sequential
Qwen14 history attempts first passes every public test.  Private/full
correctness is evaluator only.  L1 includes structural and cost fields; L2
adds token confidence; L3 adds public-verifier traces and a FIT-calibrated
score; and L4 adds agreement summaries over the observed pre-stop history.
All tiers use the same fixed action, ridge family, isotonic calibration,
nonnegative threshold grid, and cost gate.

For stopped block $b$ of query $i$, write $Y^0_{ib}$ for stop correctness and
$Y^a_{ib}$ for second-action correctness, both visible only to the evaluator.
For $a\in\mathcal A=\{\textsc{Resample},\textsc{Reroute}\}$,
\begin{align}
 U_{ib}(a)&=\max\{Y^0_{ib},Y^a_{ib}\},
 \label{eq:s-ladder-utility}\\
 \widehat V_r(\pi)&=\frac{1}{|\mathcal I_r^+|}
 \sum_{i\in\mathcal I_r^+}\frac{1}{|\mathcal B_i|}
 \sum_{b\in\mathcal B_i}U_{ib}(\pi(Z_{ib})),
 \label{eq:s-ladder-value}
\end{align}
where $\mathcal I_r^+$ contains split-$r$ queries with at least one natural
stop.  A controller makes exactly one second-action call for each such
episode.  Stop-only is a descriptive reference, not a legal controller action;
the TEST value covers 70 eligible queries rather than all 137 TEST queries.
FIT ridge, action-isotonic, and verifier-calibration fits weight stopped
episodes equally.  Fixed-action selection and DEV/TEST value, token, latency,
and inference calculations first average within query and then weight eligible
queries equally as in \eqref{eq:s-ladder-value}.

The base \texttt{VISIBLE\_ONLY} and \texttt{OPEN\_OUTCOMES} feature artifacts
leave the L3 calibrated-verifier field null.  During controller execution the
frozen FIT calibrator injects that field before action selection; no DEV/TEST
correctness is used to fit it.  After removing each OPEN row's evaluator field,
all 155 scientific stopped-history rows match the VISIBLE rows exactly.  The
artifacts differ in their \texttt{episodes}, \texttt{mode}, and
\texttt{test\_scoring\_authority} fields, but both bind the same materializer
source closure,
\texttt{8a049d9814aa516baee7b1aeebc04454}\allowbreak
\texttt{cb817418f87606b8aaeb8b34e344d3a1}.

\begin{algorithm*}[t]
\caption{Frozen LiveCodeBench observability ladder}
\label{alg:s-observability-ladder}
\begin{algorithmic}[1]
\State choose the FIT equal-query best fixed action; break an exact tie toward
\textsc{Resample}
\For{$k=1,\ldots,4$}
  \State construct cumulative $Z_k$; standardize on FIT and append explicit
  missingness indicators
  \State set $\Delta=U(\textsc{alternative})-U(\textsc{fixed})$ on FIT episodes
  \For{$\lambda\in\{10^{-4},10^{-3},\ldots,10^4\}$}
    \State obtain five query-fold out-of-fold ridge predictions, fit isotonic
    calibration, and refit ridge on all FIT episodes
    \For{$\tau\in\{0,.01,.02,\ldots,.50\}$}
      \State on DEV choose the alternative iff $\widehat\Delta_k>\tau$; require
      total-token ratio $\le1.05$ and second-action-latency ratio $\le1.05$
    \EndFor
  \EndFor
  \State freeze by highest DEV equal-query utility, fewest deviations, largest
  $\tau$, then largest $\lambda$
\EndFor
\State open TEST once; compare each frozen tier with fixed and a deterministic
deviation-count-matched random policy
\State compute 20,000 PCG64 query-cluster replicates with seed 20260830404
(random comparison seed $+1$)
\State form one-sided studentized max-$T$ simultaneous lower bounds over L1--L4
and pass a tier only when both lower bounds are $>0$ and its cost gate passes
\end{algorithmic}
\end{algorithm*}

\begin{table*}[t]
\centering
\caption{Stopped-history support across the frozen LiveCodeBench splits.  A
``rescue'' means that at least one legal second action turns an incorrect stop
into full correctness.}
\label{tab:s-ladder-support}
\small
\scriptsize
\begin{tabularx}{\textwidth}{@{}Xrrrrrrr@{}}
\toprule
Split & Queries & Stops & Incorrect & Any rescue & $\Delta>0$ & $\Delta<0$ & $\Delta=0$ \\
\midrule
FIT & 136 & 165 & 66 & 0 & 0 & 0 & 165 \\
DEV & 68 & 89 & 41 & 5 & 0 & 4 & 85 \\
sealed TEST & 137 & 155 & 49 & 5 & 3 & 2 & 150 \\
\bottomrule
\end{tabularx}
\end{table*}

TEST has 411 candidate blocks and stopped histories in 70 queries.  Of the
155 stops, 106 are fully correct.  Among the 49 incorrect stops, 44 are
rescued by neither action, two by resampling only, three by rerouting only,
and zero by both.  ``Any rescue'' is measured relative to stop-only, whereas
$\Delta$ compares reroute with the FIT-selected resample action.  Thus DEV has five
rescue episodes but no positive alternative-over-fixed target; four favor the
fixed resample action and one is tied.  Positive deviation support appears
only in TEST.  Nevertheless,
Proposition~\ref{prop:s-zero-support} forces every frozen L1--L4 controller to
make zero deviations.

\begin{figure*}[t]
\centering
\includegraphics[width=\textwidth]{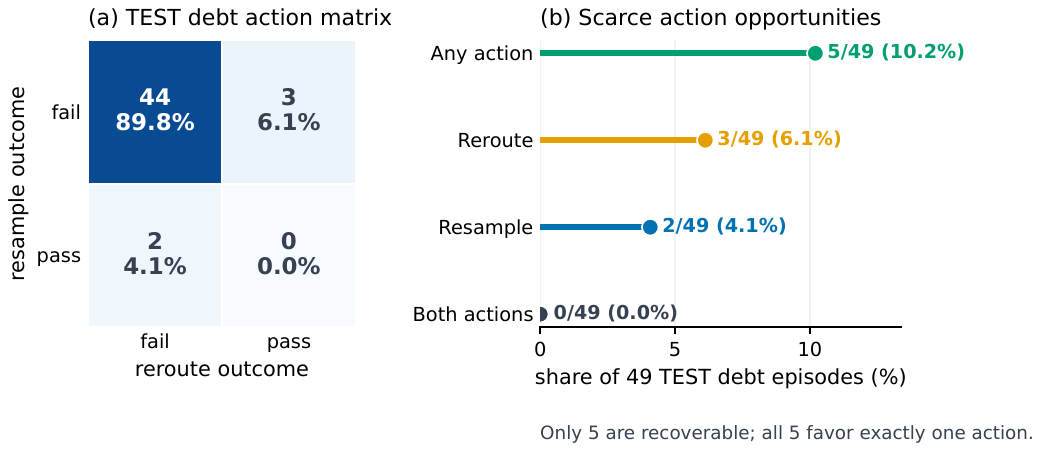}
\caption{Action outcomes within the 49 sealed TEST stopping-debt episodes.
Forty-four are rescued by neither action, three by rerouting only, two by
resampling only, and zero by both actions.  The five recoverable episodes show
finite-bank realized rescue, but their scarcity does not establish a
population rate or an observable rule for choosing the action.}
\Description{A two-by-two action-outcome matrix and a dot plot show 44
neither-action rescues, three reroute-only rescues, two resample-only rescues,
and no both-action rescues among 49 incorrect TEST stops.}
\label{fig:s-ladder-action-abundance}
\end{figure*}

Figure~\ref{fig:s-ladder-action-abundance} makes the action-specific rescue count
explicit without turning evaluator-only outcomes into controller inputs.

\begin{table*}[t]
\centering
\caption{Equal-query-weighted sealed TEST values.  The reroute row is a
descriptive sensitivity, not the frozen primary baseline.}
\label{tab:s-ladder-values}
\small
\begin{tabularx}{\textwidth}{@{}>{\raggedright\arraybackslash}Xrrr@{}}
\toprule
Rule & Value & Gain over frozen resample & Learned deviations \\
\midrule
Frozen fixed resample & 0.595238 & 0.000 points & --- \\
Fixed reroute (descriptive) & 0.611905 & $+1.667$ points & --- \\
Evaluator-only oracle & 0.621429 & $+2.619$ points & --- \\
L1, L2, L3, or L4 frozen controller & 0.595238 & 0.000 points & 0 \\
\bottomrule
\end{tabularx}
\end{table*}

\begin{figure*}[t]
\centering
\includegraphics[width=\textwidth]{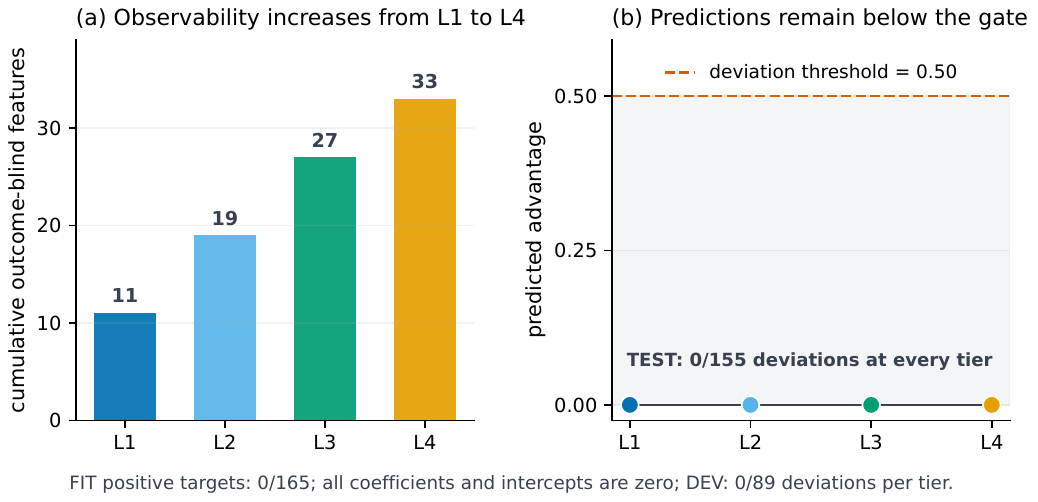}
\caption{The frozen ladder expands from 11 to 33 cumulative scientific
outcome-blind fields (22 to 66 design columns after missingness expansion),
yet every fitted action-advantage prediction is zero and
lies below the strict 0.50 deviation threshold.  This zero-deviation path is
the algebraic consequence of all 165 FIT targets being exactly zero under the frozen
ridge-plus-isotonic learner; it is not evidence that the added features are
generally uninformative.}
\Description{The left panel shows cumulative feature counts of 11, 19, 27,
and 33.  The right panel shows zero predicted advantage and zero of 155 TEST
deviations at every tier below a threshold of 0.50.}
\label{fig:s-ladder-zero-deviation}
\end{figure*}

Figure~\ref{fig:s-ladder-zero-deviation} visualizes the implementation-specific
degeneracy in Proposition~\ref{prop:s-zero-support}; it is not a comparison of
the Bayes envelopes associated with L1--L4.

The machine conclusion is
\nolinkurl{OBSERVABILITY_LADDER_NO_TESTED_SIGNAL_BEATS_FIXED}.  Its
scientific interpretation is narrower: the experiment detects finite-sample
FIT support and action-ranking transport failure and cannot compare feature
sufficiency.  The descriptive reroute-minus-resample difference is $+1.667$
points; it neither establishes reroute superiority nor licenses a post-outcome
baseline change.
The evaluator-only residual above reroute is 0.952 points.  A post hoc
calculation gives a history-plus-action generated-token ratio of 0.95073 and a
second-action-only latency ratio of 0.59001 for fixed reroute versus fixed
resample.  These descriptive ratios do not amend the utility-only FIT tie rule.
The sealed schema's historical field name
\texttt{residual\_information\_debt\_binary64\_hex} stores
$V_{\mathrm{or}}-V_{\mathrm{learned}}$; it is not an estimate of the
theoretical Bayes residual $V_{\mathrm{or}}-V_Z$ and will be renamed in a
future schema revision.

\section{BigCodeBench two-sided FIT support replication}
\label{sec:s-bcb}

\subsection{Frozen design and stopping rule}

BigCodeBench v0.1.4 is pinned by parquet revision, byte count, and SHA-256;
the official evaluator is pinned to release v0.2.4, commit
\texttt{9059fb84d1188c02edeac4995361656a2fdecbef}, and an immutable container
manifest.  Near-duplicate prompt components are assigned as units to one
pilot, FIT, DEV, or TEST, and the 483 FIT tasks are further assigned to five
folds.  For each task and each of three episodes, up to three Qwen14 history
attempts are generated.  The first attempt passing the frozen public subset
defines the stop; a stopped episode receives exactly one disjoint Qwen14
resample and one Qwen7 reroute.  The resulting banks contain 895 stopped
episodes in 351 queries and zero action draws in the other 132 FIT queries.

Let $S_{ib}$ denote the already frozen public stop event and let
$Y^{0}_{ib},Y^{\mathrm{res}}_{ib},Y^{\mathrm{rer}}_{ib}$ be one official full-suite realization
for history, resample, and reroute.  The two legal utilities are
\begin{align}
 U_{ib}(\mathrm{res})&=Y^0_{ib}\lor Y^{\mathrm{res}}_{ib},\\
 U_{ib}(\mathrm{rer})&=Y^0_{ib}\lor Y^{\mathrm{rer}}_{ib},\\
 \Delta_{ib}&=U_{ib}(\mathrm{rer})-U_{ib}(\mathrm{res})\in\{-1,0,+1\}.
 \label{eq:s-bcb-advantage}
\end{align}
Write
\begin{align}
\mathcal E&=\{(i,b):S_{ib}=1\},\qquad
\mathcal Q=\{i:\exists b\ (i,b)\in\mathcal E\},\notag\\
N_s&=\sum_{(i,b)\in\mathcal E}\mathbf 1\{\Delta_{ib}=s\},\notag\\
Q_s&=\sum_{i\in\mathcal Q}\mathbf 1\{\exists b:\Delta_{ib}=s\},\notag\\
Q_{sf}&=\sum_{i\in\mathcal Q}\mathbf 1\{g(i)=f\}\,
\mathbf 1\{\exists b:\Delta_{ib}=s\},\notag\\
&\qquad s\in\{-1,+1\},
\label{eq:s-bcb-support-counts}
\end{align}
where $g(i)\in\{0,\ldots,4\}$ is the frozen component-level fold.  For the
L3 stop-confidence calibrator, define the stop-class counts
\begin{align}
C_y^E&=\sum_{(i,b)\in\mathcal E}\mathbf 1\{Y^0_{ib}=y\},\\
C_y^Q&=\sum_{i\in\mathcal Q}
\mathbf 1\{\exists b:(i,b)\in\mathcal E,\ Y^0_{ib}=y\},
\qquad y\in\{0,1\}.
\label{eq:s-bcb-class-counts}
\end{align}
A query is sign-present if any of its stopped episodes has the sign.  With
$q=Q_s$ and $n=|\mathcal Q|$, the implementation computes the one-sided
97.5\% Clopper--Pearson lower bound
\begin{equation}
L_s=
\begin{cases}
0, & q=0,\\
\mathrm{Beta}^{-1}(0.025;q,n-q+1), & q>0,
\end{cases}
\label{eq:s-bcb-cp}
\end{equation}
by 96 deterministic bisection iterations on the exact binomial upper tail.
The two bounds have Bonferroni family confidence at least 95\%.  The complete
authorization bit is
\begin{equation}
\begin{split}
G_{\mathrm{FIT}}={}&
\mathbf 1\{|\mathcal E|\ge400,\ |\mathcal Q|\ge200\}\\
&\land\!\bigwedge_{s\in\{-1,+1\}}
\bigl(\mathbf 1\{N_s\ge25,\ Q_s\ge20\}\\
&\qquad\qquad\land\ \mathbf 1\{\min_fQ_{sf}\ge2,\ L_s>0.01\}\bigr)\\
&\land\!\bigwedge_{y\in\{0,1\}}
\mathbf 1\{C_y^E\ge25,\ C_y^Q\ge20\}.
\end{split}
\label{eq:s-bcb-gate}
\end{equation}
These are conjunctive hard gates: total sample size, prevalence, or fold
coverage cannot compensate for a failed sign count.  In particular, the
gate tests whether both selector labels are estimable before any selector is
fit; it is not a test that resampling or rerouting has zero value.

\begin{algorithm*}[t]
\caption{Outcome-blind FIT gate evaluation and authorization}
\label{alg:s-bcb-gate}
\begin{algorithmic}[1]
\Require frozen source closure, FIT generation banks, checkpoint run config,
verified authority, and preregistered minima
\State verify byte identities, read-only modes, canonical checkpoint prefix,
ordered coordinates, request continuity, and zero DEV/TEST flags
\If{any code, digest, provenance, or container invariant fails}
  \State preserve all evidence and terminate; do not reinterpret the failure
\EndIf
\For{each $(i,b)\in\mathcal E$ in frozen coordinate order}
  \State accept exactly one structured official full-suite result for each
  of history, resample, and reroute
  \State compute $U_{ib}(\mathrm{res}),U_{ib}(\mathrm{rer}),\Delta_{ib}$; record public replay and
  partition consistency only as diagnostics
  \State never retry, discard, or select an outcome from a diagnostic value
\EndFor
\State compute Eqs.~\eqref{eq:s-bcb-support-counts}--\eqref{eq:s-bcb-gate}
and freeze the gate input, receipt, provenance, and terminal hashes
\If{$G_{\mathrm{FIT}}=0$}
  \State lock L1--L4, DEV, and TEST; return the terminal support-stop verdict
\Else
  \State authorize FIT controller fitting and DEV selection only
  \State freeze the DEV-selected controller; permit one TEST opening only
  after the separate opening guard validates every binding
\EndIf
\end{algorithmic}
\end{algorithm*}

\subsection{Why partition equality requires common randomness}
\label{sec:s-bcb-randomness}

Candidate 010 originally treated the realized equality between full-suite and
separately executed public/evaluator-only outcomes as fatal.  That identity is
valid only under deterministic tests or common evaluator randomness.  Write
$Y_P(c;\omega)$ and $Y_E(c;\omega)$ for public and evaluator-only outcomes of
code $c$ under execution state $\omega$.  If the full suite is exactly their
union in the same realization, then
\begin{equation}
 Y_F(c;\omega)=Y_P(c;\omega)\wedge Y_E(c;\omega).
 \label{eq:s-bcb-common-randomness}
\end{equation}
Separate processes instead realize $\omega_P,\omega_E,\omega_F$.  Even with
the same test-source partition,
\begin{equation}
 Y_F(c;\omega_F)
 \ne Y_P(c;\omega_P)\wedge Y_E(c;\omega_E)
 \label{eq:s-bcb-independent-randomness}
\end{equation}
can occur when tests, dependencies, time, process state, or generated values
are stochastic.  Re-running a failed coordinate until
Eq.~\eqref{eq:s-bcb-common-randomness} happens would accept labels conditional
on a post-outcome event and bias the scientific sample.

The outcome-blind candidate-011 amendment therefore freezes $S_{ib}$ as the
stop event, takes exactly one $Y_F$ realization per arm as the correctness
label, and retains public replay and partition equality only as diagnostics.
No diagnostic value triggers a retry.  Code identity, task identity, source
closure, container isolation, draw digests, and checkpoint continuity remain
fail-closed.  Candidate 010 is preserved but excluded at its intact 227-job
prefix; all 895 candidate-011 jobs were rerun from zero so no
selection-conditioned checkpoint was reused.

\subsection{Terminal FIT result}

All 895 candidate-011 checkpoints are exact canonical JSON with mode 0400 and
coordinates 0--894.  Each binds the checkpoint run config, HISTORY/RESAMPLE/
REROUTE draw digests, structured worker result, outcome row, and successful
owned-container cleanup.  The final roster contains 674 correct stops and 221
incorrect stops, the latter appearing in 118 of 351 eligible queries.

\begin{table*}[t]
\centering
\caption{BigCodeBench FIT stopping debt, action abundance, and signed support.}
\label{tab:s-bcb-support}
\small
\begin{tabularx}{\textwidth}{@{}>{\raggedright\arraybackslash}Xrr@{}}
\toprule
Quantity & Episodes & Distinct queries \\
\midrule
Stopped histories & 895 & 351 \\
Incorrect stops & 221 & 118 \\
Neither action rescues & 167 & --- \\
Resample only / reroute only rescues & 22 / 23 & 19 / 19 signed support \\
Both actions rescue & 9 & --- \\
$\Delta=+1$ / $\Delta=-1$ & 23 / 22 & 19 / 19 \\
Hard minimum for each sign & 25 & 20 \\
\bottomrule
\end{tabularx}
\end{table*}

Both signs occur in all five folds: positive/negative query counts are
$(5,2),(3,4),(5,6),(3,3),(3,4)$.  Their one-sided exact lower prevalence
bounds are both 0.0329014, above 0.01.  Nevertheless $\Delta=+1$ misses the
episode/query minima by two/one and $\Delta=-1$ misses them by three/one.  The exact
verdict is
\texttt{STOP\_INSUFFICIENT\_TWO\_SIDED\_FIT\_SUPPORT}, with authority effect
\texttt{TERMINAL\_STOP\_\allowbreak DEV\_AND\_TEST\_MUST\_REMAIN\_UNOPENED}.
L1--L4 are not fitted, no DEV model selection occurs, and TEST is never opened.
The observability tiers are therefore unestimated rather than tied or null.

\subsection{Value, concentration, cost, and diagnostic robustness}

The following values are post-gate descriptions, not inputs that can revise
the support verdict.
\begin{table*}[t]
\centering
\caption{BigCodeBench FIT values and frozen action costs.  Value first averages
stopped episodes within each of 351 eligible queries.  Batch time comes from
separate action-bank runs and is descriptive.}
\label{tab:s-bcb-values-cost}
\small
\begin{tabularx}{\textwidth}{@{}>{\raggedright\arraybackslash}Xrrr@{}}
\toprule
Rule & Equal-query value & Tokens/action & Recorded seconds/action \\
\midrule
Stop only & 0.739791 & --- & --- \\
Fixed Qwen14 resample & 0.771130 & 509.705 & 9.129 \\
Fixed Qwen7 reroute & 0.776353 & 339.710 & 3.899 \\
Two-action evaluator oracle & 0.799145 & --- & --- \\
\bottomrule
\end{tabularx}
\end{table*}

Reroute exceeds resample by $11/2106=0.522$ percentage points while using
169.995 fewer generated tokens per equal-query draw.  It is also faster in the
separately recorded bank wall times.  This is a FIT-sample fixed-action
Pareto description, not authorization for static prompt routing and not a
substitute for two-sided selector support.  Episode weighting preserves the
same descriptive ordering: stop, resample, reroute, and oracle values are
0.753073, 0.787709, 0.788827, and 0.813408.

Only 36 queries carry either action-advantage sign and two carry both signs in
different episodes.  The ten largest sign-bearing query contributions contain
14 of 23 positive and 13 of 22 negative episodes.  Among eligible queries,
233 have no incorrect stop, while 50, 33, and 35 have one, two, and three
incorrect stops.  These distributions motivate query-level minima and
query-cluster inference; they do not justify counting episodes as independent.

Public-stop outcome replay is consistent in all 895 jobs.  Separately executed
suite partitions disagree in five of 2,685 completions across four jobs.  This
is a diagnostic of evaluator instability, not a retry gate or an estimated
label-error rate.  Removing those four jobs leaves 22/19 positive and 21/18
negative episodes/queries, still below the hard minima.  Thus the terminal
branch is robust to diagnostic exclusion, and both episode- and query-weighted
descriptions prefer the same fixed action.  No robustness analysis reads DEV
or TEST.

\subsection{Independent validation and reproduction bindings}

A standard-library audit independent of the protocol package rechecks the
64-file frozen source roster, 895 contiguous checkpoint names and modes,
canonical bytes, ordered coordinates, three actual bank draw digests per job,
worker-result and outcome-row digests, container audits, terminal diagnostic
counts, gate projections, and zero DEV/TEST flags.  A second validator
recomputes the exact gate receipt with the frozen candidate-011 implementation.
Both pass.  Publication-level identities are:
\begin{list}{}{\setlength{\leftmargin}{0pt}\setlength{\itemsep}{2pt}\setlength{\parsep}{0pt}\setlength{\topsep}{3pt}}
\item \textbf{Remote 64-file source roster.}
\begin{samepage}
\shabind{c7fe43b1e41b1618471d0c687ed0e487}{30469e9874c096ac44cbb4885008465f}\par
Digest of a \texttt{sha256sum}-format listing of the 64 read-only source files
retained on the execution host.  The remote tree and listing are not part of
the published package.
\end{samepage}
\item \textbf{Packaged 14-file scoring-closure roster.}
\shabind{10d1c1a4198a4ab1da6587026f029558}{1f744a8e6e524d80959f6a1e941ad8eb}
\item \textbf{Checkpoint run config}\newline\shabind{b2cc49040eb334e1b60728d5d5ddc0d}{7a96628fd6af4ab4a9d8d98eba44ec297}
\item \textbf{FIT scoring terminal}\newline\shabind{257f9d65cc9103c6bfaf30f4e058a6da}{4c65613d52c8d0870b4e3d14e2edc72c}
\item \textbf{FIT gate input}\newline\shabind{98685d71d5ca0216be225bba3c912088d}{78ee5a96d24408b810690cf93151171}
\item \textbf{FIT gate receipt}\newline\shabind{a8ead2b2d87a6c25da2b756b02a1a506}{1e72245d1a6bacfe22b80e5e9ed70f4e}
\item \textbf{FIT gate provenance}\newline\shabind{ad0a8b35bf414fcf2cf3b8619b7af356}{8ce0f9023a6c1a4ee96124ed31fd6a78}
\item \textbf{Cost projection}\newline\shabind{328aea1c2a8c585b2bc2d9516e30604d}{079a6c2c4c2411a695295ad38c979ac6}
\item \textbf{Terminal analysis}\newline\shabind{ab2c87b548ec506b58f90cb203fc1510}{bad05d889db004a3982d013d8d2030da}
\end{list}

The cost projection contains no source code or new correctness label.  It
binds every selected action draw to the terminal digest and projects only
generated tokens, bytes, and query-batch elapsed time from the frozen action
banks.  The analysis script refuses to overwrite an existing result and
recomputes every table entry above from the six canonical evidence inputs.

\section{Exact arithmetic for the descriptive cost boundary}
\label{sec:s-cost-boundary}

This section recomputes the main paper's cost sensitivity from the canonical
opened TEST feature artifact with the following SHA-256 identity:
\begin{center}
\shabind{e7ea286f4506e7efb1ba3327f5bdb716}{6901940310cc2015052c1348b93d1c5b}
\end{center}
It uses exact rational arithmetic, equal-query weighting over the 70 TEST
queries with at least one natural stop, and no randomness.  The receipt is
classified \texttt{POSTHOC\_DESCRIPTIVE\_COST\_BOUNDARY}; it is not a new
confirmatory TEST decision.

Direct recomputation gives
\begin{align}
 \bar C_H&=\frac{33259}{105},\qquad
 \bar C_A(\mathrm{res})=\frac{28337}{140},\\
 \bar C_A(\mathrm{rer})&=\frac{6189}{35},\\
 \bar L_A(\mathrm{res})&=\frac{220631209931}{70000000000}\ \mathrm{s},\\
 \bar L_A(\mathrm{rer})&=\frac{65086794021}{35000000000}\ \mathrm{s}.
\end{align}
Therefore
\begin{equation}
\begin{split}
 \bar C_T(\mathrm{res})&=\frac{218047}{420},\qquad
 \bar C_T(\mathrm{rer})=\frac{51826}{105},\\
 \frac{\bar C_T(\mathrm{rer})}{\bar C_T(\mathrm{res})}
 &=\frac{207304}{218047}=0.9507308.
\end{split}
\end{equation}
The corresponding action-only token ratio is 0.8736281.  The recorded
action-latency ratio is
\begin{equation}
 \frac{\bar L_A(\mathrm{rer})}{\bar L_A(\mathrm{res})}
 =\frac{130173588042}{220631209931}=0.5900053.
\end{equation}

Let $\lambda_T=p_T/w$, $\lambda_L=p_L/w$, and $\delta_a=q_a/w$.
Substituting the exact utility contrasts into the main paper's net-benefit
definition yields
\begin{align}
 \delta_{\mathrm{res}}+202.407143\lambda_T
 +3.151874\lambda_L &< 0.00952381,\label{eq:s-resample-stop-cost}\\
 \delta_{\mathrm{rer}}+176.828571\lambda_T
 +1.859623\lambda_L &< 0.02619048.\label{eq:s-reroute-stop-cost}
\end{align}
The first inequality characterizes RESAMPLE versus STOP; the second
characterizes REROUTE versus STOP.  Their direct comparison is
\begin{equation}
 \delta_{\mathrm{rer}}-\delta_{\mathrm{res}}
 < \frac{1}{60}+\frac{3581}{140}\lambda_T
 +\frac{90457621889}{70000000000}\lambda_L.
\end{equation}
Only a sufficiently large unmeasured rerouting-specific penalty can reverse
REROUTE's advantage on the three recorded sample coordinates.  Conversely,
because each L1--L4 policy is action-identical to fixed RESAMPLE,
$NB_{L_k}-NB_{\mathrm{res}}=-q_k\leq0$ for any nonnegative controller overhead.

These boundaries include generated output tokens only.  They exclude input
and prefill tokens, history latency, queueing, cold starts, model switching and
residency, verifier and controller execution, scorer costs, dollars, and
energy.  Oracle cost is intentionally undefined because utility ties have no
frozen physical-action or cost rule.  The receipt binds every numerator,
denominator, decimal rendering, source byte count, and source digest in
\nolinkurl{paper/figure_data/cost_break_even_receipt.json}.

\begin{figure*}[t]
\centering
\includegraphics[width=\textwidth]{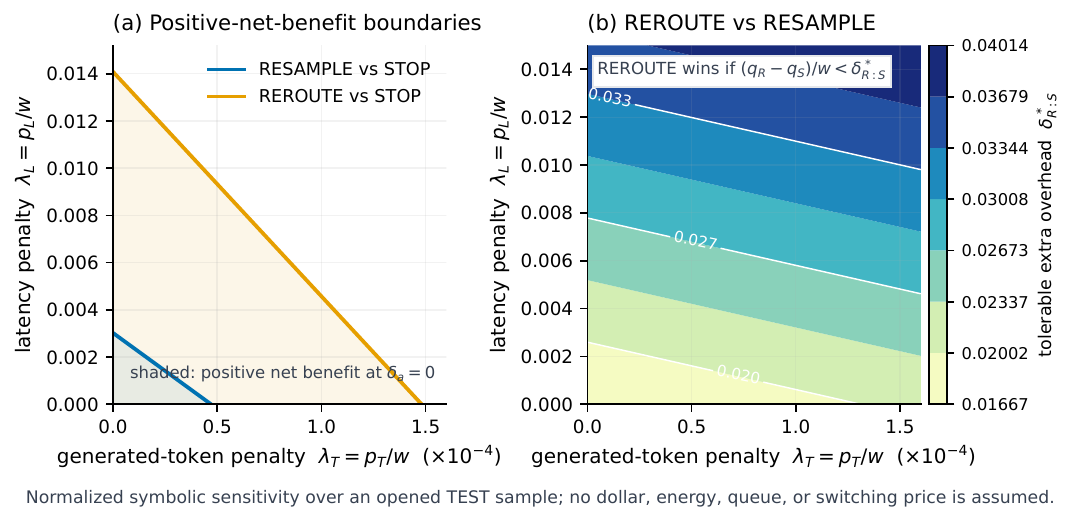}
\caption{Normalized post-opening cost sensitivity.  Panel (a) shows the
action-versus-STOP positive-net-benefit regions when unmeasured overhead is
zero.  Panel (b) shows the maximum additional rerouting-specific overhead
that preserves REROUTE's advantage over RESAMPLE.  The axes are symbolic
correctness-equivalent penalties, not observed prices.  The analysis is
conditional on the TEST queries with a natural stop and is descriptive, not a
preregistered production Pareto claim.}
\Description{Two panels show linear normalized cost boundaries.  Rerouting
has a larger positive-net-benefit region than resampling against stop-only,
and a contour surface gives the unmeasured overhead difference that would
reverse rerouting's recorded advantage over resampling.}
\label{fig:s-cost-break-even}
\end{figure*}

\section{Descriptive all-episode matched-information audit}
\label{sec:s-exchangeable-audit}

For a fixed start-model draw $(i,d)$ and legal second action $a$, write
$d^+=(d+1)\bmod D$ and
\begin{equation}
 U_{ida}=\max\{Y^s_{id},Y^a_{id^+}\}.
\end{equation}
Let $a^{\mathrm{fix}}_{-i}$ be the best fixed second action selected without
the held-out query and let $a^*_{id}=\arg\max_a U_{ida}$.  The original audit
reported
\begin{align}
 \widehat\Delta_{\mathrm{rm}}
 &=\mathbb E[U_{ida^*_{id}}-U_{ida^{\mathrm{fix}}_{-i}}],\\
 \widehat\Delta_{\mathrm{policy}}
 &=\mathbb E[U_{id\widehat\pi_{-i}(X_{id})}
              -U_{ida^{\mathrm{fix}}_{-i}}].
\end{align}
For arbitrary two-action utilities, the realized-maximum statistic satisfies
Eq.~\eqref{eq:s-general-realized-max}; it contains no stopped-history signal
and does not test conditional heterogeneity.  The exchangeable reference
calibrates the fold-specific fixed-action procedure under action-label
symmetry.

All six probes select 1,520 episodes.  The observable MBPP+ stop occurs on
1,240 episodes and the other 280 are verifier rejections.  The 198
false-positive stops are an evaluator-only subset of the observable stops.
The audit's implicit event is therefore $S=\Omega$, not the MBPP+ stopping
event.

\begin{algorithm*}[t]
\caption{Exact-fold exchangeable-actions reference}
\label{alg:s-exchangeable-null}
\begin{algorithmic}[1]
\State \textbf{Input:} two-action utility tensor $U$, recorded outer query folds
$\mathcal F$, action-cost tie break
\For{$b=1,\ldots,10^6$}
  \State independently swap the two action labels in every episode with
  probability $1/2$
  \For{each recorded outer fold}
    \State reselect the fixed action on the permuted training queries
    \State score realized maximum minus that action on the held-out queries
  \EndFor
  \State aggregate the held-out gap over all 1,520 episodes
\EndFor
\State \Return the null mean, percentile interval, and observed percentile
\end{algorithmic}
\end{algorithm*}

The exact construction contains 96 discordant episodes.  It reproduces the
reported observed statistic as $41/1{,}520=2.697368$ points.  Its analytic
exchangeable mean is $96/(2\times1{,}520)=3.157895$ points.  With seed
20260902, one million permutations give mean 3.157851 points, standard
deviation 0.448374, Monte Carlo standard error 0.000448, and 95\% interval
$[2.434211,3.947368]$.  The observed statistic is 0.460482 points below the
null mean; 18.283\% of null replicates are no larger.

The 96 discordances decompose as follows.  Correct stops contribute
$0=0+0$; false-positive stops contribute $21=14+7$; and verifier rejections
contribute $75=41+34$, where each sum is reroute-only plus resample-only.
Thus 75/96 discordances occur outside the observable stopped population.

On the descriptive restriction $Z^{14}=1$, rerouting, resampling, and the
pointwise maximum have 1,080, 1,073, and 1,087 successes.  The
realized-maximum gap is $7/1{,}240=0.564516$ points and the analytic
exchangeable mean is $21/2{,}480=0.846774$ points.

\begin{figure*}[t]
\centering
\includegraphics[width=\textwidth]{figures/fig_exchangeable_action_null.pdf}
\caption{Exchangeable calibration and the observable-stop boundary.  The
reference distribution calibrates marginal label symmetry rather than
conditional heterogeneity.  The second panel separates observable stops from
verifier rejections and reports the stopped-population diagnostic.}
\Description{A permutation distribution contains the observed 2.697-point
realized-maximum statistic; the second panel displays the audit population
boundary.}
\label{fig:s-exchangeable-null}
\end{figure*}

The structure-only controller gap remains $0.000$ points
$[-0.987,+1.053]$, and the structure-plus-token-hash gap remains $-0.197$
points $[-1.184,+0.789]$.  These valid descriptive comparisons show that
neither tested outcome-blind learner improves on the cross-fitted fixed action.
They do not establish equality of the fixed actions, absence of conditional
heterogeneity for all legal signals, or an observability bottleneck.
The frozen machine-readable verdict
\texttt{OBSERVABILITY\_BOTTLENECK} is preserved as execution provenance; the
article explicitly retracts its causal interpretation.

The post-outcome four-model equal-synthetic-cost sensitivity reported a
$+4.539$-point realized-maximum gap and negative controller gaps.  It was
constructed after outcome opening, has no exchangeable calibration in this
study, and is not used as evidence for heterogeneity, observability, or
efficiency.

\section{Deterministic cross-fitting and clustered inference}

Query IDs are ordered by a domain-separated SHA-256 hash and assigned
round-robin to five outer folds.  Inner selection uses four query folds and a
frozen ridge grid.  All ten draw positions from the same query remain in the
same outer fold, inner fold, and bootstrap cluster.  Action ties resolve first
by lower nominal cost and then by frozen model index.

For bootstrap replicate $b$ and sample position $j$, the resampled query index
is the first eight bytes of
\begin{equation}
 \mathrm{SHA256}(\mathrm{domain}\Vert\mathrm{U64BE}(b)
 \Vert\mathrm{U64BE}(j))\bmod N.
\end{equation}
Replicates are merged in increasing $b$.  The 24-worker report is result-
identical to an independent one-worker recomputation after normalizing only
the recorded worker count.

\begin{figure*}[t]
\centering
\includegraphics[width=\textwidth]{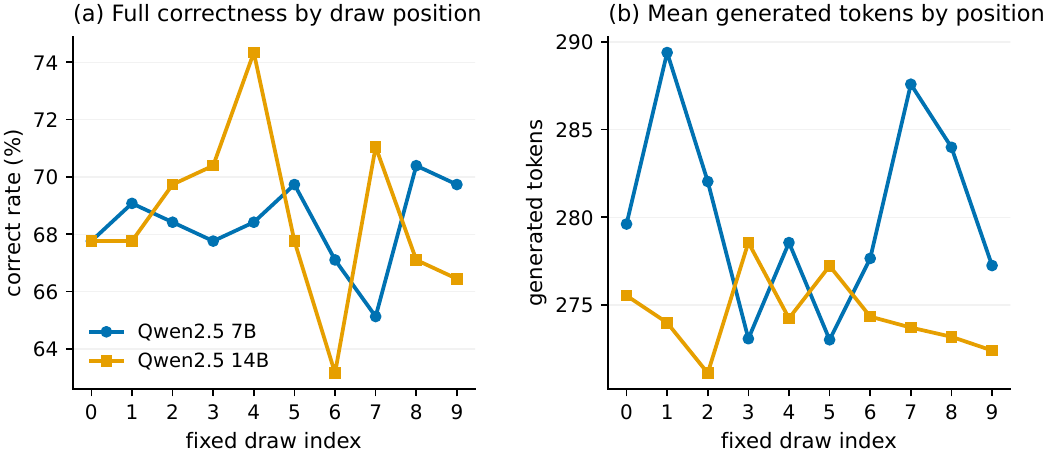}
\caption{Correctness and generated-token diagnostics across the ten fixed draw
positions.  Holm correction across the four tests rejects none; non-rejection
does not establish exchangeability.}
\Description{Two line panels show modest draw-position variation for
correctness and token counts in the two Qwen banks.}
\label{fig:s-draw}
\end{figure*}

\section{Fail-closed self-attacks}

Table~\ref{tab:s-attacks} lists the adversarial checks that were run before
the audit conclusion was accepted; every check passed.

\begin{table*}[t]
\centering
\caption{Final adversarial checks run before accepting the audit conclusion.}
\label{tab:s-attacks}
\small
\begin{tabularx}{\textwidth}{@{}p{0.28\textwidth}X p{0.10\textwidth}@{}}
\toprule
Invariant & Attack or independent check & Result \\
\midrule
Information separation & visible materializer accepts raw generation only;
selector accepts predictions and costs only & PASS \\
Canonical identity & strict schemas and byte-for-byte canonical JSON reproduction
for all run envelopes, query files, and score banks & PASS \\
Completion foreign keys & query/binding, raw-file digest, draw, seed, byte
length, and completion digest for 6,080 cells & PASS \\
Corruption response & changed completion and query-file digests rejected before
analysis & PASS \\
Draw ledger & repeated draw, future jump, unavailable head, and over-budget
action rejected & PASS \\
Closed form & \eqref{eq:s-oracle} equals exhaustive legal interleavings on
all tiny banks & PASS \\
Parallel determinism & one-worker and 24-worker 10,000-replicate reports match
after worker-field normalization & PASS \\
Targeted suite & known-answer STOP/simple/dynamic cases plus TEST adapter
attacks & 20/20 \\
\bottomrule
\end{tabularx}
\end{table*}

\section{Cross-paper data and contribution ledger}

The companion manuscript is publicly available as ``How Much of the Routing
Gap Is Real? Decomposing the Router-to-Oracle Gap into Reproducible Specialist
Advantage and Single-Draw Label Noise'' (arXiv:2607.03436) is a distinct companion manuscript.  It studies coupling, scorer
choice, and single-commit routing-oracle ceilings.  This
article instead studies a sequential verifier-gated failure event, action-
ranking support, and the conditions under which an outcome-blind controller
could be evaluated.  The relationship and shared historical inputs are disclosed.

Version 2 of this arXiv record, ``Resample or Reroute? Budget-Aware Test-Time
Model Selection for Large Language Models'' (arXiv:2607.08665v2, revised
10 July 2026), is the earlier replay-based version of this research line.  The
present version is a substantive methodological rebuild of it: it does not
import that version's learned-router,
retrospective cost-oracle, or universal Pareto claims, and it replaces them
with a hidden-truth boundary, verifier-gated estimands, fresh MBPP+ evidence,
the support-aware matched-information audit, and a separated LiveCodeBench
ladder.  The old archive remains visible only as audit/development provenance.

\begin{table*}[t]
\centering
\caption{Cross-paper separation and data-reuse ledger.}
\label{tab:s-ledger}
\small
\begin{tabularx}{\textwidth}{@{}p{0.24\textwidth}p{0.22\textwidth}X@{}}
\toprule
Asset or claim & Origin & Disposition in this article \\
\midrule
GSM8K/MATH/\allowbreak GPQA response texts & shared historical generation archive
& audit input only; IDs, scorer channels, and hashes disclosed \\
Routing-oracle decomposition & arXiv:2607.03436 companion
& background citation; not re-proved or headlined \\
Earlier RoR replay and broad policy claims & Version 2 of this record (arXiv:\allowbreak 2607.08665v2)
& prior-version provenance; unsupported claims are not carried forward \\
Fresh Qwen MBPP+ banks & this study
& confirmatory verifier-gated TEST evidence \\
Fresh Llama/Nemotron banks & post-outcome extension
& model-family robustness only; not independent replication \\
LiveCodeBench v5+v6 ladder & this study
& separated FIT/DEV/sealed-TEST support and observability diagnosis \\
BigCodeBench v0.1.4 FIT gate & this study
& preregistered support replication; DEV and TEST remain unopened \\
Verifier-gated estimands & this study
& headline sequence-specific mechanism contribution \\
Matched-information audit & this study
& descriptive all-episode realized-maximum audit with exchangeable calibration \\
Figures and principal theorems & separate manuscripts
& no figure, headline result, or principal theorem reused \\
\bottomrule
\end{tabularx}
\end{table*}

\section{Frozen artifact bindings}

The publication-level artifacts are bound by these SHA-256 identities:
\begin{list}{}{\setlength{\leftmargin}{0pt}\setlength{\itemsep}{2pt}\setlength{\parsep}{0pt}\setlength{\topsep}{3pt}}
\item \textbf{Fresh primary result}\newline\shabind{a79b39ac4b40df6bd4c6469f23f820ac}{a00a4a90b05cabc3404b8eab2b015968}
\item \textbf{Draw-index diagnostic}\newline\shabind{4c3141bc0af60581235a422ee0560f92}{88e75f8dbca7cf2942cfdcd241577d88}
\item \textbf{Model-family extension}\newline\shabind{8103ae371f624c6812445bd7072002297}{fd3c5375c09451d5864a9dcf95ac14d}
\item \textbf{Matched-information audit}\newline\shabind{c566ddd4069c2a7e364fab73b59545a8}{2528f840c4c91480350923f1092c6858}
\item \textbf{Exchangeable-null receipt}\newline\shabind{3e617fb1fb3f019b40058d055ac626a4}{c302da209d522a20e9d2532f7618fe49}
\item \textbf{Fresh final authority}\newline\shabind{4c939c360e24fb1ad5eba469e6807d46}{38fef7975e43f54756a552a0d5145ac4}
\item \textbf{Qwen7 score bank}\newline\shabind{26aa52bd41f29f77669981a0ae3bcc4e}{74b8b5a3dcc2091c36e7485863cae078}
\item \textbf{Qwen14 score bank}\newline\shabind{8038cb37326a72d1d3a20e5703f299cb}{995ea17b868a4de10ccfa95f99d1f7a7}
\item \textbf{Llama score bank}\newline\shabind{859bd0bfeae0d32a94f9ef0c0463696}{352c5faeb8e6b7af7bc0ccd359ddbafdb}
\item \textbf{Nemotron score bank}\newline\shabind{35e8f699a71529f30c6f85ee1d80e130}{1b2101a1ee0bb3458f06345320dafcba}
\item \textbf{LiveCodeBench ladder controller}\newline\shabind{41bb026d42204b2435941126991ccda4}{57ef477569d97d92cc5ce05237873a53}
\item \textbf{LiveCodeBench TEST-opening claim}\newline\shabind{3c64e09d72aa0e3f85691294d6f2030}{a6376f78895cc18bf2ace76fcbb6944c3}
\item \textbf{LiveCodeBench TEST score config}\newline\shabind{eb181e81f8c0d79d714ffb77742831cd}{6a998cc48ae509ae4c2fe6200b907151}
\item \textbf{LiveCodeBench TEST score terminal}\newline\shabind{4b17d45c6fa2dfdd19e2c6b88e72674}{33e0011acaffbaf932415a2b8a1556e95}
\item \textbf{LiveCodeBench visible-only features}\newline\shabind{8aac81dc5218bf0930f78abaf0d95f83}{8f4a5b0512b732678995cb5dbcf5ce3f}
\item \textbf{LiveCodeBench opened features}\newline\shabind{e7ea286f4506e7efb1ba3327f5bdb716}{6901940310cc2015052c1348b93d1c5b}
\item \textbf{LiveCodeBench sealed result}\newline\shabind{2711bda6f4ae6abf63cffbacf7cec5bc}{1ea3166bf4a09de467e7664250ee3e7d}
\end{list}

The matched-information report binds 6,080 completion foreign keys across
four raw-generation and score banks.  The manuscript and supplement reference
12 active figure PDFs: five are recorded in
\nolinkurl{figs/kbs_figure_manifest.json}, while the remaining seven are bound
by the observability, cost-boundary, BigCodeBench, and exchangeable-action
receipts or manifests and checked by
\nolinkurl{paper/validate_kbs_package.py}.  No single 12-figure manifest is
claimed.

\section{Historical replay audit boundary}
\label{sec:s-historical-replay}

The legacy 11-model replay over GSM8K, MATH-500, GPQA-Diamond, and HumanEval+
~\cite{gsm8k,mathdataset,lightman2024verify,gpqa,humaneval,evalplus}
is retained solely to document why broad allocator, static-routing, and
truth-guided-verifier claims were removed.  Its fixed-bank permutations are
not independent datasets, parameter-count costs are not prices, and the
historically inspected split is not confirmatory evidence.  Observable
verifiers alter trajectories, agreement can terminate on wrong answers, and
same-cap comparisons can be dominated elsewhere on the measured grid.  These
results are subtractive provenance and do not pass or fail a headline gate.

\begin{figure*}[t]
\centering
\IfFileExists{figures/fig_real_verifier_kbs.pdf}{%
  \includegraphics[width=\textwidth]{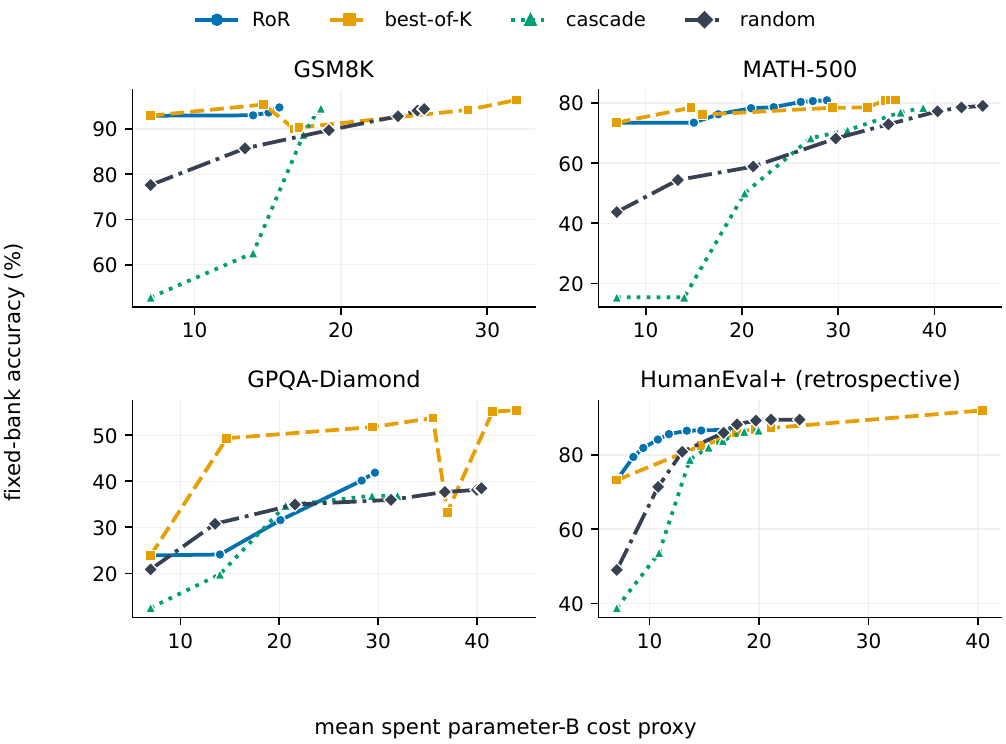}%
}{\fbox{\parbox[c][3.4cm][c]{0.92\textwidth}{\centering
Observable-verifier figure pending generated rebuild results.}}}
\caption{Policy performance under observable verifier adapters.  Unlike the
legacy $q$ interpolation, these trajectories never expose held-out hidden
correctness to allocation.  For legibility, lines connect evaluated points in
increasing realized mean cost; they are not, by themselves, Pareto frontiers.
Each panel has independently scaled cost and accuracy axes, so slopes and
distances must not be compared across benchmarks.}
\Description{Four independently scaled panels show cost versus fixed-bank
accuracy for RoR, finite-bank best-of-K, cascade, and random allocation.
Higher-cap RoR points are dominated on GSM8K and GPQA-Diamond; MATH-500 has a
small adapter-limited frontier region; HumanEval+ shows a cost--accuracy
trade-off but retains incomplete generation provenance.}
\label{fig:s-real}
\end{figure*}

\section{Interpretation checklist}

For every future table, figure, or claim derived from this work, verify:
\begin{enumerate}
\item Which frozen response archive, split, scorer channel, and model order
produced evaluator truth?
\item What information was policy visible at each decision, and what remained
evaluator only?
\item Did all compared policies use the same stopping event, target
population, action set, utility, and observable signal?
\item Were every query's draw positions kept together in training,
cross-validation, and uncertainty estimation?
\item Is the cost axis a nominal cap, spent-call proxy, generated tokens,
measured latency, or monetary price?
\item Was every pointwise realized maximum calibrated against an appropriate
exchangeable reference and kept distinct from expected action heterogeneity?
\item Is the result confirmatory, cross-fitted on an opened TEST population,
or post-outcome sensitivity analysis?
\item Does any new conclusion exceed the frozen artifact, model, verifier, and
draw-order scope?
\item Before interpreting a null fitted controller as a feature result, did
FIT contain prespecified positive and negative action-advantage support?
\end{enumerate}

\end{document}